\newcommand{\cmark}{\ding{51}}%
\newcommand{\xmark}{\ding{55}}%
\newcommand{\lambdamax}[1]{\ensuremath{\lambda_{max}}}
\newlength{\widebarargwidth}
\newlength{\widebarargheight}
\newlength{\widebarargdepth}
\DeclareMathOperator{\trace}{\textbf{tr}}
\DeclareMathOperator{\rank}{rank}
\newtheorem{theo}{Theorem}[section]
\newtheorem{lem}{Lemma}[section]
\newtheorem{prop}{Proposition}[section]
\newtheorem{cor}{Corollary}[section]
\newtheorem{remark}{Remark}[section]
\theoremstyle{definition} 
\newtheorem{nota}{Notation}[section]
\newtheorem{de}{Definition}[section]
\newtheorem{exa}{Example}[section]
\newtheorem{as}{Assumption}[section]
\newtheorem{alg}{Algorithm}[section]
\newcommand{\btheo}{\begin{theo}}
\newcommand{\bde}{\begin{de}}
\newcommand{\ble}{\begin{lem}}
\newcommand{\bpr}{\begin{prop}}
\newcommand{\bno}{\begin{nota}}
\newcommand{\bex}{\begin{exa}}
\newcommand{\bcor}{\begin{cor}}
\newcommand{\spro}{\begin{proof}}
\newcommand{\bas}{\begin{as}}
\newcommand{\balg}{\begin{alg}}
\newcommand{\bremark}{\begin{remark}}
\newcommand{\etheo}{\end{theo}}
\newcommand{\ede}{\end{de}}
\newcommand{\ele}{\end{lem}}
\newcommand{\epr}{\end{prop}}
\newcommand{\eno}{\end{nota}}
\newcommand{\eex}{\end{exa}}
\newcommand{\ecor}{\end{cor}}
\newcommand{\fpro}{\end{proof}}
\newcommand{\eas}{\end{as}}
\newcommand{\ealg}{\end{alg}}
\newcommand{\eremark}{\end{remark}}
\newcommand{\reals}{\mathbb{R}}
\theoremstyle{plain}
\newtheorem{theos}{Theorem}
\newtheorem{props}{Proposition}
\newtheorem{lems}{Lemma}
\newtheorem{cors}{Corollary}
\newtheorem{rems}{Remark}
\theoremstyle{definition}
\newtheorem{exas}{Example}
\newtheorem{algs}{Algorithm}
\newtheorem{asss}{Assumption}
\newtheorem{defns}{Definition}
\newcommand{\btheos}{\begin{theos}}
\newcommand{\etheos}{\end{theos}}
\newcommand{\brems}{\begin{rems}}
\newcommand{\erems}{\end{rems}}
\newcommand{\bprops}{\begin{props}}
\newcommand{\eprops}{\end{props}}
\newcommand{\bdes}{\begin{defns}}
\newcommand{\edes}{\end{defns}}
\newcommand{\blems}{\begin{lems}}
\newcommand{\elems}{\end{lems}}
\newcommand{\bcors}{\begin{cors}}
\newcommand{\ecors}{\end{cors}}
\newcommand{\bexs}{\begin{exas}}
\newcommand{\eexs}{\end{exas}}
\newcommand{\balgs}{\begin{algs}}
\newcommand{\ealgs}{\end{algs}}
\newcommand{\bass}{\begin{asss}}
\newcommand{\eass}{\end{asss}}
\newcommand{\bit}{\begin{itemize}}
\newcommand{\eit}{\end{itemize}}
\newcommand{\data}{{\bf X}}
\newcommand{\sample}{\vec{x}}
\newcommand{\samplescalar}{x}
\newcommand{\weightscalar}{w}
\newcommand{\weight}{\vec{w}}
\newcommand{\weightmat}{\vec{W}}
\newcommand{\planted}{\vec{w}^*}
\newcommand{\plantedmat}{\vec{W}^*}
\newcommand{\plantedsub}{w}
\newcommand{\plantednostar}{\vec{w}}
\newcommand{\tilplanted}{\tilde{\vec{w}}^*}
\newcommand{\tilplantedmat}{\tilde{\vec{W}}^*}
\newcommand{\bias}{b}
\newcommand{\dual}{\boldsymbol{\lambda}}
\newcommand{\dualmat}{\boldsymbol{\Lambda}}
\newcommand{\dualscalar}{\lambda}
\newcommand{\relu}[1]{\left( #1 \right)_+}
\newcommand{\ball}{\mathcal{B}}
\newcommand{\bnvar}{\gamma}
\newcommand{\bnvarl}[1]{\gamma^{(#1)} }
\newcommand{\bnvarvecl}[1]{\bm{\gamma}^{(#1)} }
\newcommand{\bnmean}{\alpha}
\newcommand{\bnmeanl}[1]{\alpha^{(#1)} }
\newcommand{\bnmeanvecl}[1]{\bm{\alpha}^{(#1)} }
\newcommand{\bn}[1]{{\mathrm{BN}}_{\bnvar,\bnmean}\left(#1\right)}
\newcommand{\norm}[1]{\left\|#1 \right\|_{2} }
\newcommand{\normf}[1]{\left\|#1 \right\|_{F} }
 \DeclareMathOperator*{\argmin}{\arg\!\min} 
  \DeclareMathOperator*{\argmax}{\arg\!\max} 
\newcommand{\sign}{\text{sign}}
\let\vec\mathbf
\icmltitlerunning{Revealing the Structure of Deep Neural Networks via Convex Duality}
\begin{document}
\doparttoc 
\faketableofcontents 

\twocolumn[
\icmltitle{Revealing the Structure of Deep Neural Networks via Convex Duality}




\begin{icmlauthorlist}
\icmlauthor{Tolga Ergen}{to}
\icmlauthor{Mert Pilanci}{to}
\end{icmlauthorlist}

\icmlaffiliation{to}{Department of Electrical Engineering, Stanford University, CA, USA}

\icmlcorrespondingauthor{Tolga Ergen}{ergen@stanford.edu}

\icmlkeywords{Machine Learning, ICML}

\vskip 0.3in
]



\printAffiliationsAndNotice{} 

\begin{abstract}
We study regularized deep neural networks (DNNs) and introduce a convex analytic framework to characterize the structure of the hidden layers. We show that a set of optimal hidden layer weights for a norm regularized DNN training problem can be explicitly found as the extreme points of a convex set. For the special case of deep linear networks, we prove that each optimal weight matrix aligns with the previous layers via duality. More importantly, we apply the same characterization to deep ReLU networks with whitened data and prove the same weight alignment holds. As a corollary, we also prove that norm regularized deep ReLU networks yield spline interpolation for one-dimensional datasets which was previously known only for two-layer networks. Furthermore, we provide closed-form solutions for the optimal layer weights when data is rank-one or whitened. The same analysis also applies to architectures with batch normalization even for arbitrary data. Therefore, we obtain a complete explanation for a recent empirical observation termed Neural Collapse where class means collapse to the vertices of a simplex equiangular tight frame.
\end{abstract}

\section{Introduction}

Deep neural networks (DNNs) have become extremely popular due to their success in machine learning applications. Even though DNNs are highly over-parameterized and non-convex, simple first-order algorithms, e.g., Stochastic Gradient Descent (SGD), can be used to successfully train them. Moreover, recent work has shown that highly over-parameterized networks trained with SGD obtain simple solutions that generalize well \cite{infinite_width,parhi_minimum,ergen2020aistats,ergen2020journal}, where two-layer ReLU networks with the minimum Euclidean norm solution and zero training error are proven to fit a linear spline model in 1D regression. In addition, a recent series of work \cite{pilanci2020convex,ergen2020cnn,sahiner2021vectoroutput,vikul2021generative} showed that regularized two-layer ReLU network training problems exhibit a convex loss landscape in a higher dimensional space, which was previously attributed to the benign impacts of overparameterization \cite{brutzkus_overparameterized_linear,li_overparameterized,du_overparameterized,ergen2019shallow}. Therefore, regularizing the solution towards smaller norm weights might be the key to understand the generalization properties and loss landscape of DNNs. However, analyzing DNNs is still theoretically elusive even in the absence of nonlinear activations. To this end, we study norm regularized DNNs and develop a framework based on convex duality to characterize a set of optimal solutions to the training problem. 

 Deep linear networks have been the subject of extensive theoretical analysis due to their tractability. A line of research \cite{saxe_deeplinear,arora_deeplinearconv,recht_deeplinear,du_deeplinear,shamir_convergence_deeplinear} focused on GD training dynamics, however, they lack the analysis of solution set and generalization properties of deep networks. 
  Another line of research \cite{gunasekar_implicit_mf,arora_implicit_deepmf,srebro_lowrankrecovery} studied the generalization properties via matrix factorization and showed that linear networks trained with GD converge to minimum nuclear norm solutions. 
%
Later on, \cite{arora_linear_alignment,du_relu_alignment} showed that gradient flow enforces the layer weights to align. \cite{telgrasky_deeplinear} further proved that each layer weight matrix is asymptotically rank-one. These results provide insights to characterize the structure of the optimal layer weights, however, they require multiple strong assumptions, e.g., linearly separable training data and strictly decreasing loss function, which makes the results impractical. Furthermore, \cite{zhang2019multibranch} provided some characterizations for nonstandard networks, which are valid for hinge loss with an uncommon regularization. Unlike these studies, we introduce a complete characterization for regularized deep network training problems without requiring such assumptions.

\subsection{Our contributions}
Our contributions can be summarized as follows
 \begin{itemize}[leftmargin=*]
\item We introduce a convex analytic framework that characterizes a set of optimal solutions to regularized training problems as the extreme points of a convex set.

\item For deep linear networks, we prove that each optimal layer weight matrix aligns with the previous layers via convex duality.

\item For deep ReLU networks, we obtain the same weight alignment result for whitened or rank-one data matrices. As a corollary, we achieve closed-form solutions for the optimal hidden layer weights when the data is whitened or rank-one (see Theorem \ref{theo:deeprelu} and \ref{theo:closedform_regularized_multiclass}).

\item As another corollary, we prove that the optimal regularized ReLU networks are linear spline interpolators for one-dimensional, i.e., rank-one, data which generalizes the two-layer results for one-dimensional data in \cite{infinite_width,parhi_minimum,ergen2020aistats,ergen2020journal} to arbitrary depth. 

\item We show that whitening/rank-one assumptions can be removed by placing batch normalization in between layers (see Theorem \ref{theo:deep_vector_closedform}). Hence, our results explain a recent empirical observation, termed Neural Collapse \cite{papyan2020neuralcollapse}, where class means collapse to the vertices of a simplex equiangular tight frame (see Corollary \ref{cor:neural_collapse}).

 \end{itemize}
\renewcommand{\arraystretch}{1.0}
  \begin{figure*}[t!] 
   \centering
    \includegraphics[width=0.25\textwidth]{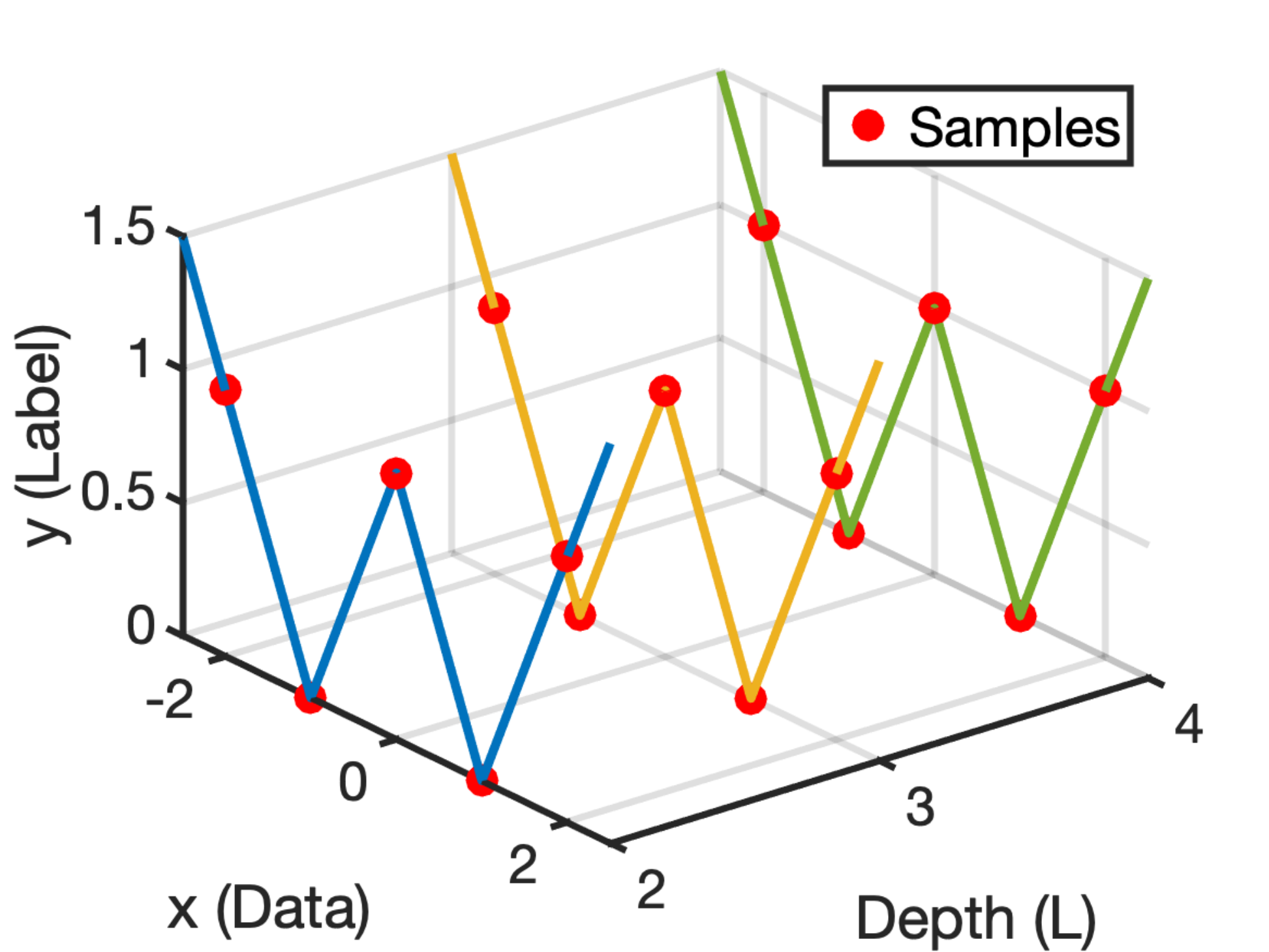} 
    \qquad
  \resizebox{1.4\columnwidth}{0.25\columnwidth}{
  \begin{tabular}[b]{|c|c|c|c|c|}
    \hline
    & \textbf{Width} ($m$) & \textbf{Assumption}& \textbf{Depth} ($L$) & \textbf{\# of outputs} ($K$) \\  \hline
   {\cite{infinite_width}}& $\infty$ & \text{1D data ($d=1$)}& 2 & \xmark \ ($K=1$) \\ \hline
   {\cite{parhi_minimum}} & $\infty$ & \text{1D data ($d=1$)} & 2 & \xmark \ ($K=1$) \\ \hline
   {\cite{ergen2020aistats,ergen2020journal}} &  finite &$ \text{rank-one/whitened} $    & 2&   \cmark \ ($K\geq 1$)  \\ \hline
    \textbf{Our results} & finite & \begin{tabular}{@{}c@{}}\text{rank-one/whitened } \\  or BatchNorm \end{tabular} & $L\geq 2$ & \cmark \  ($K\geq 1$) \\
    \hline
  \end{tabular}
  } 
  \vspace{0.1in}
    \captionlistentry[table]{One dimensional interpolation using $L$-layer ReLU networks with $20$ neurons in each hidden layer. As predicted by Corollary \ref{cor:multilayer_rankone_kinks}, the optimal solution is given by piecewise linear splines for any $L\ge 2$.Comparison with previous studies for the spline interpolation characterization}\label{tab:comparison}
    \captionsetup{labelformat=andtable}
    \caption{One dimensional interpolation using $L$-layer ReLU networks with $20$ neurons in each hidden layer. As predicted by Corollary \ref{cor:multilayer_rankone_kinks}, the optimal solution is given by piecewise linear splines for any $L\ge 2$. Additionally, we provide a comparison with previous studies about this characterization.} \label{fig:deeprelu_function}
  \end{figure*}

\subsection{Overview of our results}

\textbf{Notation:} We denote matrices/vectors as uppercase/lowercase bold letters. We use $\vec{0}_k$ (or $\vec{1}_k$) and $\vec{I}_k$ to denote a vector of zeros (or ones) and the identity matrix of size $ k \times k$, respectively. We denote the set of integers from $1$ to $n$ as $[n]$. To denote Frobenius, operator, and nuclear norms, we use $\|\cdot \|_{F}$, $\|\cdot\|_2$, and $\| \cdot\|_{*}$, respectively. We also use $\trace$ to denote the trace of a matrix. Furthermore, $\sigma_{max}(\cdot)$ and $\sigma_{min}(\cdot)$ represent the maximum and minimum singular values, respectively and the unit $\ell_2$-ball $\ball_2$ is defined as $\ball_2=\{\vec{u} \in \mathbb{R}^d\,\vert\,\|\vec{u}\|_2\le 1\}$. We also provide further explanations about our notation in Table \ref{tab:notations} in Appendix.

We consider an $L$-layer network with layer weights $\weightmat_{l,j} \in \mathbb{R}^{m_{l-1} \times m_l}$ and $\weight_L \in \mathbb{R}^m$, $\forall l \in [L],\, \forall j \in [m]$, where $m_0=d$ and $m_{L-1}=1$, respectively. Then, given a data matrix $\data \in \mathbb{R}^{n \times d}$, the output is 
$    f_{\theta,L}(\data)=\vec{A}_{L-1}\weight_L,\;\vec{A}_{l,j}=g(\vec{A}_{l-1,j}\weightmat_{l,j})\;\forall l\in [L-1],   
$
where $\vec{A}_{0,j}=\data$, $\vec{A}_{L-1} \in \mathbb{R}^{n \times m}$, and $g(\cdot)$ is the activation function. Given labels $\vec{y} \in \mathbb{R}^n$, the training problem is as follows
\begin{align}
    \min_{\{\theta_l\}_{l=1}^L} \mathcal{L}( f_{\theta,L}(\data),\vec{y})+\beta  \mathcal{R}(\theta) \,, \label{eq:problem_statement}
\end{align}
where $\mathcal{L}(\cdot,\cdot)$ is an arbitrary  loss function, $\mathcal{R}(\theta)$ is regularization for the layer weights, $\beta>0$ is a regularization parameter, $\theta_l=\{\{\weightmat_{l,j}\}_{j=1}^m,m_l\}$, and $\theta=\{\theta_l\}_{l=1}^L$. 
 In the paper, for the sake of presentation simplicity, we illustrate the conventional training setup with squared loss and $\ell_2^2$-norm regularization. However, our analysis is valid for arbitrary convex loss functions as proven in Appendix \ref{sec:supp_general_loss}. 
Thus, we consider the following optimization problem
\begin{align}
    P^*=& \min_{\{\theta_l\}_{l=1}^L} \mathcal{L}( f_{\theta,L}(\data),\vec{y})+ \frac{\beta}{2}\sum_{j=1}^m\sum_{l=1}^L \|\weightmat_{l,j}\|_F^2 . \label{eq:problem_def_overview}
\end{align}
Next, we show that the minimum $\ell_2^2$-norm is equivalent to minimum $\ell_1$-norm after a rescaling.

\begin{restatable}{lem}{equivalenceoverview} \label{lemma:scaling_deep_overview}
The following problems are equivalent :
\begin{align*}
\begin{split}
      &\min_{\{\theta_l\}_{l=1}^L}  \mathcal{L}( f_{\theta,L}(\data),\vec{y})+\frac{\beta}{2}\sum_{j=1}^m\sum_{l=1}^L \|\weightmat_{l,j}\|_F^2 
\end{split}\\
     \begin{split}
      =   &\min_{\{\theta_l\}_{l=1}^L, \{t_j\}_{j=1}^m} \mathcal{L}( f_{\theta,L}(\data),\vec{y})+ \beta \|\weight_L\|_1 \\
         &\hspace{4cm}+ \frac{\beta}{2}(L-2)\sum_{j-1}^m t_j^2 \\&\text{ s.t. } \weight_{L-1,j} \in \ball_2, \|\weightmat_{l,j} \|_F \leq t_j, \; \forall l \in [L-2]   
\end{split}.
\end{align*}
\end{restatable}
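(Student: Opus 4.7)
The plan is to establish equality of the two optimal values by producing, for each feasible point of one problem, a feasible point of the other with no larger objective. The only analytical ingredient is the positive $1$-homogeneity of the activation $g$ (true for ReLU and for the identity): for any positive scalars $\{\alpha_{l,j}\}_{l=1}^{L-1}$, the substitution $\weightmat_{l,j} \mapsto \alpha_{l,j} \weightmat_{l,j}$ together with $[\weight_L]_j \mapsto [\weight_L]_j / \prod_{l=1}^{L-1} \alpha_{l,j}$ leaves $f_{\theta,L}(\data)$, and hence $\mathcal{L}(f_{\theta,L}(\data),\vec{y})$, unchanged. This degree of freedom lets me redistribute magnitude freely across the layers of any given neuron.

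\textbf{Direction $P_2^\star \leq P_1^\star$.} Starting from any $\theta$ feasible for P1, I set $a_{l,j} := \|\weightmat_{l,j}\|_F$ for $l \in [L-2]$, $b_j := \|\weight_{L-1,j}\|_2$, $w_j := [\weight_L]_j$, and introduce the rescaling-invariant quantity $P_j := |w_j|\, b_j \prod_{l=1}^{L-2} a_{l,j}$. I then pick $t_j := P_j^{1/L}$ and rescale with $\alpha_{l,j} = t_j/a_{l,j}$ for $l \in [L-2]$ and $\alpha_{L-1,j} = 1/b_j$ so that the new weights satisfy $\|\weightmat'_{l,j}\|_F = t_j$, $\|\weight'_{L-1,j}\|_2 = 1$, and $|[\weight'_L]_j| = P_j / t_j^{L-2} = P_j^{2/L}$, i.e., they are P2-feasible with per-neuron cost $\beta|[\weight'_L]_j| + \tfrac{\beta(L-2)}{2}\, t_j^2 = \tfrac{\beta L}{2}\, P_j^{2/L}$. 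An application of AM--GM to the $L$ nonnegative numbers $w_j^2, b_j^2, \{a_{l,j}^2\}_{l \in [L-2]}$, whose product equals $P_j^2$, yields $\tfrac{\beta}{2}(w_j^2 + b_j^2 + \sum_l a_{l,j}^2) \geq \tfrac{\beta L}{2}\, P_j^{2/L}$, so the constructed P2 cost is at most the original P1 cost.

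\textbf{Direction $P_1^\star \leq P_2^\star$.} Given $(\theta, t)$ feasible for P2, I leave layers $1,\ldots,L-2$ untouched and rescale only the last two layers of each active neuron: $\weight'_{L-1,j} := \sqrt{|w_j|}\,\weight_{L-1,j}$ and $[\weight'_L]_j := [\weight_L]_j/\sqrt{|w_j|}$; neurons with $w_j = 0$ are simply zeroed out. Homogeneity preserves the loss. Using $\|\weight_{L-1,j}\|_2 \leq 1$, I get $\|\weight'_{L-1,j}\|_2^2 \leq |w_j|$ and $[\weight'_L]_j^2 = |w_j|$, hence $\tfrac{1}{2}(\|\weight'_{L-1,j}\|_2^2 + [\weight'_L]_j^2) \leq |w_j|$; combining with $\|\weightmat_{l,j}\|_F^2 \leq t_j^2$ for $l \in [L-2]$ bounds the resulting P1 regularization by $\beta\|\weight_L\|_1 + \tfrac{\beta(L-2)}{2}\sum_j t_j^2$, which is exactly the P2 regularization.

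\textbf{Main obstacle.} The fussy part is the bookkeeping of the rescaling in the forward direction—specifying the $\alpha_{l,j}$'s so that all three target norms are simultaneously met—and a careful treatment of degenerate neurons in which some $a_{l,j}$, $b_j$, or $w_j$ vanishes (such neurons contribute nothing and can be dropped from both objectives). Once those edge cases are absorbed, each direction reduces to a single inequality: AM--GM in the forward step and the elementary estimate $2xy \leq x^2 + y^2$ in the reverse step.
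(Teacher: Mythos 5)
Your proof is correct, and it reaches the conclusion by a cleaner, more self-contained route than the paper. The paper's argument proceeds in stages: it applies the two-layer rescaling trick (the inequality $2|w_{L,j}|\,\|\weight_{L-1,j}\|_2 \le w_{L,j}^2 + \|\weight_{L-1,j}\|_2^2$) only to the last two layers to produce the $\beta\|\weight_L\|_1$ term, argues separately that the unit-norm equality on $\weight_{L-1,j}$ can be relaxed to $\weight_{L-1,j}\in\ball_2$ because the constraint must be active at any optimum with $w_{L,j}\neq 0$, and then handles the first $L-2$ layers by an epigraph substitution that silently appeals to Proposition 3.1 (equality of the Frobenius norms across those layers at optimality) to justify using a single $t_j$ per neuron. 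You instead prove the two inequalities $P_2^\star\le P_1^\star$ and $P_1^\star\le P_2^\star$ directly: the forward direction balances all $L$ factors of a neuron at once via an $L$-term AM--GM on $w_j^2, b_j^2, a_{1,j}^2,\dots,a_{L-2,j}^2$, which absorbs the content of Proposition 3.1 into the argument rather than citing it, and the reverse direction is the elementary two-term estimate. What your approach buys is rigor and economy: the paper never explicitly verifies the direction $P_1^\star\le P_2^\star$ (that a feasible point of the reformulated problem yields one of the original problem with no larger cost), whereas your reverse step supplies exactly that, and you avoid the active-constraint digression entirely since the inequality constraint $\weight_{L-1,j}\in\ball_2$ causes no loss in your reverse construction. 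What the paper's staging buys is modularity---the same two-layer rescaling lemma is reused verbatim throughout the appendix---and an explicit record of the equal-norm structure of optima, which the paper needs later as a standalone fact. Your treatment of degenerate neurons (any vanishing factor forces $P_j=0$ and the neuron can be zeroed out on both sides) is the right way to close the remaining gap; the paper does not address this case at all.
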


Using Lemma \ref{lemma:scaling_deep_overview}\footnote{The proof is presented in Appendix \ref{sec:supp_equivalence}.}, we first take the dual with respect to the output layer weights $\weight_L$ and then change the order of min-max to achieve the following dual as a lower bound \footnote{For the definitions and details see Appendix \ref{sec:supp_general_loss} and \ref{sec:supp_dual_derivations}.}
\begin{align}\label{eq:dual_deeplinear_overview}
        P^*\geq &D^* =\min_{\{t_j\}_{j=1}^m}\max_{\vec{\dual}} - \mathcal{L}^*(\dual) + \frac{\beta}{2}(L-2)\sum_{j-1}^m t_j^2 \nonumber \\
        &\text{ s.t. } \max_{\substack{ \weight_{L-1,j} \in \ball_2 \\\|\weightmat_{l,j} \|_F \leq t_j}}\| \vec{A}_{L-1,j}^T \vec{\dual}\|_{\infty}\leq \beta\,.
\end{align}
To the best of our knowledge, the dual DNN characterization \eqref{eq:dual_deeplinear_overview} is novel. Using this result, we first characterize a set of weights that minimize the objective via the optimality conditions and active constraints in \eqref{eq:dual_deeplinear_overview}. We then prove the optimality of these weights by proving strong duality, i.e., $P^*=D^*$, for DNNs. We then show that, for deep linear networks, optimal weight matrices align with the previous layers. 

More importantly, the same analysis and conclusions also apply to deep ReLU networks when the input is whitened and/or rank-one. Here, we even obtain closed-form solutions for the optimal layer weights. As a corollary, we show that deep ReLU networks fit a linear spline interpolation when the input is one-dimensional. We also provide an experiment in Figure \ref{fig:deeprelu_function} to verify this claim. Note that this result was previously known only for two-layer networks \cite{infinite_width,parhi_minimum,ergen2020aistats,ergen2020journal} and here we extend it to arbitrary depth $L$ (see Table \ref{tab:comparison} for details). We also show that the whitened/rank-one
assumption can be removed by introducing batch normalization in between layers, which reflects the training setup in practice.

\section{Warmup: Two-layer linear networks} 
As a warmup, we first consider the simple case of two-layer linear networks with the output $f_{\theta,2}(\data)=\data \weightmat_1 \weight_2$ and the parameters as
$\theta \in \Theta= \{(\weightmat_1,\weight_2,m)\,\vert\, \weightmat_1 \in \mathbb{R}^{d \times m},  \weight_2 \in \mathbb{R}^m, m \in \mathbb{Z}_+\}$. Motivated by recent results \cite{neyshabur_reg,infinite_width,parhi_minimum,ergen2020aistats,ergen2020journal}, we first focus on a minimum norm\footnote{This corresponds to weak regularization, i.e., $\beta\rightarrow 0$ in \eqref{eq:problem_statement} (see e.g. \cite{margin_theory_tengyu}.). } variant of \eqref{eq:problem_statement} with squared loss, which can be written as
\begin{align}
    & \min_{\theta \in \Theta}  \|\weightmat_1\|_F^2 + \| \weight_2\|_2^2\; \text{ s.t. } f_{\theta,2}(\data)=\vec{y}. \label{eq:problem_def1}
\end{align}
Using Lemma \ref{lemma:scaling}\footnote{All the equivalence lemmas are presented in Appendix \ref{sec:supp_equivalence}.}, we equivalently have
\begin{align}
    P^*=\min_{\theta \in \Theta} \| \weight_2 \|_1 \;\text{ s.t. } f_{\theta,2}(\data)=\vec{y},  \weight_{1,j} \in \ball_2, \forall j,\label{eq:problem_def_linear}
\end{align}
which has the following dual form.

\begin{restatable}{theo}{theotwolayerequalitydual} \label{theo:twolayer_equality_dual}
The dual of the problem in \eqref{eq:problem_def_linear} is given by 
\begin{align}
    &P^*\ge D^*=\max_{\dual\in \reals^n} \dual^T \vec{y} \;\text{ s.t. } \max_{\weight_1 \in \ball_2}\big\vert\dual^T \data \weight_1 \big \vert \le 1\, . \label{eq:twolayer_equality_dual}
\end{align}
For \eqref{eq:problem_def_linear}, $\exists m^*\leq n+1$ such that strong duality holds, i.e., $P^*=D^*$, $\forall m \geq m^*$ and $\weightmat_1^*$ satisfies
%
$    \|(\data \weightmat_1^*)^T \dual^* \|_{\infty} = 1\,,
$
where $\dual^*$ is the dual optimal parameter.
\end{restatable}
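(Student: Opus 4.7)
My plan is to obtain the dual \eqref{eq:twolayer_equality_dual} by standard Lagrangian duality, then establish strong duality and the width bound $m^*\le n+1$ by lifting the bilinear primal to a convex program over signed measures on $\ball_2$ and applying Carath\'eodory's theorem, and finally read off the alignment identity from complementary slackness.

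For the dual derivation, I form the Lagrangian for \eqref{eq:problem_def_linear},
\begin{align*}
\mathcal{L}(\weightmat_1,\weight_2,\dual)
&= \|\weight_2\|_1 + \dual^T\bigl(\vec{y}-\data\weightmat_1\weight_2\bigr) \\
&= \dual^T\vec{y} + \sum_{j=1}^m\bigl(|\weight_{2,j}|-\weight_{2,j}\,\dual^T\data\weight_{1,j}\bigr),
\end{align*}
and swap $\min$--$\max$ to obtain the weak-duality bound $P^*\ge D^*$. For fixed $\dual$ and $\weight_{1,j}$, the inner minimization over $\weight_{2,j}\in\reals$ returns $0$ when $|\dual^T\data\weight_{1,j}|\le 1$ and $-\infty$ otherwise; the inner minimization over $\weight_{1,j}\in\ball_2$ then enforces the semi-infinite constraint $\max_{\weight_1\in\ball_2}|\dual^T\data\weight_1|\le 1$, producing \eqref{eq:twolayer_equality_dual}.

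For strong duality and the width bound, I lift the primal to the convex measure program
\[
P_\infty^* = \min_{\mu\in\mathcal{M}(\ball_2)}\|\mu\|_{\mathrm{TV}}
\quad\text{s.t.}\quad
\int_{\ball_2}\data\weight\,d\mu(\weight)=\vec{y},
\]
over signed Radon measures on the compact set $\ball_2$. A direct Fenchel--Rockafellar computation shows this infinite-dimensional LP has exactly the dual \eqref{eq:twolayer_equality_dual}, and strong duality $P_\infty^*=D^*$ follows from standard semi-infinite LP theory, since the primal is feasible whenever $\vec{y}\in\mathrm{range}(\data)$ and the dual feasible set is closed with bounded value. Because the equality constraint is $\reals^n$-valued, Carath\'eodory's theorem applied to an optimal $\mu^*$ contracts it to a discrete measure supported on at most $n+1$ atoms. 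Encoding these atoms as the columns $\weight_{1,j}^*$ of $\weightmat_1^*$ and their signed masses as $\weight_{2,j}^*$ produces a finite-width primal-feasible solution attaining $P^*\le P_\infty^*=D^*$, and combined with weak duality this yields $P^*=D^*$ for every $m\ge m^*$.

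The alignment identity then follows from complementary slackness at the optimal triple $(\weightmat_1^*,\weight_2^*,\dual^*)$: since $|\weight_{2,j}^*|-\weight_{2,j}^*\,\dual^{*T}\data\weight_{1,j}^*=0$ at optimality, every active atom with $\weight_{2,j}^*\ne 0$ must satisfy $|\dual^{*T}\data\weight_{1,j}^*|=1$, which combined with the dual-feasibility bound $\max_{\weight_1\in\ball_2}|\dual^{*T}\data\weight_1|\le 1$ gives $\|(\data\weightmat_1^*)^T\dual^*\|_\infty=1$. The main obstacle is justifying the $\min$--$\max$ swap, since the primal is bilinear and nonconvex in $(\weightmat_1,\weight_2)$ so Sion-type minimax theorems do not apply directly; the lifting to signed measures resolves this by convexifying in $\mu$, after which Carath\'eodory collapses the infinite-dimensional optimum to a finite width of at most $n+1$.
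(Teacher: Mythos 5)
Your proposal is correct and follows essentially the same route as the paper: Lagrangian duality with a min--max exchange to obtain \eqref{eq:twolayer_equality_dual}, a lift to a total-variation-minimization problem over signed Radon measures on $\ball_2$ whose strong duality with \eqref{eq:twolayer_equality_dual} is established by semi-infinite LP theory, Carath\'eodory's theorem to contract the optimal measure to at most $n+1$ atoms and hence a finite width $m^*\le n+1$, and the primal--dual equality (equivalently, complementary slackness on the active atoms) to conclude $\|(\data\weightmat_1^*)^T\dual^*\|_\infty=1$. The only cosmetic difference is that the paper verifies the strong-duality hypotheses more explicitly via the Farkas--Minkowski property of the constraint system, where you appeal to standard semi-infinite LP theory.
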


Using Theorem \ref{theo:twolayer_equality_dual}, we now characterize the optimal neurons as the extreme points of a convex set.
\begin{restatable}{cor}{cortwolayerextremeoptimality} \label{cor:twolayer_extreme_optimality}
By Theorem \ref{theo:twolayer_equality_dual}, the optimal neurons are extreme points which solve
%
$    \argmax_{\weight_1 \in \ball_2 } |{\dual^*}^T \data\weight_1\,|.
$
\end{restatable}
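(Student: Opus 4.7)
The plan is to invoke the strong duality result of Theorem \ref{theo:twolayer_equality_dual} together with the activity identity $\|(\data \weightmat_1^*)^T \dual^*\|_\infty = 1$ to force every ``used'' column of $\weightmat_1^*$ into the argmax of the linear functional $|\dual^{*T}\data \weight_1|$ over $\ball_2$, and then to observe that such maximizers are extreme points of the Euclidean unit ball.

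Concretely, I would first chain the two characterizations. By the primal feasibility $f_{\theta^*,2}(\data) = \data\weightmat_1^*\weight_2^* = \vec{y}$ and strong duality $P^* = D^* = \dual^{*T}\vec{y}$, I can write
\begin{equation*}
\|\weight_2^*\|_1 \;=\; P^* \;=\; \dual^{*T}\data \weightmat_1^* \weight_2^* \;=\; \sum_{j=1}^{m} (\weight_2^*)_j \,\dual^{*T}\data \weight_{1,j}^*.
\end{equation*}
On the other hand, the dual feasibility constraint from \eqref{eq:twolayer_equality_dual} gives $|\dual^{*T}\data \weight_{1,j}^*| \le 1$ for each $j$ since $\weight_{1,j}^* \in \ball_2$. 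Combining these, a standard Hölder-type inequality yields
\begin{equation*}
\sum_j (\weight_2^*)_j\, \dual^{*T}\data \weight_{1,j}^* \;\le\; \sum_j |(\weight_2^*)_j|\,\bigl|\dual^{*T}\data \weight_{1,j}^*\bigr| \;\le\; \|\weight_2^*\|_1,
\end{equation*}
so equality must hold throughout. The second inequality then implies that for every index $j$ with $(\weight_2^*)_j \neq 0$, one has $|\dual^{*T}\data \weight_{1,j}^*| = 1$, which is exactly the value of $\max_{\weight_1 \in \ball_2}|\dual^{*T}\data\weight_1|$ by dual feasibility. Hence each used neuron lies in $\argmax_{\weight_1 \in \ball_2} |\dual^{*T}\data\weight_1|$.

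To finish, I would note that the objective $|\dual^{*T}\data\weight_1|$ is a (signed) linear functional, and the maximum of a linear functional over a strictly convex body is attained at extreme points of that body. For $\ball_2$ the extreme points are precisely the points on the unit sphere, so each used $\weight_{1,j}^*$ is an extreme point of $\ball_2$; in fact it must equal $\pm \data^T\dual^*/\|\data^T\dual^*\|_2$ whenever $\data^T\dual^* \neq 0$. Unused columns (those with $(\weight_2^*)_j = 0$) can be taken to be extreme points without loss of generality since they do not affect the predictions or the objective.

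The only subtle step is the complementary-slackness argument that turns strong duality into pointwise activity of the inequality $|\dual^{*T}\data\weight_{1,j}^*| \le 1$ on the support of $\weight_2^*$; this is where the two conclusions of Theorem \ref{theo:twolayer_equality_dual} must be used jointly, and it is the key idea that carries over verbatim to the deeper ReLU settings later in the paper.
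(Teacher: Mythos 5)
Your proposal is correct and follows essentially the same route as the paper: the paper's joint proof of Theorem \ref{theo:twolayer_equality_dual} and Corollary \ref{cor:twolayer_extreme_optimality} also chains primal feasibility with strong duality to get ${\dual^*}^T \vec{y} = {\dual^*}^T \data \weightmat_1^* \weight_2^* = \|\weight_2^*\|_1$ and then reads off that the active neurons must saturate the dual constraint $\|(\data\weightmat_1^*)^T\dual^*\|_\infty = 1$. Your explicit H\"older/complementary-slackness step and the observation that the maximizer is $\pm\data^T\dual^*/\|\data^T\dual^*\|_2$ merely spell out what the paper states more tersely.
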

\begin{defns}
We call the maximizers of the constraint in Corollary \ref{cor:twolayer_extreme_optimality} \textit{extreme points} throughout the paper.
\end{defns}

From Theorem \ref{theo:twolayer_equality_dual}, we have the following dual problem 
\begin{align}
        &\max_{\dual} \dual^T \vec{y} \;\mbox{ s.t. } \max_{\weight_1 \in \ball_2}|\dual^T \data \weight_1 | \leq 1. \label{eq:dual_linear}
\end{align}
Let $\data=\vec{U}_x\vec{\Sigma}_x\vec{V}_x^T$ be the singular value decomposition (SVD) of $\data$\footnote{In this paper, we use full SVD unless otherwise stated.}. If we assume that there exists $\planted$ such that $\data \planted=\vec{y}$ due to Proposition \ref{prop:planted_model_twolayer}, then \eqref{eq:dual_linear} is equivalent to
\begin{align}
            &\max_{\tilde{\dual}} \tilde{\dual}^T \vec{\Sigma}_x \tilplanted \; \mbox{ s.t. } \|\vec{\Sigma}_x^T\tilde{\dual} \|_2 \leq 1, \label{eq:dual_linearv2}
\end{align}
where $\tilde{\dual}=\vec{U}_x^T \dual$, $\tilplanted=\vec{V}_x^T \planted$, and we changed the constraint since the extreme point is achieved when $\weight_1=\data^T\dual/\|\data^T \dual\|_2$. Given $\rank(\data)=r$, we have
\begin{align}\label{eq:w_rs}
    \tilde{\dual}^T \vec{\Sigma}_x \tilplanted&= \tilde{\dual}^T \vec{\Sigma}_x \underbrace{\begin{bmatrix} \vec{I}_r & \vec{0}_{r\times d-r}\\ \vec{0}_{d-r \times r} & \vec{0}_{d-r \times d-r}\end{bmatrix}\tilplanted}_{\planted_r} \nonumber \\
    &\leq \|  \vec{\Sigma}_x^T \tilde{\dual}\|_2 \| \tilplanted_r \|_2\leq\| \tilplanted_r \|_2,
\end{align}
which shows that the maximum objective value is achieved when $ \vec{\Sigma}_x^T \tilde{\dual}=c_1 \tilplanted_r$. Thus, we have
\begin{align*}
    \weight_1^*=\frac{\vec{V}_x\vec{\Sigma}_x^T\tilde{\dual}}{\|\vec{V}_x\vec{\Sigma}_x^T\tilde{\dual}\|_2}=\frac{\vec{V}_x\tilplanted_r}{\|\tilplanted_r\|_2}=\frac{\mathcal{P}_{\data^T}(\planted)}{\| \mathcal{P}_{\data^T}(\planted)\|_2},
\end{align*}
 where $\mathcal{P}_{\data^T}(\cdot)$ projects its input onto the range of $\data^T$. In the sequel, we first show that one can consider a planted model without loss of generality and then prove strong duality for \eqref{eq:problem_def_linear}.
\begin{restatable}{prop}{propplantedmodeltwolayer}[\cite{du_deeplinear}]\label{prop:planted_model_twolayer}
Given $\weight^*=\argmin_{\weight}\|\data\weight-\vec{y}\|_2$, we have
\begin{align*}
    \argmin_{\weightmat_1,\weight_2}\| \data \weightmat_1 \weight_2-\data \weight^*\|_2^2=\argmin_{\weightmat_1,\weight_2}\| \data \weightmat_1 \weight_2-\vec{y}\|_2^2.
\end{align*}
\end{restatable}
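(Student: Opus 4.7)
The plan is to use the Pythagorean decomposition induced by the least-squares optimality of $\weight^*$. By definition $\weight^* = \argmin_{\weight} \|\data\weight - \vec{y}\|_2$ satisfies the normal equations $\data^T(\data\weight^* - \vec{y}) = \vec{0}$, so the residual $\vec{r} \defn \vec{y} - \data \weight^*$ lies in the orthogonal complement of the range of $\data$. I would begin the proof by recording this fact explicitly.

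Next, for any pair $(\weightmat_1, \weight_2)$ the network output $\data \weightmat_1 \weight_2$ lies in the range of $\data$. Therefore $\data \weightmat_1 \weight_2 - \data \weight^*$ also lies in $\range(\data)$ and is orthogonal to $\vec{r}$. The key step is then the expansion
\begin{align*}
\|\data \weightmat_1 \weight_2 - \vec{y}\|_2^2
&= \|(\data \weightmat_1 \weight_2 - \data\weight^*) - \vec{r}\|_2^2 \\
&= \|\data \weightmat_1 \weight_2 - \data\weight^*\|_2^2 + \|\vec{r}\|_2^2,
\end{align*}
where the cross term vanishes by the orthogonality observation above.

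Since $\|\vec{r}\|_2^2$ does not depend on $(\weightmat_1, \weight_2)$, the two objectives in the proposition differ only by an additive constant. Consequently they are minimized over $(\weightmat_1, \weight_2)$ by exactly the same set of parameters, which yields the claimed equality of argmin sets. There is no substantive obstacle here: the argument is simply the Pythagorean identity applied to the orthogonal decomposition $\vec{y} = \data\weight^* + \vec{r}$, and the only care needed is in noting that the identity transfers from the vector-valued residual to any element of $\range(\data)$, in particular to the two-layer linear network output $\data\weightmat_1\weight_2$.
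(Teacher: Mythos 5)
Your proof is correct and follows essentially the same route as the paper's: both invoke the normal equations $\data^T(\data\weight^* - \vec{y}) = \vec{0}$ and then expand the squared norm around $\data\weight^*$ so that the cross term vanishes, leaving the two objectives differing by the constant $\|\vec{y} - \data\weight^*\|_2^2$. The only cosmetic difference is that you phrase the vanishing cross term as orthogonality of the residual to $\range(\data)$ applied directly to $\data\weightmat_1\weight_2$, whereas the paper carries out the expansion for a generic vector $\data\plantednostar$ first; the substance is identical.
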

\begin{restatable}{theo}{theostrongdualitytwolayer}\label{theo:strong_duality_twolayer}
Let $\{ \data,\vec{y}\}$ be feasible for \eqref{eq:problem_def_linear}, then strong duality holds for finite width networks.
\end{restatable}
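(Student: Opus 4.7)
The plan is to exhibit an explicit primal-feasible solution whose objective value matches the dual optimum $D^*$, which combined with weak duality from Theorem \ref{theo:twolayer_equality_dual} yields $P^* = D^*$. Since $\{\data,\vec{y}\}$ is feasible for \eqref{eq:problem_def_linear}, we have $\vec{y} \in \range(\data)$; by Proposition \ref{prop:planted_model_twolayer} I may assume without loss of generality that $\vec{y} = \data\planted$ for some $\planted \in \reals^d$.

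First, I would use the derivation leading to \eqref{eq:w_rs} to nail down $D^*$ exactly. The two Cauchy--Schwarz steps there show $\tilde{\dual}^T \vec{\Sigma}_x \tilplanted \le \|\tilplanted_r\|_2$, and equality is attained by choosing $\tilde{\dual}$ so that $\vec{\Sigma}_x^T \tilde{\dual} = \tilplanted_r/\|\tilplanted_r\|_2$. Such a $\tilde{\dual}$ exists because $\tilplanted_r$ lies in the range of $\vec{\Sigma}_x^T$ by construction. Therefore $D^* = \|\tilplanted_r\|_2 = \|\mathcal{P}_{\data^T}(\planted)\|_2$, and the associated extreme point is $\weight_1^* = \mathcal{P}_{\data^T}(\planted)/\|\mathcal{P}_{\data^T}(\planted)\|_2 \in \ball_2$.

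Next, I would construct a width-one primal feasible point. Take $m=1$, $\weightmat_1 = \weight_1^*$, and $\weight_2 = \|\mathcal{P}_{\data^T}(\planted)\|_2 \in \reals$. Because $\planted - \mathcal{P}_{\data^T}(\planted)$ lies in $\ker(\data)$, we have
\[
\data \weight_1^* \weight_2 \;=\; \data\,\mathcal{P}_{\data^T}(\planted) \;=\; \data\planted \;=\; \vec{y},
\]
so the equality constraint is satisfied and $\weight_1^* \in \ball_2$. The primal objective at this point is $\|\weight_2\|_1 = \|\mathcal{P}_{\data^T}(\planted)\|_2 = D^*$, which shows $P^* \le D^*$. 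Together with $P^* \ge D^*$ this gives $P^*=D^*$, so strong duality holds with $m^* = 1 \le n+1$; monotonicity of the feasible set in $m$ extends this to all $m \ge m^*$.

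The only real subtlety is verifying that the primal-optimal neuron produced by this construction actually saturates the dual constraint $\max_{\weight_1\in\ball_2}|{\dual^*}^T\data\weight_1|\le 1$, so that it qualifies as an extreme point in the sense of Corollary \ref{cor:twolayer_extreme_optimality}; this drops out of the equality case in \eqref{eq:w_rs}, since the chosen $\tilde{\dual}$ satisfies $\vec{\Sigma}_x^T\tilde{\dual} \parallel \tilplanted_r$ with unit norm, which is precisely the condition for $\weight_1^*$ to achieve $|{\dual^*}^T\data\weight_1^*| = 1$.
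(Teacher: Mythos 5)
Your proof is correct, but it takes a genuinely different route from the paper's. The paper never computes $D^*$ in closed form: it introduces the restricted dual $D_e^*$ and restricted primal $P_e^*$ obtained by fixing $\weightmat_1$ to the single extreme point $\weight_e$, argues $P^*\ge D^*=D_e^*$ (the extreme point is the binding constraint in the dual) and $P_e^*\ge P^*$, and then closes the chain by invoking strong duality for the restricted problem, $P_e^*=D_e^*$, which is a finite-dimensional $\ell_1$ minimization under linear equality constraints. You instead produce an explicit primal--dual certificate: you evaluate $D^*=\|\tilplanted_r\|_2=\|\mathcal{P}_{\data^T}(\planted)\|_2$ from the Cauchy--Schwarz chain in \eqref{eq:w_rs} (the attaining $\tilde{\dual}$ exists because $\tilplanted_r\in\range(\vec{\Sigma}_x^T)$), exhibit the width-one feasible point $\weightmat_1=\mathcal{P}_{\data^T}(\planted)/\|\mathcal{P}_{\data^T}(\planted)\|_2$, $\weight_2=\|\mathcal{P}_{\data^T}(\planted)\|_2$ whose objective equals $D^*$, and conclude from weak duality. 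Your version is more elementary and self-contained --- it avoids appealing to duality for the restricted convex program and additionally delivers the optimal value and an optimal network explicitly --- whereas the paper's sandwich argument is the reusable template it applies verbatim to the deep, vector-output, and ReLU cases where the dual optimum is not available in closed form. Two minor points: your construction implicitly assumes $\mathcal{P}_{\data^T}(\planted)\neq\vec{0}$ (if it vanishes then $\vec{y}=\vec{0}$ and $P^*=D^*=0$ trivially), and your final paragraph about the neuron saturating the dual constraint is a consistency check rather than a step the certificate argument actually needs.
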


\subsection{Regularized training problem}\label{sec:regularized}
In this section, we define the regularized version of \eqref{eq:problem_def_linear} as
\begin{align}
    \min_{\theta \in \Theta} \frac{1}{2}\| f_{\theta,2}(\data) -\vec{y} \|_2^2+\beta \| \weight_2 \|_1 \;\text{ s.t. }  \weight_{1,j} \in \ball_2, \label{eq:problem_defreg_linear}
\end{align}
which has the following dual form
\begin{align*}
        &\max_{\dual} - \frac{1}{2}\| \dual- \vec{y}\|_2^2 +\frac{1}{2}\|\vec{y}\|_2^2 \; \mbox{ s.t. } \max_{\weight_1 \in \ball_2}|\dual^T \data \weight_1 | \leq \beta. 
\end{align*}
Then, an optimal neuron needs to satisfy the condition
\begin{align*}
    \weight_1^*=      \frac{ \data^T \mathcal{P}_{\data,\beta}(\vec{y})}{\|\data^T \mathcal{P}_{\data,\beta}(\vec{y}) \|_2}
\end{align*}
where $\mathcal{P}_{\data,\beta}(\cdot) \text{ projects to }\{\vec{u}\in \mathbb{R}^n \;| \;\| \data^T\vec{u}\|_2 \leq \beta\}$. We now prove strong duality.
\begin{restatable}{theo}{theostrongdualitytwolayerregularized}\label{theo:strong_duality_twolayer_regularized}
Strong duality holds for \eqref{eq:problem_defreg_linear} with finite width.
\end{restatable}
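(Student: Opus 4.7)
The plan is to mirror Theorem \ref{theo:strong_duality_twolayer}, now with a smooth quadratic loss in place of the equality constraint $f_{\theta,2}(\data)=\vec{y}$. First I would derive weak duality $P^* \geq D^*$ by Fenchel conjugation of the squared loss: rewriting $\tfrac{1}{2}\|\data\weightmat_1\weight_2 - \vec{y}\|_2^2$ as a pointwise supremum over $\dual \in \reals^n$, swapping sup and min, and minimizing the inner problem over $\weight_2 \in \reals^m$ produces the constraint $\max_{\weight_1 \in \ball_2}|\dual^T\data\weight_1| \leq \beta$ (otherwise the positively homogeneous inner objective is $-\infty$), while the quadratic in $\dual$ collapses to the stated $-\tfrac{1}{2}\|\dual - \vec{y}\|_2^2 + \tfrac{1}{2}\|\vec{y}\|_2^2$. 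This is exactly the same semi-infinite LP step used for Theorem \ref{theo:twolayer_equality_dual}, but with an extra smooth term.

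For the reverse inequality, I would lift the finite-width network to an infinite-width representation by encoding the pairs $(\weight_{1,j}, w_{2,j})$ as a signed Radon measure $\mu$ on $\ball_2$, whose total variation $\|\mu\|_{TV}$ recovers the $\ell_1$ regularizer on $\weight_2$. The lifted primal is convex in $\mu$ over any weak-$*$ compact ball $\{\mu : \|\mu\|_{TV} \leq R\}$, with $R$ fixed a priori from any feasible point, and the evaluation map $\mu \mapsto \int \data\weight\, d\mu(\weight)$ is weak-$*$ continuous because $\weight \mapsto \data\weight$ is continuous on the compact ball $\ball_2$. These are exactly the hypotheses of Sion's minimax theorem, which gives strong duality of the lifted problem against the same $D^*$. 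Given any dual optimum $\dual^*$, complementary slackness forces the optimal $\mu^*$ to be supported on the active set $\{\weight_1 \in \ball_2 : |{\dual^*}^T \data\weight_1| = \beta\}$, i.e., on extreme points in the sense of Corollary \ref{cor:twolayer_extreme_optimality}.

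To return to finite width, I would apply a Caratheodory argument: the residual $\int \data\weight\, d\mu^*(\weight) - \vec{y}$ lives in $\reals^n$ and the regularizer contributes only one additional scalar $\|\mu^*\|_{TV}$, so $\mu^*$ can be replaced by a discrete measure supported on at most $n+1$ atoms in $\ball_2$ that preserves both the residual and the total variation, hence the full primal value. Unpacking these atoms as a two-layer network yields a feasible point of \eqref{eq:problem_defreg_linear} with $m \leq n+1$ neurons attaining $D^*$, giving $P^* = D^*$ for any $m \geq n+1$. The main obstacle is the min-max exchange; Sion's theorem requires weak-$*$ compactness of the TV ball (via Banach--Alaoglu, using compactness of $\ball_2$) and weak-$*$ lower semicontinuity of $\tfrac{1}{2}\|\cdot - \vec{y}\|_2^2$ composed with the continuous evaluation map, after which the finite-atom recovery and the passage from the lifted optimum back to $(\weightmat_1^*, \weight_2^*)$ follow the same template as in the proof of Theorem \ref{theo:strong_duality_twolayer}.
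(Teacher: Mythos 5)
Your proposal is correct in outline but takes a genuinely different route from the paper. The paper's proof of Theorem \ref{theo:strong_duality_twolayer_regularized} is a two-line reduction to the proof of Theorem \ref{theo:strong_duality_twolayer}: because the activation is linear, there is essentially a single extreme point $\weight_e = \data^T\mathcal{P}_{\data,\beta}(\vec{y})/\|\data^T\mathcal{P}_{\data,\beta}(\vec{y})\|_2$ (up to sign), so one fixes $\weightmat_1=\weight_e$, observes that the restricted primal in $\weight_2$ alone is a convex $\ell_1$-regularized least-squares problem for which strong duality is classical ($P_e^*=D_e^*$), and closes the chain $P_e^*\geq P^*\geq D^* = D_e^* = P_e^*$, where $D^*=D_e^*$ follows because only the constraint at the extreme point can be active. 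You instead run the full measure-theoretic argument: lift to signed Radon measures on $\ball_2$ with TV regularization, exchange min and max via Sion's theorem on a weak-$*$ compact TV ball, and reduce back to at most $n+1$ atoms by Carath\'eodory. This is essentially the machinery the paper deploys once, in the proof of Theorem \ref{theo:twolayer_equality_dual}, and it is sound; its advantage is that it never needs the explicit closed form of the extreme point and so generalizes to other activations and losses, whereas the paper's argument is more elementary but specific to the linear case. One step you should tighten: after restricting to the ball $\{\mu:\|\mu\|_{TV}\leq R\}$, the inner minimization in the max--min order yields an $R$-penalized dual rather than the hard-constrained $D^*$, so concluding $P_{\mathrm{lifted}}=D^*$ requires either an exact-penalty argument for $R$ large enough or the Farkas--Minkowski/discretizability results from semi-infinite programming that the paper cites; as written you only obtain $P_{\mathrm{lifted}}\geq D^*$ from that exchange. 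The complementary-slackness remark about the support of $\mu^*$ is not needed for the strong-duality conclusion itself.
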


\subsection{Training problem with vector outputs}
Here, our model is $f_{\theta,2}(\data)=\data \weightmat_1 \weightmat_2$ to estimate $\vec{Y} \in \mathbb{R}^{n \times K}$, which can be optimized as follows
\begin{align}\label{eq:problem_def_linear_vector1}
        \min_{\theta \in \Theta}  \|\weightmat_1\|_F^2+\|\weightmat_2\|_F^2 \; \text{ s.t. } f_{\theta,2}(\data)=\vec{Y}.
\end{align}
Using Lemma \ref{lemma:scaling_vector}, we reformulate \eqref{eq:problem_def_linear_vector1} as 
\begin{align}
    \min_{\theta \in \Theta} \sum_{j=1}^m\| \weight_{2,j} \|_2 \;\text{ s.t. } f_{\theta,2}(\data)=\vec{Y},  \weight_{1,j} \in \ball_2,\label{eq:problem_def_linear_vector}
\end{align}
which has the following dual with respect to $\weightmat_2$
\begin{align}
        \hspace{-0.2cm}\max_{\dualmat} \trace(\dualmat^T \vec{Y} ) \;\mbox{ s.t. } \|\dualmat^T \data \weight_1 \|_2 \leq 1, \; \forall \weight_1 \in \ball_2. \label{eq:dual_linear_vector}
\end{align}
Since we can assume $\vec{Y}=\data \plantedmat$ due to Proposition \ref{prop:planted_model_twolayer},
\begin{align}
\label{eq:vectoroutput_obj_upperbound}
   \trace(\dualmat^T \vec{Y})&=\trace(\dualmat^T \data \plantedmat)= \trace(\dualmat \vec{U}_x \vec{\Sigma}_x \tilplantedmat_r) \nonumber \\&\leq \sigma_{max}(\dualmat^T \vec{U}_x \vec{\Sigma}_x ) \left\|\tilplantedmat_r\right\|_{*} \leq \| \tilplantedmat_r\|_{*}
\end{align}
where $\sigma_{max}(\dualmat^T \data)\leq 1$ due to \eqref{eq:dual_linear_vector} and
$
   \tilplantedmat_r=\begin{bmatrix} \vec{I}_r & \vec{0}_{r\times d-r}\\ \vec{0}_{d-r \times r} & \vec{0}_{d-r \times d-r}\end{bmatrix}\vec{V}_x^T\plantedmat .
$
Given the SVD of $\tilplantedmat_r$, i.e., $\vec{U}_{\plantedsub} \vec{\Sigma}_{\plantedsub} \vec{V}_{\plantedsub}^T$, choosing
\begin{align*}
  \dualmat^T \vec{U}_x \vec{\Sigma}_x=\vec{V}_{\plantedsub} \begin{bmatrix} \vec{I}_{r_{\plantedsub}} & \vec{0}_{r_{\plantedsub}\times d-r_{\plantedsub}}\\ \vec{0}_{K-r_{\plantedsub}\times r_{\plantedsub}} & \vec{0}_{K-r_{\plantedsub} \times d-r_{\plantedsub}}\end{bmatrix}\vec{U}_{\plantedsub}^T  
\end{align*}
achieves the upper-bound above, where $r_{\plantedsub}=\rank(\tilplantedmat_r)$. Thus, optimal neurons are a subset of the first $r_{\plantedsub}$ right singular vectors of $\dualmat^T \data$. We next prove strong duality.

\begin{restatable}{theo}{theostrongdualitylinearvector}\label{theo:strong_duality_linear_vector}
Let $\{ \data,\vec{Y}\}$ be feasible for \eqref{eq:problem_def_linear_vector}, then strong duality holds for finite width networks.
\end{restatable}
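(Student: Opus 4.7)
The approach parallels the scalar-output case (Theorem \ref{theo:strong_duality_twolayer}): the chain of inequalities in \eqref{eq:vectoroutput_obj_upperbound} becomes an equality for the dual variable displayed right after it, so $D^* = \|\tilplantedmat_r\|_*$, and it suffices to exhibit a finite-width primal feasible solution whose objective equals this same value; weak duality will then upgrade to strong duality. Feasibility of $\{\data,\vec{Y}\}$ for \eqref{eq:problem_def_linear_vector} means every column of $\vec{Y}$ lies in the range of $\data$, so we can write $\vec{Y} = \data\plantedmat$ for some $\plantedmat \in \reals^{d\times K}$, matching the setup used to derive the dual.

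The plan is to set $\plantedmat_r := \vec{V}_x \tilplantedmat_r$, the orthogonal projection of $\plantedmat$ onto the range of $\data^T$, so that $\plantedmat - \plantedmat_r \in \ker(\data)$ and hence $\data\plantedmat_r = \data\plantedmat = \vec{Y}$. Taking the compact SVD $\plantedmat_r = \vec{U}_p \vec{\Sigma}_p \vec{V}_p^T$ of rank $r_\plantedsub = \rank(\tilplantedmat_r)$, I choose network width $m = r_\plantedsub$ and set $\weightmat_1 = \vec{U}_p$, $\weightmat_2 = \vec{\Sigma}_p \vec{V}_p^T$. Because $\vec{U}_p$ has orthonormal columns, the constraint $\weight_{1,j}\in\ball_2$ is satisfied for every $j$, and $\data\weightmat_1\weightmat_2 = \data\plantedmat_r = \vec{Y}$, so this candidate is primal feasible.

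The $j$-th row of $\weightmat_2$ equals $\sigma_j(\plantedmat_r)\vec{v}_{p,j}^T$, so the primal cost evaluates to
\begin{align*}
\sum_{j=1}^{r_\plantedsub}\|\weight_{2,j}\|_2 \;=\; \sum_{j=1}^{r_\plantedsub}\sigma_j(\plantedmat_r) \;=\; \|\plantedmat_r\|_* \;=\; \|\tilplantedmat_r\|_*,
\end{align*}
where the last equality uses unitary invariance of the nuclear norm together with $\plantedmat_r = \vec{V}_x \tilplantedmat_r$. Combined with weak duality $P^*\ge D^*$ this forces $P^* = D^* = \|\tilplantedmat_r\|_*$, and the required width $m = r_\plantedsub \le \min(d,K)$ is finite. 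The one nontrivial step is recognizing that the SVD of the \emph{projected} planted matrix $\plantedmat_r$ (rather than of $\plantedmat$ itself) produces a factorization that simultaneously (i) satisfies $\data\weightmat_1\weightmat_2=\vec{Y}$ by living in the range of $\data^T$, (ii) places unit-norm columns into the first factor, and (iii) concentrates all singular values into the second factor so that the mixed $\ell_1/\ell_2$ group penalty collapses to the nuclear norm; all remaining verifications are routine.
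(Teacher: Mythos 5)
Your proof is correct, and it reaches the conclusion by a somewhat more explicit route than the paper. The paper's own argument restricts the hidden layer to the matrix $\weightmat_e$ of extreme points, observes that the resulting width-$r_\plantedsub$ problem is a convex group-norm minimization with linear equality constraints (hence $P_e^* = D_e^*$), and closes the sandwich $P_e^* \ge P^* \ge D^* = D_e^* = P_e^*$ by pointing back to the proof of Theorem \ref{theo:strong_duality_twolayer}; it never evaluates either optimal value. You instead compute both values explicitly: you take the dual-optimal $\dualmat$ already exhibited in the text to get $D^* = \|\tilplantedmat_r\|_{*}$, and you certify $P^* \le \|\tilplantedmat_r\|_{*}$ by writing down the concrete factorization $\weightmat_1 = \vec{U}_p$, $\weightmat_2 = \vec{\Sigma}_p\vec{V}_p^T$ of the projected planted matrix $\plantedmat_r = \vec{V}_x\tilplantedmat_r$, whose group penalty collapses to the nuclear norm by unitary invariance. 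In substance your primal candidate is the same object the paper identifies (the columns of $\vec{U}_p$ are exactly the extreme points, i.e., the maximal right singular vectors of $\dualmat^{*T}\data$), but your certification of its optimality is by direct matching of primal and dual objective values rather than by invoking strong duality of the restricted convex program. What your version buys is self-containedness and the optimal value $P^* = D^* = \|\tilplantedmat_r\|_{*}$ as an explicit byproduct, together with the explicit width bound $m^* = r_\plantedsub \le \min(d,K)$; what the paper's version buys is that it transfers verbatim to settings where the extreme points are known only implicitly and the optimal value has no closed form.
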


\subsubsection{Regularized case}
Here, we define the regularized version of \eqref{eq:problem_def_linear_vector} as follows
\begin{align*}
    \min_{\theta \in \Theta} \frac{1}{2}\| f_{\theta,2}(\data) -\vec{Y}\|_F^2 +\beta \sum_{j=1}^m\| \weight_{2,j} \|_2\;\text{ s.t. }  \weight_{1,j} \in \ball_2,
\end{align*}
which has the following dual with respect to $\weightmat_2$
\begin{align*}
        &\max_{\dualmat} - \frac{1}{2}\| \dualmat- \vec{Y}\|_F^2 +\frac{1}{2}\|\vec{Y}\|_F^2 \; \mbox{ s.t. } \sigma_{max}(\dualmat^T \data) \leq \beta. 
\end{align*}
Then, the optimal neurons are a subset of the maximal right singular vectors of $\mathcal{P}_{\data,\beta}(\vec{Y})^T \data$, where $\mathcal{P}_{\data,\beta}(\cdot)$ projects its input to the set $\{\vec{U}\in \mathbb{R}^{n\times K}\;|\; \sigma_{max}( \vec{U}^T \data ) \leq \beta\}$.

\bremark \label{rem:twolayer_rank}
Note that the optimal neurons are the right singular vectors of $\mathcal{P}_{\data,\beta}(\vec{Y})^T \data$ that achieve $\|\mathcal{P}_{\data,\beta}(\vec{Y})^T \data \weight_1^*\|_2=\beta$, where $\|\weight_1^*\|_2=1$. This implies that $\|\vec{Y}^T \data \weight_1^*\|_2 \geq \beta$, therefore, the number of optimal neurons and $\text{rank}(\weightmat_1^*)$ are determined by $\beta$.
\eremark

\bremark\label{rem:twolayer_nonunique}
The right singular vectors of $\mathcal{P}_{\data,\beta}(\vec{Y})^T \data$ are not the only solutions. Consider $\vec{u}_1$ and $\vec{u}_2$ as the optimal right singular vectors. Then, $\vec{u}=\alpha_1 \vec{u}_1+\alpha_2 \vec{u}_2$ with $\alpha_1^2+\alpha_2^2=1$ also achieves the upper-bound, thus, optimal.
\eremark

\section{Deep linear networks\footnote{Since the derivations are similar, we present the details in Appendix \ref{sec:proofs_deep_linear}. }}
We now consider an $L$-layer linear network with the output function $f_{\theta,L}(\data)=\sum_{j=1}^m\data\weightmat_{1,j} \ldots \weightscalar_{L,j}$, and the training problem
\begin{align}\label{eq:problemdef_deeplinear}
    \min_{\{\theta_l\}_{l=1}^L} \frac{1}{2}\sum_{j=1}^m\sum_{l=1}^L\| \weightmat_{l,j} \|_F^2 \; \text{ s.t. } f_{\theta,L}(\data)=\vec{y}.
\end{align}
\begin{restatable}{prop}{propdeepweightnormeq}\label{prop:deep_weightnorm_eq}
First $L-2$ layer weight matrices in \eqref{eq:problemdef_deeplinear} have the same operator and Frobenius norms, i.e., $t_j=\|\weightmat_{l,j}\|_F=\|\weightmat_{l,j}\|_2, \forall l \in [L-2]$, $\forall j \in [m]$.
\end{restatable}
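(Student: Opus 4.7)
The plan is a per-neuron reduction. Since the output $f_{\theta,L}(\data)=\sum_{j=1}^m \data\,\weightmat_{1,j}\cdots\weightmat_{L-1,j}(\weight_L)_j$ depends on branch $j$'s parameters only through the vector $p_j := \weightmat_{1,j}\cdots\weightmat_{L-1,j}(\weight_L)_j \in \reals^d$, at any optimum of \eqref{eq:problemdef_deeplinear} each branch must solve the factorization subproblem: minimize $\sum_{l=1}^{L-1}\|\weightmat_{l,j}\|_F^2 + (\weight_L)_j^2$ subject to producing $p_j$. Writing $a_l := \|\weightmat_{l,j}\|_F$ for $l\in [L-1]$ and $a_L := |(\weight_L)_j|$, I would chain two standard inequalities. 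Sub-multiplicativity of the operator norm together with $\|\cdot\|_2\le\|\cdot\|_F$ gives
\begin{equation*}
\|p_j\|_2 \;\le\; \Big(\prod_{l=1}^{L-1}\|\weightmat_{l,j}\|_2\Big)\,|(\weight_L)_j| \;\le\; \prod_{l=1}^{L} a_l,
\end{equation*}
while AM--GM on the $L$ positive reals $a_1,\ldots,a_L$ yields $\sum_{l=1}^{L}a_l^2 \ge L\,(\prod_l a_l^2)^{1/L} \ge L\,\|p_j\|_2^{2/L}$.

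Next I would verify this composite lower bound is attained by an explicit rank-one, norm-balanced factorization. Writing $p_j = \|p_j\|_2\,v$ with $\|v\|_2 = 1$ and setting $t := \|p_j\|_2^{1/L}$, the choice $\weightmat_{1,j}=t\,v\,e_1^{\top}$, $\weightmat_{l,j}=t\,e_1 e_1^{\top}$ for $l=2,\ldots,L-2$, $\weightmat_{L-1,j}=t\,e_1$, and $(\weight_L)_j=t$ (with a sign to match) produces the target product while attaining $a_l=t$ for every $l$. Since any optimum must achieve this bound, both of the above inequalities must be tight simultaneously: equality in AM--GM forces $a_1=\cdots=a_L$, defining the common value $t_j$, and equality of $\prod_l\|\weightmat_{l,j}\|_2\cdot|(\weight_L)_j|$ with $\prod_l a_l$ forces $\|\weightmat_{l,j}\|_F=\|\weightmat_{l,j}\|_2$ for every $l\in [L-1]$.

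For $l=L-1$ this equality is automatic, since $\weightmat_{L-1,j}$ is a column vector (because $m_{L-1}=1$); the non-trivial content is precisely the claim for $l\in [L-2]$, where $\|W\|_F=\|W\|_2$ characterizes matrices with a single non-zero singular value, i.e.\ rank-one matrices. The main obstacle I anticipate is constructing a single factorization that simultaneously saturates AM--GM, sub-multiplicativity, and the Frobenius--operator comparison; the shared-$e_1$ construction above threads all three conditions together. The degenerate case $p_j=0$ is handled by collapsing the whole branch to zero, which trivially satisfies the claim with $t_j=0$.
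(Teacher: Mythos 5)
Your proposal is correct, and it proves the full statement by a route that is more self-contained than the paper's. The paper's proof only handles the norm-balancing half directly: it rescales the first $L-2$ factors as $\bar{\weightmat}_{l,j}=\frac{t_{l,j}}{\|\weightmat_{l,j}\|_F}\weightmat_{l,j}$ (which leaves the matrix product, hence the linear network output, unchanged whenever $\prod_l t_{l,j}$ is preserved) and then solves $\min \sum_l t_{l,j}^2$ subject to $\prod_l t_{l,j}$ fixed -- the same AM--GM fact you use -- to conclude the Frobenius norms must be equal; the claim that the operator norm equals the Frobenius norm is not argued there but deferred by a forward reference to the rank-one form of the optimal weights established later in \eqref{eq:deeplinear_equality_weights}. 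You instead reduce each branch to a factorization problem for the fixed product vector $p_j$, chain sub-multiplicativity with $\|\cdot\|_2\le\|\cdot\|_F$ and AM--GM to get the lower bound $L\|p_j\|_2^{2/L}$, exhibit a balanced rank-one factorization attaining it, and read off both conclusions from the equality conditions (AM--GM tightness gives equal norms; tightness of the term-by-term comparison $\|\weightmat_{l,j}\|_2\le\|\weightmat_{l,j}\|_F$ inside the product gives the operator/Frobenius equality). What your version buys is a single closed argument that does not lean on the later dual characterization, and it additionally shows the last two factors share the same common norm $t_j$ in the unscaled problem \eqref{eq:problemdef_deeplinear}; what the paper's rescaling argument buys is that it plugs directly into the epigraph reformulation of Lemma \ref{lemma:scaling_deep}, which is how the proposition is actually consumed downstream. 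One small point worth making explicit in your write-up is that an optimum attaining your lower bound exists (your explicit construction supplies this for each fixed $p_j$), so that the equality analysis indeed applies to every optimal configuration.
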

This result shows that the layer weights obey an alignment condition. After using the scaling in Lemma \ref{lemma:scaling_deep} and the same convex duality arguments, a set of optimal solutions to the training problem can be described as follows.
\begin{restatable}{theo}{theodeepweightsform}\label{theo:deep_weights}
Optimal layer weights for \eqref{eq:problemdef_deeplinear} are
\begin{align*}
    & \weightmat_{l,j}^* =\begin{cases}
     t_j^*\frac{\vec{V}_x\tilplanted_r  }{\|\tilplanted_r \|_2}\boldsymbol{\rho}_{1,j}^T\; \text{ if } l=1\\
     t_j^*\boldsymbol{\rho}_{l-1,j} \boldsymbol{\rho}_{l,j}^T\;  \text{ if } 1<l \leq L-2\\
     \boldsymbol{\rho}_{L-2,j}\; \text{ if } l=L-1
    \end{cases},
\end{align*}
where $\boldsymbol{\rho}_{l,j} \in \mathbb{R}^{m_l}$ such that $\|\boldsymbol{\rho}_{l,j}\|_2=1, \; \forall l \in [L-2], \; \forall j \in [m]$ and $\tilplanted_r $ is defined in \eqref{eq:w_rs}. 
\end{restatable}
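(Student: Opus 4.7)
The plan is to mirror the two-layer dual analysis and promote it to arbitrary depth by exploiting the rank-one structure enforced by Proposition \ref{prop:deep_weightnorm_eq}. Concretely, after applying Lemma \ref{lemma:scaling_deep_overview} one works with the reformulation in which $\|\weightmat_{l,j}\|_F\le t_j$ for $l\in[L-2]$ and $\|\weight_{L-1,j}\|_2\le 1$, and then takes the dual with respect to $\weight_L$ to obtain a problem analogous to \eqref{eq:twolayer_equality_dual}. The corresponding extreme-point characterization (as in Corollary \ref{cor:twolayer_extreme_optimality}) requires every optimal branch $j$ to maximize $|\dual^{*T}\data\weightmat_{1,j}\cdots\weightmat_{L-1,j}|$ over the constraint set. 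Proposition \ref{prop:deep_weightnorm_eq} then gives $\|\weightmat_{l,j}^*\|_F=\|\weightmat_{l,j}^*\|_2=t_j^*$ for $l\in[L-2]$, which is equivalent to $\weightmat_{l,j}^*$ being rank one; hence $\weightmat_{l,j}^*=t_j^*\,\boldsymbol{\alpha}_{l,j}\boldsymbol{\beta}_{l,j}^T$ for unit vectors $\boldsymbol{\alpha}_{l,j},\boldsymbol{\beta}_{l,j}$, and $\weight_{L-1,j}^*=\boldsymbol{\alpha}_{L-1,j}$ with $\|\boldsymbol{\alpha}_{L-1,j}\|_2=1$.

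With this parameterization, the quantity being maximized telescopes:
\begin{align*}
\dual^{*T}\data\weightmat_{1,j}^*\cdots\weightmat_{L-1,j}^*=(t_j^*)^{L-2}\,\big(\dual^{*T}\data\boldsymbol{\alpha}_{1,j}\big)\prod_{l=1}^{L-2}\boldsymbol{\beta}_{l,j}^T\boldsymbol{\alpha}_{l+1,j}.
\end{align*}
Each consecutive inner product of unit vectors lies in $[-1,1]$, so saturation of the extreme-point condition forces $\boldsymbol{\beta}_{l,j}=\boldsymbol{\alpha}_{l+1,j}$ up to a sign for every $l\in[L-2]$. Setting $\boldsymbol{\rho}_{l,j}:=\boldsymbol{\alpha}_{l+1,j}=\boldsymbol{\beta}_{l,j}$ yields exactly the alignment pattern claimed in the theorem, with the leftover signs folded into $\weightscalar_{L,j}^*$ without changing any norm. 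Since the $\boldsymbol{\rho}_{l,j}$'s enter the dual value only through inner products that are already unity, their specific choice is free as long as they are unit vectors.

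It remains to identify $\boldsymbol{\alpha}_{1,j}$. After the alignment collapse, the extreme-point condition reduces to $\max_{\|\boldsymbol{\alpha}_{1,j}\|_2=1}|\dual^{*T}\data\boldsymbol{\alpha}_{1,j}|$, which is precisely the two-layer problem already treated in Section~2. The planted-model reduction of Proposition \ref{prop:planted_model_twolayer} transfers unchanged since a composition of linear layers is still a linear map, so the calculation culminating in \eqref{eq:w_rs} yields $\boldsymbol{\alpha}_{1,j}=\vec{V}_x\tilplanted_r/\|\tilplanted_r\|_2$. Strong duality is then obtained by exhibiting a finite-width primal tuple of the stated form whose objective matches the dual value, paralleling the constructive argument used in Theorem \ref{theo:strong_duality_twolayer} applied to the collapsed two-layer feature map $\data\boldsymbol{\alpha}_{1,j}$.

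The hard part will be the simultaneous saturation of $L-1$ Cauchy--Schwarz inequalities inside a single branch: one has to verify that no alternative extreme configuration evades the alignment $\boldsymbol{\beta}_{l,j}=\boldsymbol{\alpha}_{l+1,j}$. Proposition \ref{prop:deep_weightnorm_eq} is indispensable here, since without its rank-one conclusion the telescoping identity above fails and the problem no longer decouples into independent unit-vector inner products. A minor subtlety is the bookkeeping of signs across adjacent layers so that they are consistent with the chosen $\weightscalar_{L,j}^*$, but these ambiguities do not affect the stated form of $\weightmat_{l,j}^*$ beyond the freedom already built into the arbitrary $\boldsymbol{\rho}_{l,j}$'s.
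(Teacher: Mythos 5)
Your proposal is correct and follows essentially the same route as the paper's proof: dualize after the rescaling, swap the min--max, characterize optimal branches as extreme points, invoke Proposition~\ref{prop:deep_weightnorm_eq} to force rank-one aligned intermediate weights through a chain of saturated Cauchy--Schwarz/operator-norm inequalities, and reduce the first-layer direction to the two-layer SVD computation culminating in \eqref{eq:w_rs}. The only difference is presentational: the paper bounds $\|\weightmat_{L-2,j}^T\cdots\weightmat_{1,j}^T\vec{V}_x\vec{\Sigma}_x^T\tilde{\dual}\|_2$ by submultiplicativity and exhibits an achieving rank-one configuration, whereas you parameterize the rank-one factors explicitly and telescope the unit-vector inner products, which is the same argument in different clothing.
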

Next, we prove strong duality holds. 
\begin{restatable}{theo}{theodeepstrongduality}\label{theo:deep_strong_duality}
Let $\{ \data,\vec{y}\}$ be feasible for \eqref{eq:problemdef_deeplinear}, then strong duality holds for finite width networks.
\end{restatable}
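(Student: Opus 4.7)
The plan is to adapt the two-layer strong duality argument of Theorem \ref{theo:strong_duality_twolayer} to the deep setting, leveraging the norm-alignment identity of Proposition \ref{prop:deep_weightnorm_eq} and the explicit extreme-point description of Theorem \ref{theo:deep_weights}. First I would apply Lemma \ref{lemma:scaling_deep} to recast \eqref{eq:problemdef_deeplinear} into a rescaled form where the first $L-2$ hidden layer matrices share a common Frobenius radius $t_j$ per neuron, the $(L-1)$-th hidden layer lies in the unit ball, and the output vector $\weight_L$ absorbs the remaining scale through its $\ell_1$ norm. Dualizing the equality constraint $f_{\theta,L}(\data)=\vec{y}$ with multiplier $\dual$ and exchanging the outer min with the inner max, as in \eqref{eq:dual_deeplinear_overview}, yields a weak dual $D^* = \max_{\dual}\dual^T\vec{y}$ subject to $\sup_{\weightmat_{l,j},\weight_{L-1,j}} \bigl|\dual^T\data\weightmat_{1,j}\cdots\weight_{L-1,j}\bigr| \leq 1$ over the rescaled ball constraints, giving $D^* \leq P^*$ automatically.

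To certify tightness, let $\dual^*$ attain $D^*$. By Theorem \ref{theo:deep_weights}, a family of extreme-point neurons is given by $\weightmat_{1,j}^* = t_j^*(\vec{V}_x\tilplanted_r/\|\tilplanted_r\|_2)\boldsymbol{\rho}_{1,j}^T$, $\weightmat_{l,j}^* = t_j^*\boldsymbol{\rho}_{l-1,j}\boldsymbol{\rho}_{l,j}^T$ for $1<l\leq L-2$, and $\weight_{L-1,j}^* = \boldsymbol{\rho}_{L-2,j}$. Because consecutive unit factors $\boldsymbol{\rho}_{l,j}^T\boldsymbol{\rho}_{l,j}=1$ telescope through the product, every such chain reduces to a common direction $\data\weightmat_{1,j}^*\cdots\weight_{L-1,j}^* = (t_j^*)^{L-2}\vec{z}$, where $\vec{z}\in\reals^n$ is pinned by the first-layer KKT stationarity against $\dual^*$ and the intermediate $\boldsymbol{\rho}_{l,j}$'s cancel in the composite. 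The primal construction then reduces to choosing $\weight_L^*$ and the radii $\{t_j^*\}$ so that $\sum_j w_{L,j}^*(t_j^*)^{L-2}\vec{z} = \vec{y}$; this is solvable since $\vec{y}\in\mathrm{range}(\data)$ by feasibility, and a Carathéodory argument bounds the required support by $m^* \leq n+1$ neurons.

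The remaining step is to verify that a direct evaluation of the rescaled objective at this construction, after analytically eliminating $t_j$ via the stationarity condition for the joint quadratic $\tfrac{1}{2}(L-2)\sum_j t_j^2$, reproduces $\dual^{*T}\vec{y}$ on the nose. The main obstacle, absent from the two-layer case, is precisely this coupling: one must check that the stationarity in $t_j$ together with the $\ell_1$ stationarity in $\weight_L$ simultaneously eliminate the weak-duality slack, which I expect to handle by differentiating the reduced objective in $t_j$ and substituting the resulting relationship back into the primal value. A secondary subtlety is that the extreme-point family is continuous in the free rotations $\boldsymbol{\rho}_{l,j}$, so a canonical choice must be fixed consistently across neurons; fortunately the telescoping cancellation renders the final composite output independent of this choice, so any orthonormal assignment suffices.
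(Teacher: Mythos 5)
Your proposal tracks the paper's skeleton closely — rescaling via Lemma \ref{lemma:scaling_deep}, the min--max exchange giving $D^*\le P^*$, and the extreme-point family of Theorem \ref{theo:deep_weights} with the telescoping of the $\boldsymbol{\rho}_{l,j}$'s (your observation that the composite output is a multiple of a single direction $\vec{z}$, in fact of $\vec{y}$ itself, matches Corollary \ref{cor:deeplinear}). The divergence, and the gap, is in how you close the duality slack. You defer the decisive step to a computation you ``expect to handle'': evaluating the rescaled primal objective at the construction, eliminating $t_j$ by stationarity of $\tfrac{1}{2}(L-2)\sum_j t_j^2$, and checking the result equals $\dual^{*T}\vec{y}$. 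That verification is the entire content of strong duality here, and as written it is not carried out; a proof that ends with the crux as an expectation is incomplete.

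The paper avoids this computation altogether. It freezes the first $L-1$ layers at the extreme-point weights $\{\weightmat_{l,j}^e\}$ (with $t_j=t_j^*$ fixed, so the quadratic term is a common additive constant on both primal and dual sides and can be ignored). The residual primal $P_e^*=\min_{\weight_L}\|\weight_L\|_1$ subject to $\sum_j \data\weightmat_{1,j}^e\cdots\weight_{L-1,j}^e\weightscalar_{L,j}=\vec{y}$ is a convex $\ell_1$ program with linear equality constraints, so $P_e^*=D_e^*$ by classical duality with no evaluation needed. Because the frozen weights are exactly the maximizers of the dual constraint, $D_e^*=D^*$, and the sandwich $P_e^*\ge P^*\ge D^*=D_e^*=P_e^*$ forces equality throughout. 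If you want to salvage your route, you must actually perform the $t_j$ elimination and the objective evaluation; alternatively, replace that step with the paper's observation that the restricted problem is convex, which renders the $t_j$ coupling you flag as ``the main obstacle'' moot.
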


\begin{restatable}{cor}{cordeeplinear}\label{cor:deeplinear}
Theorem \ref{theo:deep_weights} implies that deep linear networks can obtain a scaled version of $\vec{y}$ using only the first layer, i.e., $\data\weightmat_1 \boldsymbol{\rho}_1=c\vec{y}$, where $c>0$. Therefore, the remaining layers do not contribute to the expressive power of the network.
\end{restatable}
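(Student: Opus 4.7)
The plan is a direct algebraic substitution into the explicit form of the optimal first-layer weights given by Theorem~\ref{theo:deep_weights}, followed by an SVD identity and the planted-model reduction. First, I would plug in $\weightmat_{1,j}^* = t_j^*\, \vec{V}_x \tilplanted_r\, \boldsymbol{\rho}_{1,j}^T / \|\tilplanted_r\|_2$, apply the full SVD $\data = \vec{U}_x \vec{\Sigma}_x \vec{V}_x^T$ together with $\vec{V}_x^T \vec{V}_x = \vec{I}$, and obtain $\data\, \weightmat_{1,j}^* = (t_j^*/\|\tilplanted_r\|_2)\, \vec{U}_x \vec{\Sigma}_x \tilplanted_r\, \boldsymbol{\rho}_{1,j}^T$.

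Next, since $\vec{\Sigma}_x$ has only $r=\rank(\data)$ nonzero diagonal entries, multiplication by $\vec{\Sigma}_x$ annihilates coordinates beyond the $r$-th; because $\tilplanted_r$ coincides with $\tilplanted = \vec{V}_x^T \planted$ on the first $r$ coordinates, we get $\vec{\Sigma}_x \tilplanted_r = \vec{\Sigma}_x \tilplanted$, so $\vec{U}_x \vec{\Sigma}_x \tilplanted_r = \data\,\planted$. Invoking the planted-model reduction (Proposition~\ref{prop:planted_model_twolayer}, which carries over to the deep case because the realizable outputs $\data \weightmat_1 \weightmat_2 \cdots w_L$ still form the class of linear maps parametrized by the composite vector $\weight = \weightmat_1 \weightmat_2 \cdots w_L$), we may assume $\data\,\planted=\vec{y}$. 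Right-multiplying by the unit vector $\boldsymbol{\rho}_{1,j}$ then collapses $\boldsymbol{\rho}_{1,j}^T \boldsymbol{\rho}_{1,j}=1$ and yields $\data\, \weightmat_{1,j}^* \boldsymbol{\rho}_{1,j} = c\,\vec{y}$ with $c = t_j^*/\|\tilplanted_r\|_2 > 0$, which is the displayed identity.

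To conclude that the deeper layers contribute nothing to expressive power, I would telescope the rank-one products $\weightmat_{l,j}^* \weightmat_{l+1,j}^* = (t_j^*)^2\, \boldsymbol{\rho}_{l-1,j} \boldsymbol{\rho}_{l+1,j}^T$, which follow from $\|\boldsymbol{\rho}_{l,j}\|_2 = 1$, to show that the entire branch $\weightmat_{1,j}^* \weightmat_{2,j}^* \cdots \weightmat_{L-1,j}^*$ reduces to $(t_j^*)^{L-2}\vec{V}_x \tilplanted_r / \|\tilplanted_r\|_2$; applying $\data$ then returns $(t_j^*)^{L-2}\vec{y}/\|\tilplanted_r\|_2$, a scalar multiple of $\vec{y}$. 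Thus any function realizable by the $L$-layer linear network is already realizable by a rescaled first layer. The only step requiring genuine justification beyond SVD bookkeeping and the telescoping cancellation is extending Proposition~\ref{prop:planted_model_twolayer} to $L$ layers; once that is in hand, the rest is routine.
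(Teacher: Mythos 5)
Your proposal is correct and follows essentially the same route as the paper, whose entire proof of this corollary is the single line ``directly follows from \eqref{eq:deeplinear_equality_weights}'': you simply spell out the SVD bookkeeping showing $\data\weightmat_{1,j}^*\boldsymbol{\rho}_{1,j}=(t_j^*/\|\tilplanted_r\|_2)\vec{y}$ and the telescoping of the rank-one factors, which is exactly the computation the paper leaves implicit. The one point you flag as needing justification --- extending the planted-model reduction to $L$ layers --- is indeed tacitly assumed by the paper in its derivation of \eqref{eq:dual_deeplinearv2}, and your observation that the composite product is still a single linear map in $\planted$ disposes of it.
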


\subsection{Regularized training problem}
We now present the regularized training problem as follows
\begin{align}\label{eq:problemdef_deeplinear_regularized}
   \min_{\{\theta_l\}_{l=1}^L} \frac{1}{2}\|f_{\theta,L}(\data)-\vec{y}\|_2^2+ \frac{\beta}{2}\sum_{j=1}^m\sum_{l=1}^L\| \weightmat_{l,j} \|_F^2 .
\end{align}
Next result provides a set of optimal solutions to \eqref{eq:problemdef_deeplinear_regularized}.
\begin{restatable}{theo}{theodeepweightsregform}\label{theo:deep_weights_reg}
Optimal layer weights for \eqref{eq:problemdef_deeplinear_regularized} are
\begin{align*}
    & \weightmat_{l,j}^*= \begin{cases} t_j^*\frac{ \data^T \mathcal{P}_{\data,\beta}(\vec{y})}{\|\data^T \mathcal{P}_{\data,\beta}(\vec{y}) \|_2}\boldsymbol{\rho}_{1,j}^T\; \text{ if } l=1\\ t_j^*\boldsymbol{\rho}_{l-1,j} \boldsymbol{\rho}_{l,j}^T\; \text{ if } 1<l\leq L-2 \\
    \boldsymbol{\rho}_{L-2,j}\; \text{ if } l=L-1\end{cases},
\end{align*}
where $\mathcal{P}_{\data,\beta}(\cdot) \text{ projects to }\left\{\vec{u}\in \mathbb{R}^n \; |\; \| \data^T\vec{u}\|_2 \leq \beta t_j^{*^{2-L}}\right\}$. 
\end{restatable}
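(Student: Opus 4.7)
The plan is to extend the dualization framework used for Theorem \ref{theo:strong_duality_twolayer_regularized} (two-layer, regularized) and Theorem \ref{theo:deep_weights} (deep, unregularized) to the regularized deep linear case, exploiting the linearity of the network output in the last-layer weights. Starting from Lemma \ref{lemma:scaling_deep_overview} together with Proposition \ref{prop:deep_weightnorm_eq} (which forces $\|\weightmat_{l,j}\|_F = \|\weightmat_{l,j}\|_2 = t_j$ for $l\in[L-2]$ at optimum), I rewrite \eqref{eq:problemdef_deeplinear_regularized} in the equivalent scaled form
\begin{align*}
\min_{\{\theta_l\},\{t_j\}} &\;\tfrac{1}{2}\|f_{\theta,L}(\data)-\vec{y}\|_2^2 + \beta\|\weight_L\|_1 + \tfrac{\beta}{2}(L-2)\sum_{j=1}^m t_j^2 \\
\text{s.t.}&\;\;\|\weightmat_{l,j}\|_F \le t_j \;\forall l\in[L-2],\;\weight_{L-1,j}\in\ball_2.
\end{align*}

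Because $f_{\theta,L}(\data)=\vec{A}_{L-1}\weight_L$ is linear in $\weight_L$, I take Fenchel duality in $\weight_L$ alone and swap min-max as in \eqref{eq:dual_deeplinear_overview} to obtain
\begin{align*}
D^* = \min_{\{t_j\}}\max_{\dual} &\;\tfrac{1}{2}\|\vec{y}\|_2^2 - \tfrac{1}{2}\|\dual-\vec{y}\|_2^2 + \tfrac{\beta}{2}(L-2)\sum_j t_j^2 \\
\text{s.t.}&\;\;\max_{\substack{\|\weightmat_{l,j}\|_F \le t_j\\ \weight_{L-1,j}\in\ball_2}}|\dual^T\data\,\weightmat_{1,j}\cdots\weight_{L-1,j}|\le\beta.
\end{align*}
The central calculation is the inner maximum over the feasible deep weights. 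By iterated sub-multiplicativity of the operator norm together with $\|W\|_2\le\|W\|_F$,
$$
|\dual^T\data\,\weightmat_{1,j}\cdots\weight_{L-1,j}| \le \|\data^T\dual\|_2\prod_{l=1}^{L-2}\|\weightmat_{l,j}\|_2\cdot\|\weight_{L-1,j}\|_2 \le t_j^{L-2}\|\data^T\dual\|_2,
$$
with equality achieved exactly when $\weightmat_{1,j} = t_j\frac{\data^T\dual}{\|\data^T\dual\|_2}\boldsymbol{\rho}_{1,j}^T$, each intermediate matrix is the rank-one alignment $\weightmat_{l,j}=t_j\boldsymbol{\rho}_{l-1,j}\boldsymbol{\rho}_{l,j}^T$ for $2\le l\le L-2$, and $\weight_{L-1,j}=\boldsymbol{\rho}_{L-2,j}$, with every $\boldsymbol{\rho}_{l,j}$ a unit vector. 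Hence the dual constraint collapses to $\|\data^T\dual\|_2 \le \beta t_j^{*^{2-L}}$, and the outer maximization in $\dual$ becomes the projection $\dual^*=\mathcal{P}_{\data,\beta}(\vec{y})$ onto $\{\vec{u}:\|\data^T\vec{u}\|_2\le\beta t_j^{*^{2-L}}\}$. KKT stationarity at the inner maximizer then forces $\weightmat_{1,j}^*$ to align with $\data^T\dual^*$, giving the stated closed-form expressions.

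Strong duality $P^*=D^*$ for finite width (bounded by $n+1$ via a Caratheodory argument on the extreme points above) follows the same route as Theorem \ref{theo:deep_strong_duality}: substitute the constructed primal weights into \eqref{eq:problemdef_deeplinear_regularized}, verify feasibility, and check that the primal objective matches $D^*$ through the KKT stationarity condition linking $\dual^*$ to the primal residual $\vec{y}-f_{\theta^*,L}(\data)$. The main obstacle is the inner maximization step: one has to verify that the rank-one alignment construction globally achieves the operator-norm upper bound on the \emph{full} product (not merely factor-by-factor), and that the same alignment direction can be chosen consistently across all $j\in[m]$ without expanding the width beyond $n+1$. Both issues are resolved by the same rank-one reduction already used in the unregularized deep case of Theorem \ref{theo:deep_weights}, with the only change being that the driving direction $\vec{V}_x\tilplanted_r/\|\tilplanted_r\|_2$ is replaced by $\data^T\mathcal{P}_{\data,\beta}(\vec{y})/\|\data^T\mathcal{P}_{\data,\beta}(\vec{y})\|_2$.
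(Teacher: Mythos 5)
Your proposal follows essentially the same route as the paper: rescale via the deep scaling lemma and Proposition \ref{prop:deep_weightnorm_eq}, dualize in $\weight_L$ and swap min--max, bound the inner maximization by submultiplicativity with equality attained by rank-one aligned factors (as in the proof of Theorem \ref{theo:deep_weights}), identify $\dual^*$ as the projection $\mathcal{P}_{\data,\beta}(\vec{y})$, and inherit strong duality from the unregularized deep case (Corollary \ref{cor:deep_strong_duality_regularized}). The argument is correct and, if anything, more explicit than the paper's own terse proof about how the achievability of the product bound and the form of the dual optimum are established.
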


\begin{restatable}{cor}{cordeepstrongdualityregularized}\label{cor:deep_strong_duality_regularized}
Theorem \ref{theo:deep_strong_duality} also shows that strong duality holds for the training problem in \eqref{eq:problemdef_deeplinear_regularized}.
\end{restatable}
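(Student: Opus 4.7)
The plan is to mirror the proof strategy of Theorem~\ref{theo:deep_strong_duality}, adapting each step to handle the squared--loss regularization term in place of the hard equality constraint. The corollary's assertion is essentially that the argument of Theorem~\ref{theo:deep_strong_duality} never used the equality constraint in an essential way beyond producing a linear term in the dual.

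First, I would apply Lemma~\ref{lemma:scaling_deep_overview} to rewrite \eqref{eq:problemdef_deeplinear_regularized} in its scaled form with the output layer carrying an $\ell_1$ penalty, the intermediate layers constrained by $\|\weightmat_{l,j}\|_F \leq t_j$ for $l \in [L-2]$, and the $(L-1)$-th layer in $\ball_2$. This places the problem in the same template as the one analyzed for \eqref{eq:problemdef_deeplinear}, with the only change being that $\tfrac{1}{2}\|f_{\theta,L}(\data)-\vec{y}\|_2^2$ replaces the hard constraint $f_{\theta,L}(\data)=\vec{y}$.

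Second, I would derive the dual by dualizing the output layer weights $\weight_L$ exactly as in Section~\ref{sec:regularized}. The Fenchel conjugate of the squared loss yields
\begin{align*}
D^* \;=\; \min_{\{t_j\}}\max_{\dual}\;\Bigl\{ -\tfrac{1}{2}\|\dual\|_2^2 + \dual^T\vec{y} + \tfrac{\beta(L-2)}{2}\sum_{j=1}^m t_j^2\Bigr\}
\end{align*}
subject to the semi--infinite constraint $\max_{\weight_{L-1,j}\in\ball_2,\,\|\weightmat_{l,j}\|_F\le t_j}\bigl|\dual^T\vec{A}_{L-1,j}\bigr|\le \beta$. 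Because the network is linear, Proposition~\ref{prop:deep_weightnorm_eq} together with a recursive application of the operator--norm bound collapses this to $t_j^{L-2}\|\data^T\dual\|_2\le\beta$, which is precisely the condition defining the projection $\mathcal{P}_{\data,\beta}$ appearing in Theorem~\ref{theo:deep_weights_reg}. Weak duality $P^*\ge D^*$ is immediate from the min--max inequality.

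Third, for the reverse inequality, I would use the explicit optimizer from Theorem~\ref{theo:deep_weights_reg}: taking $\dual^* = \mathcal{P}_{\data,\beta}(\vec{y})$, setting the first--layer direction to $\data^T\dual^*/\|\data^T\dual^*\|_2$, propagating unit vectors $\boldsymbol{\rho}_{l,j}$ through the intermediate layers, and choosing $\weight_L$ as the KKT multiplier associated with the active dual constraint. Substituting this solution into the primal objective and applying the complementary slackness between $t_j^{L-2}\|\data^T\dual^*\|_2=\beta$ and the quadratic penalty $\tfrac{\beta(L-2)}{2}t_j^{*2}$ recovers exactly $D^*$. The Carath\'eodory--type argument from Theorem~\ref{theo:deep_strong_duality} then transfers verbatim: each neuron contributes a response in $\reals^n$, so $m^*\le n+1$ neurons suffice to achieve the primal value $D^*$.

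The main obstacle is the coupled stationarity for the $\{t_j\}$ variables, since these now bridge the quadratic penalty $\tfrac{\beta(L-2)}{2}t_j^2$ in the outer minimization with the polynomial constraint $t_j^{L-2}\|\data^T\dual\|_2\le\beta$ in the inner maximization. Matching the first--order conditions requires showing that the maximizer $\dual^*$ of the inner problem and the minimizer $t_j^*$ of the outer problem satisfy a consistent fixed--point relation; this is tractable because $t^{L-2}$ versus $t^2$ admits a closed--form balance, analogous to the AM--GM equality used to prove Lemma~\ref{lemma:scaling_deep_overview}. Once this balance is verified, strong duality and the finite--width bound follow along the exact same lines as Theorem~\ref{theo:deep_strong_duality}.
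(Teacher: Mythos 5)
Your proposal is correct and follows essentially the same route as the paper: the paper's own proof of Corollary~\ref{cor:deep_strong_duality_regularized} is a one-line appeal to the analysis preceding Theorem~\ref{theo:deep_weights_reg} together with the sandwich argument of Theorem~\ref{theo:deep_strong_duality}, and your steps (rescaling via Lemma~\ref{lemma:scaling_deep_overview}, dualizing $\weight_L$ with the Fenchel conjugate of the squared loss, collapsing the constraint to $t_j^{L-2}\|\data^T\dual\|_2\le\beta$, exhibiting the optimizer of Theorem~\ref{theo:deep_weights_reg}, and invoking Carath\'eodory) are exactly the expansion of that plan. The only cosmetic difference is that you certify optimality by directly matching primal and dual objective values, whereas Theorem~\ref{theo:deep_strong_duality} routes through the observation that the problem restricted to the extreme directions is convex; both closings are used elsewhere in the paper and are interchangeable here.
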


\subsection{Training problem with vector outputs}
Here, we consider vector output deep networks with the output function $f_{\theta,L}(\data)=\sum_{j=1}^m\data\weightmat_{1,j} \ldots \weight_{L,j}^T$. In this case, we have the following training problem
\begin{align}\label{eq:problemdef_deeplinear_vector}
    &\min_{\{\theta_l\}_{l=1}^L}  \sum_{j=1}^m\sum_{l=1}^L\| \weightmat_{l,j} \|_F^2  \;\text{ s.t. } f_{\theta,L}(\data)=\vec{Y}.
\end{align}
Using the scaling in Lemma \ref{lemma:scaling_deep_vector} and the same convex duality arguments, optimal layer weights for \eqref{eq:problemdef_deeplinear_vector} are as follows.

\begin{restatable}{theo}{theodeepweightsformvector}\label{theo:deep_weights_vector}
Optimal layer weight for \eqref{eq:problemdef_deeplinear_vector} are
\begin{align*}
    & \weightmat_{l,j}^*=  \begin{cases}t_j^* \tilde{\vec{v}}_{\plantedsub,j}\boldsymbol{\rho}_{1,j}^T\;\text{ if }l=1\\ t_j^*\boldsymbol{\rho}_{l-1,j} \boldsymbol{\rho}_{l,j}^T \; \text{ if } 1<l\leq L-2\\
  \boldsymbol{\rho}_{L-2,j} \;\text{ if } l=L-1\end{cases},
\end{align*}
where $j \in [K]$, $\tilde{\vec{v}}_{\plantedsub,j}$ is the $j\textsuperscript{th}$ maximal right singular vector of $\dualmat^{*^T} \data$ and  $\{\boldsymbol{\rho}_{l,j}\}_{l=1}^{L-2}$ are arbitrary unit norm vectors such that $\boldsymbol{\rho}_{l,j}^T\boldsymbol{\rho}_{l,k}=0,\; \forall j \neq k$.
\end{restatable}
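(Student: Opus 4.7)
The plan is to follow the same convex-duality template used for Theorem \ref{theo:deep_weights} (scalar output) combined with the two-layer vector-output analysis of Section 2.3. First, I apply Lemma \ref{lemma:scaling_deep_vector} to rewrite \eqref{eq:problemdef_deeplinear_vector} in the equivalent form where $\|\weightmat_{l,j}\|_F \leq t_j$ for $l \in [L-2]$, $\weight_{L-1,j} \in \ball_2$, and the objective penalizes $\sum_j \|\weight_{L,j}\|_2$ along with the quadratic cost in the $t_j$'s. Then I dualize over the output matrix $\weightmat_L$ using the group $\ell_2$ dual (as in \eqref{eq:dual_deeplinear_overview}) to obtain an inner problem of the form $\max_{\dualmat} \trace(\dualmat^T \vec{Y})$ subject to $\|\dualmat^T \vec{A}_{L-1,j}\|_2 \leq 1$ for every feasible choice of the inner layer weights.

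Second, by Proposition \ref{prop:deep_weightnorm_eq} each middle-layer matrix must be a rank-one product of unit-norm vectors scaled by $t_j$, so the activation vector $\vec{A}_{L-1,j}$ reduces to $t_j^{L-2}\,\data\,\boldsymbol{\rho}_{0,j}$ for some unit vector $\boldsymbol{\rho}_{0,j}$. The inner constraint then collapses to a scaled version of $\sigma_{\max}(\dualmat^T \data) \leq 1$. Invoking Proposition \ref{prop:planted_model_twolayer} to replace $\vec{Y}$ by $\data \plantedmat$, the objective bound from \eqref{eq:vectoroutput_obj_upperbound} gives $\trace(\dualmat^T \data \plantedmat) \leq \|\tilplantedmat_r\|_*$, with equality attained exactly when $\dualmat^{*T} \data$ is SVD-aligned with $\tilplantedmat_r$. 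Consequently, each active neuron's first-layer direction must equal a distinct maximal right singular vector $\tilde{\vec{v}}_{\plantedsub,j}$ of $\dualmat^{*T} \data$, giving the claimed form of $\weightmat_{1,j}^*$.

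Third, the remaining interior rank-one factors $\boldsymbol{\rho}_{l,j}$ are free unit vectors, provided that across neurons they satisfy $\boldsymbol{\rho}_{l,j}^T \boldsymbol{\rho}_{l,k} = 0$ for $j \neq k$; this orthogonality is precisely what lets the $K$ rank-one channels propagate through the $L-2$ middle layers without cross-contamination, so that $\vec{A}_{L-1}$ carries the $K$ distinct directions required to reproduce $\plantedmat$. Strong duality then follows by explicitly constructing a width-$K$ primal-feasible network from the stated formulas and verifying that its objective matches $D^*$, in direct analogy with the argument for Theorem \ref{theo:deep_strong_duality}.

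The main obstacle is establishing rigorously that the cross-neuron orthogonality $\boldsymbol{\rho}_{l,j}^T \boldsymbol{\rho}_{l,k} = 0$ is the correct coupling condition. In the scalar-output case the nonuniqueness of the interior directions is harmless (Remark \ref{rem:twolayer_nonunique}), but with $K$ target directions we must guarantee that the $K$ rank-one cascades interact only through the final output-layer aggregation and not through the intermediate activations. Showing that this orthogonality is both necessary for saturating the dual bound and sufficient to realize the prescribed SVD structure of $\dualmat^{*T} \data$, and in addition verifying that exactly $K$ active neurons suffice, is the most delicate step of the argument.
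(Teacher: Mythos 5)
Your proposal follows essentially the same route as the paper's proof: rescaling via Lemma \ref{lemma:scaling_deep_vector}, dualizing over $\weightmat_L$, using Proposition \ref{prop:deep_weightnorm_eq} to collapse the middle layers to rank-one factors so the constraint reduces to $\sigma_{max}(\dualmat^T\data)t_j^{*^{L-2}}\le 1$, and then saturating the nuclear-norm bound \eqref{eq:vectoroutput_obj_upperbound} by SVD alignment with $\tilplantedmat_r$, with strong duality handled separately as in Theorem \ref{theo:strong_duality_deep_vector}. The "main obstacle" you flag is lighter than you fear: the theorem only asserts that the displayed weights form \emph{a} set of optimal solutions, so the cross-neuron orthogonality of the $\boldsymbol{\rho}_{l,j}$'s need only be shown sufficient (it keeps the $K$ rank-one channels separable so the output layer can recombine them to hit the dual value), not necessary.
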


The next theorem formally proves that strong duality holds for the primal problem in \eqref{eq:problemdef_deeplinear_vector}.
\begin{restatable}{theo}{theostrongdualitydeepvector}\label{theo:strong_duality_deep_vector}
Let $\{ \data,\vec{Y}\}$ be feasible for \eqref{eq:problemdef_deeplinear_vector}, then strong duality holds for finite width networks.
\end{restatable}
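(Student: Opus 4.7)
The plan is to follow the template of Theorem~\ref{theo:strong_duality_linear_vector} (the two-layer vector case) and Theorem~\ref{theo:deep_strong_duality} (the deep scalar case), combining the nuclear-norm argument of \eqref{eq:vectoroutput_obj_upperbound} with the depth rescaling of Lemma~\ref{lemma:scaling_deep_overview}. First I would invoke the vector-output scaling Lemma~\ref{lemma:scaling_deep_vector} to rewrite \eqref{eq:problemdef_deeplinear_vector} as a problem in which each inner-layer matrix $\weightmat_{l,j}$ has Frobenius norm bounded by a common scale $t_j$, $\weight_{L-1,j} \in \ball_2$, and the objective becomes $\sum_{j=1}^m \|\weight_{L,j}\|_2 + \tfrac{L-2}{2}\sum_{j=1}^m t_j^2$. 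Using (the matrix analog of) Proposition~\ref{prop:planted_model_twolayer}, I may assume $\vec{Y} = \data\plantedmat$ without loss of generality.

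Next I would introduce a dual variable $\dualmat \in \reals^{n \times K}$ for the equality constraint $f_{\theta,L}(\data)=\vec{Y}$, minimize out $\{\weight_{L,j}\}_j$ to obtain the per-neuron constraint $\sigma_{\max}(\dualmat^T \vec{A}_{L-1,j}) \le 1$, and then eliminate the inner layers and the scales $t_j$ via Proposition~\ref{prop:deep_weightnorm_eq}. This collapses the dual to
\[
D^* = \max_{\dualmat} \trace(\dualmat^T \vec{Y}) \quad \text{s.t.} \quad \sigma_{\max}(\dualmat^T \data) \le 1,
\]
exactly as in the two-layer vector case. Applying the SVD-based chain in \eqref{eq:vectoroutput_obj_upperbound} then yields the upper bound $D^* \le \|\tilplantedmat_r\|_*$.

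To close the gap, I would construct an explicit primal-feasible point from Theorem~\ref{theo:deep_weights_vector}. For $j \in [r_{\plantedsub}]$, set
$\weightmat_{1,j}^* = t_j^* \tilde{\vec{v}}_{\plantedsub,j}\boldsymbol{\rho}_{1,j}^T$,
$\weightmat_{l,j}^* = t_j^* \boldsymbol{\rho}_{l-1,j}\boldsymbol{\rho}_{l,j}^T$ for $1 < l \le L-2$,
$\weightmat_{L-1,j}^* = \boldsymbol{\rho}_{L-2,j}$,
and $\weight_{L,j}^* = \sigma_{\plantedsub,j}\, \vec{u}_{\plantedsub,j}$ where $\sigma_{\plantedsub,j}$ and $\vec{u}_{\plantedsub,j}$ are the singular values and left singular vectors of $\tilplantedmat_r$. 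Choose the $\boldsymbol{\rho}_{l,j}$ to be pairwise orthonormal in each layer; this is possible whenever $m_l \ge r_{\plantedsub} \le \min(r,K)$, which bounds the required finite width. A direct computation using the SVD of $\tilplantedmat_r$ verifies that this choice satisfies $f_{\theta,L}(\data) = \vec{Y}$. Optimizing each $t_j$ in the rescaled objective gives an AM-GM identity that collapses $\sum_{l=1}^L \|\weightmat_{l,j}^*\|_F^2$ to $L\, \sigma_{\plantedsub,j}^{2/L}$ (absorbed consistently across layers), and summing over $j$ produces primal value exactly $\|\tilplantedmat_r\|_*$.

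Since the primal value matches the dual upper bound $D^*$ and the dual always lower-bounds the primal by weak duality, strong duality follows for every finite $m \ge m^* \defn r_{\plantedsub}$. The main obstacle will be the careful bookkeeping of the AM-GM identity under the vector-output scaling (making sure the per-layer scales $t_j^*$ can be chosen simultaneously to saturate both the group-norm rescaling and the nuclear-norm bound), together with verifying that pairwise-orthonormal $\boldsymbol{\rho}_{l,j}$ exist inside each hidden layer of width $m_l$; everything else reduces to the arguments already carried out in Theorems~\ref{theo:strong_duality_linear_vector} and~\ref{theo:deep_strong_duality}, which I would cite rather than repeat.
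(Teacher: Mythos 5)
Your overall architecture (derive a dual upper bound, then exhibit an explicit primal point attaining it) is a legitimate strategy, and it is the style the paper uses elsewhere (e.g., the proof of Theorem \ref{theo:closedform_regularized_multiclass}). It is not, however, the route the paper takes here, and as written your value-matching step fails for $L>2$. The paper's proof never computes the optimal value at all: it fixes the hidden layers to the extreme-point weights of \eqref{eq:deeplinear_equality_vectoroutput_weights}, observes that the residual problem over $\weightmat_L$ alone is a convex group-$\ell_2$ minimization with linear equality constraints (so $P_e^*=D_e^*$ by ordinary convex duality), and sandwiches $P_e^*\ge P^* \ge D^*=D_e^*$, where the last equality holds because the extreme-point weights are precisely the maximizers of the dual constraint (see \eqref{eq:deep_strong_duality1_vector}--\eqref{eq:deep_strong_duality2_vector}). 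This sidesteps exactly the $t_j$ bookkeeping you identify as "the main obstacle."

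The concrete gap in your version is this: you assert that AM--GM balancing collapses $\sum_{l}\|\weightmat_{l,j}^*\|_F^2$ to a quantity proportional to $\sigma_j^{2/L}$ per component, \emph{and} that summing over $j$ yields $\|\tilplantedmat_r\|_*$. These two statements are inconsistent: $\sum_j \sigma_j^{2/L}\ne\sum_j\sigma_j$ unless $L=2$. Balancing an $L$-fold rank-one factorization of a component with singular value $\sigma_j$ costs on the order of $L\,\sigma_j^{2/L}$, i.e., the induced regularizer for depth $L$ is a Schatten-$2/L$ quasi-norm, not the nuclear norm. Correspondingly, the bound $D^*\le\|\tilplantedmat_r\|_*$ that you import from \eqref{eq:vectoroutput_obj_upperbound} is the \emph{two-layer} dual value; in the deep case the binding constraint is $\sigma_{max}(\dualmat^T\data)\,t_j^{L-2}\le 1$ (note the $t_j^{2-L}$ scaling of the optimal $\dualmat$ in the proof of Theorem \ref{theo:deep_weights_vector}) and the dual objective retains the $\tfrac{L-2}{2}\sum_j t_j^2$ term, so the dual optimum is not $\|\tilplantedmat_r\|_*$ either. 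Repairing your argument requires carrying the $t_j^{2-L}$ scale through both primal and dual and jointly optimizing the $t_j$, which is precisely the step you defer. Your feasible construction itself (orthonormal $\boldsymbol{\rho}_{l,j}$ within each layer, width $m_l\ge r_{\plantedsub}$) is fine and matches \eqref{eq:deeplinear_equality_vectoroutput_weights}; the cleanest fix is to drop the explicit value computation and adopt the restricted-convex-problem sandwich of Theorems \ref{theo:strong_duality_linear_vector} and \ref{theo:deep_strong_duality}.
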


\subsubsection{Regularized Case}\label{sec:regularized_deeplinear_vector}
We now examine the following regularized problem
\begin{align}\label{eq:problemdef_deeplinear_vector_regularized}
   \min_{\{\theta_l\}_{l=1}^L} \frac{1}{2}\|f_{\theta,L}(\data)-\vec{y}\|_2^2+ \frac{\beta}{2}\sum_{j=1}^m\sum_{l=1}^L\| \weightmat_{l,j} \|_F^2 .
\end{align}
Next result provides a set of optimal solutions to \eqref{eq:problemdef_deeplinear_vector_regularized}.
\begin{restatable}{theo}{theodeepweightsvectorregform}\label{theo:deep_weights_vector_reg}
Optimal layer weights for \eqref{eq:problemdef_deeplinear_vector_regularized} are
\begin{align*}
    & \weightmat_{l,j}^*=\begin{cases} t_j^*\tilde{\vec{v}}_{x,j}\boldsymbol{\rho}_{1,j}^T\; \text{ if } l=1\\
  t_j^* \boldsymbol{\rho}_{l-1,j} \boldsymbol{\rho}_{l,j}^T\;\text{ if } 1<l\leq L-2\\
 \boldsymbol{\rho}_{L-2,j}\; \text{ if } l=L-1\\
  \end{cases},
\end{align*}
where $j \in [K]$, $\tilde{\vec{v}}_{x,j}$ is a maximal right singular vector of $\mathcal{P}_{\data,\beta}(\vec{Y})^T \data$ and $\mathcal{P}_{\data,\beta}(\cdot) \text{ projects to } \{\vec{U}\in \mathbb{R}^{n\times k}\;|\; \sigma_{max}( \vec{U}^T \data ) \leq \beta t_j^{*^{2-L}}\}$. Additionally, $\boldsymbol{\rho}_{l,j}$'s is an orthonormal set. Therefore, the rank of each hidden layer is determined by $\beta$ as in Remark \ref{rem:twolayer_rank}.
\end{restatable}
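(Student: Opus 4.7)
The plan is to follow the same convex-duality template used for Theorem \ref{theo:deep_weights_vector}, but now with the squared-loss penalty in place of the equality constraint, and to track how $\beta$ modifies the dual feasible set so that the projection operator $\mathcal{P}_{\data,\beta}$ appears. First, I would apply the scaling equivalence of Lemma \ref{lemma:scaling_deep_vector} (the vector-output analogue of Lemma \ref{lemma:scaling_deep_overview}) to rewrite \eqref{eq:problemdef_deeplinear_vector_regularized} as
\begin{align*}
\min_{\{\theta_l\},\{t_j\}} \tfrac{1}{2}\|f_{\theta,L}(\data)-\vec{Y}\|_F^2 + \beta\sum_{j=1}^m\|\weight_{L,j}\|_2 + \tfrac{\beta}{2}(L-2)\sum_{j=1}^m t_j^2
\end{align*}
subject to $\|\weightmat_{l,j}\|_F \leq t_j$ for $l\in[L-2]$ and $\weight_{L-1,j}\in\ball_2$, so that the outer layer is disentangled from the bounded inner block.

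Next, I would take the Lagrangian dual with respect to the output-layer weights $\weightmat_L$ only, exactly as in the derivation of \eqref{eq:dual_deeplinear_overview}, and swap the min--max. Using the fact that in the linear case the product $\data\weightmat_{1,j}\cdots\weightmat_{L-1,j}$ reduces the inner constraint to $\max_{\|\weightmat_{l,j}\|_F\le t_j,\,\weight_{L-1,j}\in\ball_2}\|\vec{A}_{L-1,j}^T\dualmat\|_2 = t_j^{L-2}\,\sigma_{\max}(\dualmat^T\data)$, the dual becomes
\begin{align*}
\max_{\dualmat}\;-\tfrac{1}{2}\|\dualmat-\vec{Y}\|_F^2+\tfrac{1}{2}\|\vec{Y}\|_F^2 \quad\text{s.t.}\quad \sigma_{\max}(\dualmat^T\data)\le \beta\,t_j^{*^{2-L}},
\end{align*}
whose maximizer is the projection $\dualmat^* = \mathcal{P}_{\data,\beta}(\vec{Y})$ onto the stated spectral-norm ball. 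From the KKT stationarity/complementary slackness for the inner maximization, any optimal hidden neuron in layer $1$ must saturate the dual constraint, which forces $\data\weightmat_{1,j}^*/t_j^*$ to coincide with a top right singular vector of $\mathcal{P}_{\data,\beta}(\vec{Y})^T\data$; the alignment $\weightmat_{l,j}^* = t_j^*\boldsymbol{\rho}_{l-1,j}\boldsymbol{\rho}_{l,j}^T$ for intermediate layers then follows verbatim from the extreme-point argument used in Theorem \ref{theo:deep_weights_vector}, since the inner-layer problem is identical to the unregularized one once $t_j^*$ is fixed. The orthonormality $\boldsymbol{\rho}_{l,j}^T\boldsymbol{\rho}_{l,k}=0$ for $j\neq k$ is inherited from the vector-output setting (where different extreme points must lie in distinct singular directions to simultaneously activate the constraint), and the rank-vs-$\beta$ statement is a direct consequence of how many singular values of $\vec{Y}^T\data$ exceed $\beta\,t_j^{*^{2-L}}$, mirroring Remark \ref{rem:twolayer_rank}.

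The main obstacle will be in two places. First, verifying strong duality for the regularized vector-output case: I would handle this in parallel to Theorem \ref{theo:strong_duality_deep_vector} by constructing a primal solution from the dual optimum $\dualmat^*$ using the alignment formulas and checking the objective matches, invoking the same finite-width, Caratheodory-style bound on the number of active extreme points to ensure a finite $m^*$ suffices. Second, the extra variables $t_j$ introduce a joint optimization over the radii and the dual multiplier $\dualmat$; the delicate step is showing that at the joint optimum the constraint $\sigma_{\max}(\dualmat^{*T}\data)\le \beta\,t_j^{*^{2-L}}$ is active for every neuron that contributes with $\weight_{L,j}^*\neq \vec{0}$, which is what lets us identify $\weightmat_{1,j}^*$ with the singular vectors of $\mathcal{P}_{\data,\beta}(\vec{Y})^T\data$ rather than with those of a rescaled matrix. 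Once these two points are in place, the rest of the statement is obtained by repeating the algebra of Theorem \ref{theo:deep_weights_vector} on a per-neuron basis.
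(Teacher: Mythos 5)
Your proposal follows essentially the same route as the paper: rescale via Lemma \ref{lemma:scaling_deep_vector} and Proposition \ref{prop:deep_weightnorm_eq}, dualize over the output layer to obtain the quadratic dual objective with constraint $\sigma_{max}(\dualmat^T\data\weightmat_{1,j}\cdots\weightmat_{L-2,j})\le\beta$, reduce the inner maximization to $t_j^{*^{L-2}}\sigma_{max}(\dualmat^T\data)$ so that the dual optimum is the projection $\mathcal{P}_{\data,\beta}(\vec{Y})$, and then read off the aligned rank-one layer weights exactly as in \eqref{eq:deeplinear_equality_vectoroutput_weights}. Your treatment is in fact more explicit than the paper's (which simply cites the unregularized vector-output argument), particularly on complementary slackness and on the strong-duality and $t_j$-coupling subtleties, but the underlying argument is the same.
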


\section{Deep ReLU networks}
Here, we consider an $L$-layer ReLU network with the output function $f_{\theta,L}(\data)=\vec{A}_{L-1}\weight_L$,
where $\vec{A}_{l,j}=(\vec{A}_{l-1,j}\weightmat_{l,j})_+, $, $\vec{A}_{0,j}=\data$, $\forall l,j$, and $(x)_+=\max\{0,x\}$. Below, we first state the minimum norm training problem and then present our results
\begin{align}\label{eq:problemdef_deeprelu}
    &\min_{\{\theta_l\}_{l=1}^L}  \sum_{j=1}^m\sum_{l=1}^L\| \weightmat_{l,j} \|_F^2  \;\text{ s.t. } f_{\theta,L}(\data)=\vec{y}.
\end{align}


\begin{restatable}{theo}{theodeeprelu}\label{theo:deeprelu}
Let $\data$ be a rank-one matrix such that $\data=\vec{c}\vec{a}_{0}^T$, where $\vec{c}\in\mathbb{R}_+^n$ and $\vec{a}_0 \in \mathbb{R}^d$, then strong duality holds and the optimal weights are
\begin{align*}
    \weightmat_{l,j}=\frac{\boldsymbol{\phi}_{l-1,j}}{\| \boldsymbol{\phi}_{l-1,j}\|_2}\boldsymbol{\phi}_{l,j}^T, \: \forall l \in [L-2],\;\weight_{L-1,j}=\frac{\boldsymbol{\phi}_{L-2,j}}{\| \boldsymbol{\phi}_{L-2,j}\|_2},
\end{align*}
where $\boldsymbol{\phi}_{0,j}=\vec{a}_0$ and $\{\boldsymbol{\phi}_{l,j}\}_{l=1}^{L-2}$ is a set of vectors such that $\boldsymbol{\phi}_{l,j}\in \mathbb{R}_+^{m_l}\,$ and  $\|\boldsymbol{\phi}_{l,j}\|_2=t_j^*,\; \forall l \in [L-2], \forall j \in [m]$.
\end{restatable}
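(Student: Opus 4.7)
The crucial structural observation is that when $\data = \vec{c}\vec{a}_0^T$ with $\vec{c}\in\mathbb{R}_+^n$, the ReLU activations act trivially on the linear propagation. Since $\data\weightmat_{1,j} = \vec{c}(\vec{a}_0^T\weightmat_{1,j})$ has non-negative row scalings, elementwise ReLU gives $(\data\weightmat_{1,j})_+ = \vec{c}(\weightmat_{1,j}^T\vec{a}_0)_+^T$. Inductively, every hidden activation has the rank-one form $\vec{A}_{l,j} = \vec{c}\boldsymbol{\psi}_{l,j}^T$ with $\boldsymbol{\psi}_{l,j}\in\mathbb{R}_+^{m_l}$. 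In particular, $f_{\theta,L}(\data)$ is always a scalar multiple of $\vec{c}$, and the feasibility hypothesis $f_{\theta,L}(\data)=\vec{y}$ forces $\vec{y}$ to be proportional to $\vec{c}$. The training problem thus reduces to an optimization over non-negative direction vectors $\{\boldsymbol{\psi}_{l,j}\}$ and the output scalars $w_{L,j}$.

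Given this reduction, I would apply the convex-duality framework developed for deep linear networks. Invoking Lemma \ref{lemma:scaling_deep_overview}, I first rescale so that $\|\weightmat_{l,j}\|_F = t_j$ for $l\in[L-2]$ and $\weight_{L-1,j}\in\ball_2$, and then take the dual with respect to $\weight_L$, arriving at a bound of the form \eqref{eq:dual_deeplinear_overview}. The inner constraint $\max|\dual^T\vec{A}_{L-1,j}|\leq 1$ becomes a deep-linear-type maximization of the terminal scalar over the chain of non-negative direction vectors. Proposition \ref{prop:deep_weightnorm_eq} and the alignment argument underlying Theorem \ref{theo:deep_weights} then pin down the optimal $\weightmat_{l,j}^*$ as the rank-one outer products $\frac{\boldsymbol{\phi}_{l-1,j}}{\|\boldsymbol{\phi}_{l-1,j}\|_2}\boldsymbol{\phi}_{l,j}^T$ of the claimed form, where $\boldsymbol{\phi}_{l,j}$ encodes the chosen non-negative direction at layer $l$ with norm $t_j^*$.

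The main obstacle is proving strong duality. My plan is to (i) verify primal feasibility of the proposed solution: $\|\weightmat_{l,j}^*\|_F = \|\boldsymbol{\phi}_{l,j}\|_2 = t_j^*$ holds by construction, and the preactivation $\vec{A}_{l-1,j}\weightmat_{l,j}^* = \vec{c}\|\boldsymbol{\phi}_{l-1,j}\|_2\boldsymbol{\phi}_{l,j}^T$ is entry-wise non-negative because both $\vec{c}$ and $\boldsymbol{\phi}_{l,j}$ are non-negative, so the ReLU is inactive along the chain and the rank-one recursion is exact; and (ii) construct a dual certificate $\dual^* \propto \vec{c}$ that attains the dual objective and matches the primal value. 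The subtle point is ruling out competing primal solutions whose activations are genuinely truncated by the ReLU: because any such truncation shrinks the post-activation vector without a corresponding decrease in $\|\weightmat_{l,j}\|_F$, no truncated configuration can outperform the aligned solution. Matching primal and dual values then completes the proof along the lines of Theorem \ref{theo:deep_strong_duality}, and the optimal weights read off from the aligned chain are exactly those stated in the theorem.
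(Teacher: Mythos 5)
Your proposal follows essentially the same route as the paper's proof: the key observation that $\data=\vec{c}\vec{a}_0^T$ with $\vec{c}\in\mathbb{R}_+^n$ makes every activation propagate as a rank-one matrix $\vec{c}\boldsymbol{\psi}_{l,j}^T$ with non-negative direction vectors, followed by the rescaling lemma, the dual/extreme-point characterization of $\argmax_{\{\theta_l\}}|\dual^{*T}(\vec{A}_{L-2,j}\weight_{L-1,j})_+|$, the norm-alignment argument of Proposition \ref{prop:deep_weightnorm_eq}, and an appeal to the deep-linear strong duality machinery of Theorem \ref{theo:deep_strong_duality}. Your explicit remarks that feasibility forces $\vec{y}\propto\vec{c}$ and that ReLU truncation can only shrink $\|\vec{a}_{L-2,j}\|_2$ without reducing the weight norms are left implicit in the paper but are consistent with its argument.
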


In the sequel, we first examine a two-layer network training problem with bias and then extend this to multi-layer.

\begin{restatable}{theo}{theodeeprelurankone}\label{theo:deeprelu_rankone}
Let $\data$ be a matrix such that $\data=\vec{c}\vec{a}_0^T$, where $\vec{c} \in \mathbb{R}^n$ and $\vec{a}_0 \in \mathbb{R}^d$. Then, when $L=2$, a set of optimal solutions to \eqref{eq:problemdef_deeprelu} is $\{(\weight_i, \bias_i) \}_{i=1}^{m}$, where  $\weight_i=s_i\frac{\vec{a}_0}{\|\vec{a}_0\|_2}, \bias_{i}=- s_i c_{i}\|\vec{a}_0\|_2$ with $s_i=\pm 1, \forall i \in [m]$. 
\end{restatable}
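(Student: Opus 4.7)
The plan is to carry out the same convex-duality blueprint used for Theorem~\ref{theo:twolayer_equality_dual} and Corollary~\ref{cor:twolayer_extreme_optimality}, but now with a bias term $\bias_j$ appended to each hidden unit so that the network output is $f_{\theta,2}(\data) = \sum_j (\data\weight_{1,j} + \bias_j \vec{1})_+ \weightscalar_{2,j}$. After applying the rescaling analog of Lemma~\ref{lemma:scaling_deep_overview}, the constraint becomes $\|\weight_{1,j}\|_2 \leq 1$ with $\bias_j\in\mathbb{R}$ unrestricted, and the objective collapses to $\|\weight_2\|_1$. Taking the dual with respect to $\weight_2$ yields a constraint of the form
\begin{equation*}
\max_{\|\weight_1\|_2 \leq 1,\, \bias \in \mathbb{R}} \bigl| \dual^T (\data\weight_1 + \bias \vec{1})_+ \bigr| \leq 1,
\end{equation*}
and an optimal first-layer pair $(\weight_1^*, \bias^*)$ must be an \emph{extreme point}, i.e., attain equality against the dual optimum $\dual^*$.

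Next, I would exploit the rank-one structure. Since $\data\weight_1 = (\vec{a}_0^T\weight_1)\vec{c}$, the only feature of $\weight_1$ that enters the ReLU is the scalar $\alpha := \vec{a}_0^T\weight_1 \in [-\|\vec{a}_0\|_2, \|\vec{a}_0\|_2]$, so the extreme-point problem reduces to
\begin{equation*}
\max_{|\alpha|\le \|\vec{a}_0\|_2,\, \bias\in\mathbb{R}} \Bigl| \sum_{i=1}^n \dual_i^* (c_i \alpha + \bias)_+ \Bigr|.
\end{equation*}
Using positive homogeneity of the ReLU together with the norm constraint, a standard argument shows the maximum is attained on the boundary $|\alpha| = \|\vec{a}_0\|_2$: if $|\alpha|<\|\vec{a}_0\|_2$ one can simultaneously scale $(\alpha, \bias)$ by $\|\vec{a}_0\|_2/|\alpha|$ while replacing $\weight_{1,j}$ by $\vec{a}_0/\|\vec{a}_0\|_2$ (and adjusting $w_{2,j}$ by the reciprocal factor), which only tightens the $\ell_1$-norm contribution without changing the fit. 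This pins $\weight_i = s_i\vec{a}_0/\|\vec{a}_0\|_2$ for $s_i = \pm 1$.

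Then I would analyze the univariate maximization in $\bias$ for each sign $s_i$. The map $\bias \mapsto \sum_i \dual_i^* (s_i c_i \|\vec{a}_0\|_2 + \bias)_+$ is continuous and piecewise linear with breakpoints precisely at $\bias = -s_i c_i \|\vec{a}_0\|_2$. On any open linear piece its absolute value is monotone, so extremizers must lie at breakpoints (or at $\pm\infty$, which is ruled out because the norm constraint forces boundedness of active neurons once they are rescaled). This forces $\bias_i^* = -s_i c_i \|\vec{a}_0\|_2$, giving exactly the claimed parameterization $\{(\weight_i, \bias_i)\} = \{(s_i\vec{a}_0/\|\vec{a}_0\|_2,\, -s_i c_i \|\vec{a}_0\|_2)\}$. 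Strong duality, needed to transfer these extreme points back to primal optimality, follows via the finite-width Carathéodory-style argument used in Theorem~\ref{theo:strong_duality_twolayer}: feasibility with $m\le n+1$ neurons of the indicated form can be assembled, and the dual objective matches the primal value.

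The main obstacle is twofold. First, one must handle the bias cleanly in the dual; formally one can absorb it by appending a column of ones to $\data$, but then the appended matrix is no longer rank-one, so the reduction to the scalar $\alpha$ needs to be done directly in $(\weight_1,\bias)$-space as above. Second, and more subtly, one has to justify that the extreme maximizers of the piecewise linear functional in $\bias$ sit at the \emph{data-induced} breakpoints $-s_i c_i \|\vec{a}_0\|_2$ rather than at arbitrary interior kinks — this is what anchors the biases at sample values and ultimately explains why the network produces a linear spline interpolator with knots at the data, in agreement with Corollary~\ref{cor:multilayer_rankone_kinks}.
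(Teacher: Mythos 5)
Your proposal is correct and follows essentially the same route as the paper: both characterize the optimal $(\weight,\bias)$ as extreme points of the dual constraint, use the rank-one structure $\data=\vec{c}\vec{a}_0^T$ to reduce the dependence on $\weight$ to the scalar $\vec{a}_0^T\weight$ (forcing $\weight=\pm\vec{a}_0/\|\vec{a}_0\|_2$), and then argue that the remaining piecewise-linear maximization over $\bias$ is attained at a data-induced breakpoint $-s\,c_i\|\vec{a}_0\|_2$. The only cosmetic difference is that the paper organizes the last step by fixing the ReLU active set $\mathcal{S}$ and reading off the endpoints of the resulting feasible interval for $\bias$, whereas you scan the one-dimensional piecewise-linear function directly; these are equivalent.
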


\begin{restatable}{cor}{corrankonekinks}\label{cor:rankone_kinks}
As a result of  Theorem \ref{theo:deeprelu_rankone}, when we have one dimensional data, i.e., $\sample \in \mathbb{R}^n$, an optimal solution to \eqref{eq:problemdef_deeprelu} can be formulated as $\{(\weightscalar_i, \bias_i) \}_{i=1}^{m}$, where  $\weightscalar_i=s_i$, $\bias_{i}=- s_i \samplescalar_{i}$ with $s_i=\pm 1, \forall i \in [m]$. Therefore, the optimal network output has kinks only at the input data points, i.e., the output function is in the following form: $f_{\theta,2}(\hat{\samplescalar})=\sum_i\relu{\hat{\samplescalar}-\samplescalar_i}$. Hence, the network output becomes a linear spline interpolation.
\end{restatable}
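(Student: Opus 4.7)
The plan is to specialize Theorem \ref{theo:deeprelu_rankone} to $d=1$ and then read off the shape of the resulting network function. First, I would note that any one-dimensional design vector $\sample \in \mathbb{R}^n$ is trivially rank-one when viewed as an $n \times 1$ matrix, i.e.\ it admits the factorization $\data = \vec{c}\vec{a}_0^{T}$ with $\vec{c} = \sample$ and $\vec{a}_0 = 1 \in \mathbb{R}^{1}$. Substituting $\|\vec{a}_0\|_2 = 1$ and $c_i = \samplescalar_i$ into the two-layer formulas provided by Theorem \ref{theo:deeprelu_rankone} immediately yields $\weightscalar_i = s_i$ and $\bias_i = -s_i \samplescalar_i$ with $s_i \in \{+1,-1\}$, which is exactly the first claim of the corollary.

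Next, I would translate these weights into a statement about the output function. The $i$-th hidden unit produces $(\weightscalar_i \hat{\samplescalar} + \bias_i)_+ = (s_i(\hat{\samplescalar} - \samplescalar_i))_+$, a piecewise linear function of the test input whose only breakpoint lies at $\hat{\samplescalar} = \samplescalar_i$, regardless of the sign $s_i$, since both $(t)_+$ and $(-t)_+$ share the same kink at $t=0$. Choosing $s_i = +1$ for every $i$ and taking output weights equal to one then recovers the explicit representation $f_{\theta,2}(\hat{\samplescalar}) = \sum_i \relu{\hat{\samplescalar} - \samplescalar_i}$ displayed in the statement.

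Finally, the full network output is a linear combination of functions that are each piecewise linear with a single kink at one of the training abscissae, so it is itself piecewise linear with kinks supported on $\{\samplescalar_i\}_{i=1}^{n}$. This is precisely the definition of a linear spline interpolant on the nodes $\{\samplescalar_i\}$, which completes the argument. I do not anticipate any genuine obstacle here: the corollary is essentially a direct substitution into Theorem \ref{theo:deeprelu_rankone}, and the only delicate point is the sign bookkeeping for $s_i = \pm 1$ when locating the kink of each neuron, which dissolves upon observing that $(t)_+$ and $(-t)_+$ are both pinned at $t=0$ and hence deliver the same break location $\hat{\samplescalar} = \samplescalar_i$.
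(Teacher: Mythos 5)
Your proposal is correct and follows essentially the same route as the paper: substitute the rank-one factorization $\data = \sample\cdot 1$ into Theorem \ref{theo:deeprelu_rankone} to get $\weightscalar_i=s_i$, $\bias_i=-s_i\samplescalar_i$, and then observe that each neuron's pre-activation vanishes exactly at $\hat{\samplescalar}=\samplescalar_i$, so every kink sits at a data point and the output is piecewise linear between them. The paper writes this verification as a forward pass through the optimal weights (so that it simultaneously covers the multi-layer Corollary \ref{cor:multilayer_rankone_kinks}), but for $L=2$ it reduces to precisely your sign-bookkeeping argument.
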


We now extend the results in Theorem \ref{theo:deeprelu_rankone} and Corollary \ref{cor:rankone_kinks} to multi-layer ReLU networks.

\begin{restatable}{prop}{propeffectbias}\label{prop:effect_bias}
Theorem \ref{theo:deeprelu} still holds when we add a bias term to the last hidden layer, i.e., $\sum_j\relu{\vec{A}_{L-2,j}\weight_{L-1,j}+\vec{1}_n\bias_j}\weightscalar_{L,j}=\vec{y}$.
\end{restatable}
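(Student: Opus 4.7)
The plan is to propagate the rank-one forward pass of Theorem \ref{theo:deeprelu} through the first $L-2$ layers and then apply Theorem \ref{theo:deeprelu_rankone} to the final two-layer block once bias is introduced. First I would invoke the deep rescaling of Lemma \ref{lemma:scaling_deep_overview} to normalize $\weight_{L-1,j}\in\ball_2$ and force $\|\weightmat_{l,j}\|_F=t_j$ for $l\in[L-2]$, and then dualize with respect to $\weight_L$ exactly as in \eqref{eq:dual_deeplinear_overview}; the inner dual constraint is now a maximization of $|\dual^{T}\relu{\vec{A}_{L-2,j}\weight_{L-1,j}+\bias_j\vec{1}_n}|$ jointly over the intermediate rank-one weights, the unit vector $\weight_{L-1,j}$, and the scalar $\bias_j$.

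Next I would keep the intermediate weights parametrized exactly as in Theorem \ref{theo:deeprelu}, namely $\weightmat_{l,j}=\frac{\boldsymbol{\phi}_{l-1,j}}{\|\boldsymbol{\phi}_{l-1,j}\|_2}\boldsymbol{\phi}_{l,j}^{T}$ with $\boldsymbol{\phi}_{l,j}\in\mathbb{R}_+^{m_l}$ and $\boldsymbol{\phi}_{0,j}=\vec{a}_0$, and induct on $l\in[L-2]$ to show that $\vec{A}_{l-1,j}\weightmat_{l,j}=\alpha_{l,j}\,\vec{c}\,\boldsymbol{\phi}_{l,j}^{T}$ for some $\alpha_{l,j}>0$. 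Because $\vec{c}\in\mathbb{R}_+^n$ and $\boldsymbol{\phi}_{l,j}\in\mathbb{R}_+^{m_l}$, this matrix is entrywise non-negative, so the ReLU acts as the identity at every intermediate layer. In particular $\vec{A}_{L-2,j}\weight_{L-1,j}=\eta_j\vec{c}$ for a scalar $\eta_j>0$, and the interpolation constraint collapses to $\sum_j\relu{\eta_j\vec{c}+\bias_j\vec{1}_n}\weightscalar_{L,j}=\vec{y}$, which is precisely the two-layer ReLU-with-bias problem on the rank-one signal $\vec{c}$ (the role played by $\vec{a}_0$ in Theorem \ref{theo:deeprelu_rankone} is taken by the scalar $1$). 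Applying Theorem \ref{theo:deeprelu_rankone} then yields an optimal triple $(\eta_j,\bias_j,\weightscalar_{L,j})$, and assembling it with the rank-one intermediate weights reproduces the weight form asserted in Theorem \ref{theo:deeprelu}.

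To finish the proof I would close the duality gap by lifting the two-layer dual certificate of Theorem \ref{theo:deeprelu_rankone} through the intermediate layers. Since every active pre-activation lies along $\vec{c}$ and the intermediate layers only apply positive scalar factors with freely chosen non-negative unit directions $\boldsymbol{\phi}_{l,j}$, the dual optimal $\dual^{*}$ produced by the two-layer analysis automatically satisfies the deep dual constraint with complementary slackness, giving $P^*=D^*$. The main obstacle is verifying that the inner maximization cannot be increased by choosing intermediate weights whose pre-activations leave $\cone(\vec{c},\vec{1}_n)$: here the assumption $\vec{c}\in\mathbb{R}_+^n$ is essential, because it is what prevents the ReLU from rectifying sign-mixed signals into new active directions once the bias shift $\bias_j\vec{1}_n$ is added, keeping the effective dual problem one-dimensional and reducing it to the certificate already constructed for the two-layer case.
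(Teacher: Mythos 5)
Your proposal is correct and follows essentially the same route as the paper: the paper likewise reduces the dual extreme-point problem $\argmax|\dual^{*T}(\vec{A}_{L-2,j}\weight_{L-1,j}+\bias_j\vec{1}_n)_+|$ to the two-layer-with-bias analysis of Theorem \ref{theo:deeprelu_rankone} by exploiting that $\vec{c}\in\mathbb{R}_+^n$ forces every activation to remain of the rank-one form $\vec{c}\vec{a}_{l,j}^T$ (for arbitrary intermediate weights, not only the aligned candidate), after which the remaining layers solve the same $\|\vec{a}_{L-2,j}\|_2$-maximization as in Theorem \ref{theo:deeprelu}. The only difference is organizational — you plug in the aligned candidate first and then certify it, while the paper optimizes the last hidden layer and bias for arbitrary intermediate activations before reducing to the earlier problem — and the "obstacle" you flag at the end is resolved by exactly the rank-one propagation fact you name.
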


\begin{restatable}{cor}{cormultilayerrankonekinks}\label{cor:multilayer_rankone_kinks}
As a result of  Theorem \ref{theo:deeprelu_rankone} and Proposition \ref{prop:effect_bias}, for one dimensional data, i.e., $\sample \in \mathbb{R}^n$, the optimal network output has kinks only at the input data points, i.e., the output function is in the following form: $f_{\theta,L}(\hat{\samplescalar})=\sum_i\relu{\hat{\samplescalar}-\samplescalar_i}$. Therefore, the optimal network output is a linear spline interpolation.
\end{restatable}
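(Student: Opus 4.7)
The plan is to reduce the $L$-layer ReLU network on one-dimensional data to an equivalent two-layer network by exploiting the rank-one structure of the optimal hidden-layer weights, and then invoke Corollary \ref{cor:rankone_kinks}. First, I would observe that if $\sample \in \mathbb{R}^n$ is one-dimensional data, then the corresponding data matrix $\data = \sample \cdot 1$ is rank-one, so Theorem \ref{theo:deeprelu} applies with $\vec{c}$ proportional to $\sample$ and $\vec{a}_0 \in \mathbb{R}$ a scalar. The extension of Theorem \ref{theo:deeprelu} provided by Proposition \ref{prop:effect_bias} allows a bias term in the last hidden layer, which turns out to be essential for producing the kinks.

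Second, I would unwind the structure of the optimal layer weights from Theorem \ref{theo:deeprelu}. For each neuron path $j$ and each intermediate layer $l \in [L-2]$, the weight matrix $\weightmat_{l,j}^{*} = \boldsymbol{\phi}_{l-1,j}/\|\boldsymbol{\phi}_{l-1,j}\|_2 \cdot \boldsymbol{\phi}_{l,j}^{T}$ has rank one with a nonnegative right factor $\boldsymbol{\phi}_{l,j} \in \mathbb{R}_+^{m_l}$. Propagating an arbitrary test input $\hat{\samplescalar}$ through this chain shows that the preactivation entering the final ReLU along path $j$ is a nonnegative scalar multiple of $\hat{\samplescalar}$, call it $\alpha_j \hat{\samplescalar}$, because each rank-one nonnegative transformation scales the scalar argument by a nonnegative factor and, by the positivity of $\boldsymbol{\phi}_{l,j}$, ReLU acts as the identity on these intermediate quantities. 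With the bias from Proposition \ref{prop:effect_bias}, the $j$-th neuron contributes $\relu{\alpha_j \hat{\samplescalar} + \bias_j}$ times the output weight, so the whole network collapses to $f_{\theta,L}(\hat{\samplescalar}) = \sum_{j=1}^{m} \weightscalar_{L,j} \relu{\alpha_j \hat{\samplescalar} + \bias_j}$, which is exactly a two-layer ReLU network evaluated at $\hat{\samplescalar}$.

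Third, I would apply Theorem \ref{theo:deeprelu_rankone} and Corollary \ref{cor:rankone_kinks} to this reduced two-layer representation. Since the training interpolation constraints $f_{\theta,L}(\samplescalar_i) = y_i$ of the original problem coincide with those of the reduced two-layer problem, the minimum-norm optimality at the deep scale forces the same extremal configuration at the reduced scale: each kink location $-\bias_j/\alpha_j$ is pinned to some $\samplescalar_i$. Consequently the optimal output admits the piecewise-linear representation $f_{\theta,L}(\hat{\samplescalar}) = \sum_i c_i \relu{\hat{\samplescalar}-\samplescalar_i}$ plus an affine correction, establishing the claim.

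The main obstacle is that Theorem \ref{theo:deeprelu} is stated with $\vec{c} \in \mathbb{R}_+^{n}$, whereas the entries of $\sample$ can have arbitrary signs. I would handle this by writing the rank-one factorization $\data = \vec{c}\vec{a}_0^{T}$ with $\vec{a}_0 \in \mathbb{R}$ of appropriate sign so that $\vec{c} = \sample / \vec{a}_0 \in \mathbb{R}_+^{n}$ on indices of one sign, and by splitting $\sample$ into its nonnegative and nonpositive parts (handled by mirrored copies of the network via $s_i = \pm 1$ as in Theorem \ref{theo:deeprelu_rankone}) so that the two subsets are treated separately. A secondary subtlety is verifying that the rank-one propagation really yields $\alpha_j \ge 0$, so that ReLU does not truncate the signal before reaching the last hidden layer; this is immediate from $\boldsymbol{\phi}_{l,j} \in \mathbb{R}_+^{m_l}$ in Theorem \ref{theo:deeprelu}, so once applicability of that theorem is established the reduction goes through cleanly.
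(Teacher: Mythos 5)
Your proposal is correct and follows essentially the same route as the paper: propagate the scalar input through the rank-one, sign-aligned weights of Theorem \ref{theo:deeprelu}, observe that the nonnegativity of the $\boldsymbol{\phi}_{l,j}$ makes each intermediate ReLU act as the identity so the last hidden layer's preactivation is a nonnegative multiple $\alpha_j\hat{\samplescalar}$ of the input, and read off from Proposition \ref{prop:effect_bias} that the optimal bias equals $-\samplescalar_i\alpha_j$ for some data index $i$, pinning every kink to a data point. The paper presents this as a direct evaluation of the activations at the inputs $c_i\vec{a}_0$ rather than as an explicit collapse to a two-layer network followed by Corollary \ref{cor:rankone_kinks}, but the content is the same.
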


In Corollary \ref{cor:rankone_kinks} and \ref{cor:multilayer_rankone_kinks}, the optimal output function for multi-layer ReLU networks are linear spline interpolators for rank-one data, which generalizes the two-layer results for one-dimensional data in \cite{infinite_width,parhi_minimum,ergen2020aistats,ergen2020journal} to arbitrary depth.


\subsection{Regularized problem with vector outputs}
We now extend the analysis to regularized training problems with $K$ outputs, i.e., $\vec{Y} \in \mathbb{R}^{n \times K}$.

The result in Theorem \ref{theo:deeprelu} also holds for vector output multi-layer ReLU networks as shown below.
\begin{restatable}{prop}{propdeeprelustrongdualityvector}\label{prop:deeprelu_strong_duality_vector}
Theorem \ref{theo:deeprelu} extends to deep ReLU networks with vector outputs, therefore, the optimal layer weights can be formulated as in Theorem \ref{theo:deeprelu}.
\end{restatable}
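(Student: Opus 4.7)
The plan is to mirror the argument behind Theorem \ref{theo:deeprelu}, but with matrix-valued dual variables as in Theorem \ref{theo:deep_weights_vector} and Theorem \ref{theo:strong_duality_deep_vector}. The key observation, which makes the extension essentially free, is that the rank-one hypothesis $\data = \vec{c}\vec{a}_0^T$ with $\vec{c}\in\mathbb{R}_+^n$ trivializes the interaction between the ReLU nonlinearities and the $K$-dimensional output: every hidden activation is forced to be a nonnegative scalar multiple of the fixed vector $\vec{c}$.

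First, I would write down the vector-output analogue of the training problem and apply a scaling lemma (the natural analogue of Lemma \ref{lemma:scaling_deep_overview}, with $\|\weight_L\|_1$ replaced by $\sum_{j=1}^m \|\weight_{L,j}\|_2$, exactly as in Section \ref{sec:regularized_deeplinear_vector}). Taking the dual with respect to the last-layer matrix $\weightmat_L$ yields a semi-infinite program of the form
\begin{align*}
 D^* = \min_{\{t_j\}}\max_{\dualmat}\; -\mathcal{L}^*(\dualmat) + \tfrac{\beta}{2}(L-2)\sum_j t_j^2 \quad \text{s.t.}\quad \sigma_{\max}\!\bigl(\vec{A}_{L-1,j}^T \dualmat\bigr) \le \beta,
\end{align*}
where the max over inner weights uses the constraints $\weight_{L-1,j}\in\ball_2$ and $\|\weightmat_{l,j}\|_F\le t_j$ for $l\in[L-2]$. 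This is the precise vector-output analogue of the semi-infinite dual that underlies Theorem \ref{theo:deeprelu}, with operator norm in place of $\ell_\infty$.

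Next, I would exploit the rank-one structure of $\data$ to reduce the dual constraint to one involving a single vector. Since $\vec{c}\in\mathbb{R}_+^n$, an induction on $l$ shows $\vec{A}_{l,j}=\alpha_{l,j}\vec{c}$ with $\alpha_{l,j}\ge 0$: the base case is $\vec{A}_{0,j}=\data=\vec{c}\vec{a}_0^T$, and the inductive step uses $(\alpha_{l-1,j}\vec{c}\,\vec{z}^T)_+ = \alpha_{l-1,j}\vec{c}(\vec{z})_+^T$ componentwise, valid because $\vec{c}\ge 0$. Hence $\vec{A}_{L-1,j}=\alpha_{L-1,j}\vec{c}$ for every $j$, and the dual constraint collapses to $\alpha_{L-1,j}\|\vec{c}^T\dualmat\|_2\le \beta$. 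Maximizing $\alpha_{L-1,j}$ over the inner weights subject to the norm bounds is exactly the same alignment computation carried out in the proof of Theorem \ref{theo:deeprelu}, and it is resolved by taking each $\weightmat_{l,j}$ of the outer-product form $\frac{\boldsymbol{\phi}_{l-1,j}}{\|\boldsymbol{\phi}_{l-1,j}\|_2}\boldsymbol{\phi}_{l,j}^T$ with $\boldsymbol{\phi}_{l,j}\in\mathbb{R}_+^{m_l}$, so that no ReLU is ever activated non-trivially. The only change relative to the scalar case is that the scalar dual variable is replaced by the matrix $\dualmat$ and the norm bound is $\|\vec{c}^T\dualmat\|_2\le\beta$.

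To close the argument, I would construct a feasible primal that meets the dual value, thereby establishing strong duality. Choosing the layer weights as in the statement and letting the last-layer matrix $\weightmat_L\in\mathbb{R}^{m\times K}$ play the role analogous to the vector output case of Theorem \ref{theo:strong_duality_deep_vector}, the gradient/KKT conditions match: on the primal side the objective becomes $\tfrac{1}{2}\|\sum_j \alpha_j \vec{c}\,\weight_{L,j}^T - \vec{Y}\|_F^2 + \beta\sum_j\|\weight_{L,j}\|_2$, a convex group-Lasso problem in $\{\weight_{L,j}\}$ whose dual precisely reproduces the single remaining constraint $\|\vec{c}^T\dualmat\|_2\le\beta$ together with $\dualmat=\vec{Y}-\sum_j\alpha_j\vec{c}\,\weight_{L,j}^{*T}$; matching these recovers $P^*=D^*$. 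The one technical step requiring care, and the main obstacle, is checking that the outer-product form for the hidden layers remains simultaneously optimal across all $K$ output coordinates; this is handled exactly as in Remark \ref{rem:twolayer_nonunique} and Theorem \ref{theo:deep_weights_vector}, because the rank-one reduction forces a single shared direction $\vec{c}$ at the top hidden layer regardless of $K$, so no conflict between output coordinates can arise.
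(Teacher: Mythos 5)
Your proposal is correct and follows essentially the same route as the paper's proof: take the dual with respect to the output layer so the constraint becomes $\|\dualmat^T(\vec{A}_{L-2,j}\weight_{L-1,j})_+\|_2\le\beta$, use $\vec{c}\in\mathbb{R}_+^n$ to factor every activation as $\vec{A}_{l,j}=\vec{c}\,\vec{a}_{l,j}^T$ so the constraint collapses to $\|\dualmat^T\vec{c}\|_2\,\|\vec{a}_{L-2,j}\|_2\le\beta$, and then invoke the scalar-output alignment argument of Theorem \ref{theo:deeprelu}. The only cosmetic imprecision is writing the intermediate activations as $\alpha_{l,j}\vec{c}$ rather than as rank-one matrices $\vec{c}\,\vec{a}_{l,j}^T$ (only the last hidden layer's output is a scalar multiple of $\vec{c}$), but your inductive step is stated correctly and the argument goes through.
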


\begin{figure*}[t]
\centering
\captionsetup[subfigure]{oneside,margin={1cm,0cm}}
	\begin{subfigure}[t]{0.25\textwidth}
	\centering
	\includegraphics[width=1.0\textwidth, height=1\textwidth]{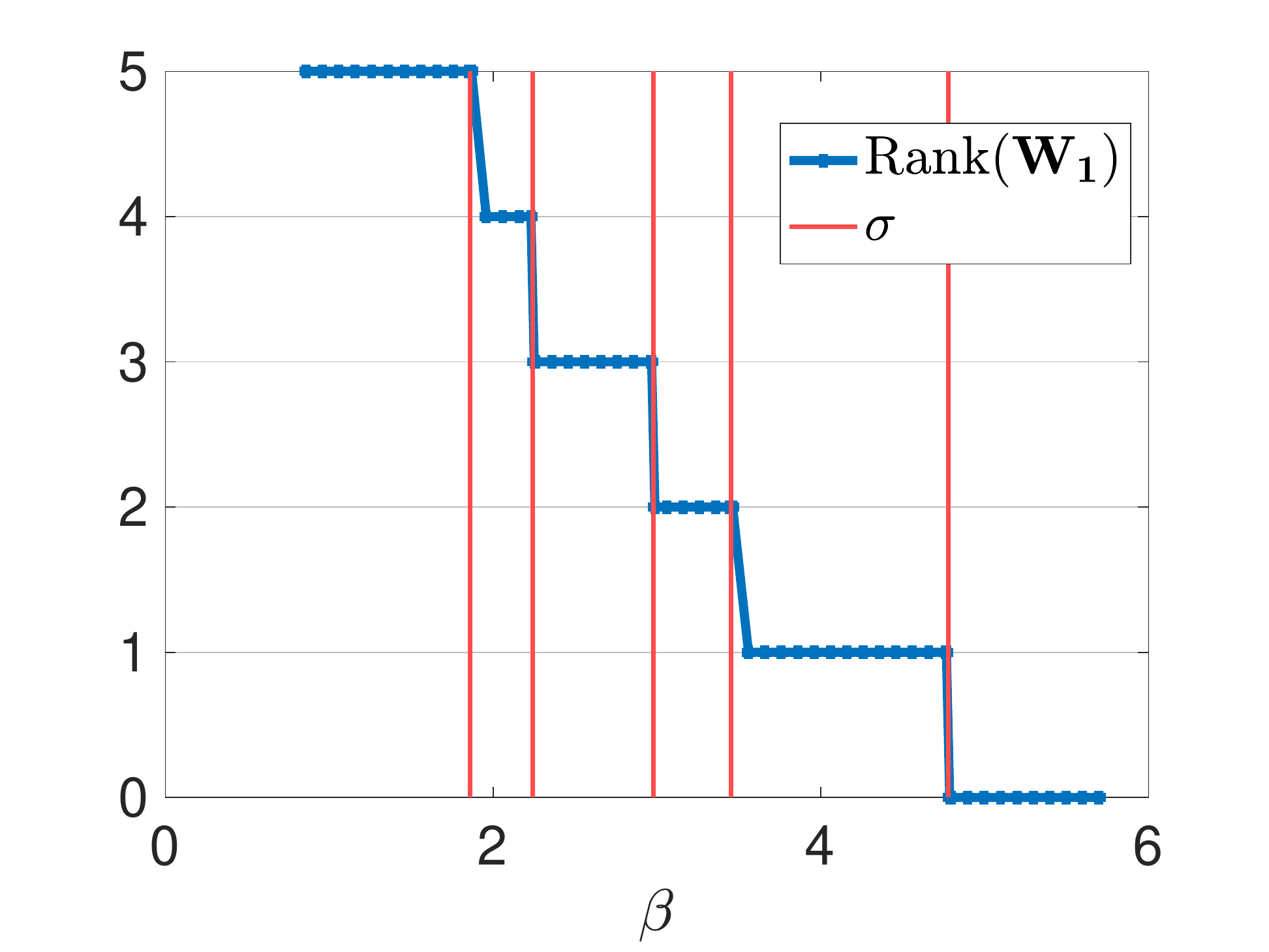}
	\caption{\centering} \label{fig:twolayer_rank}
\end{subfigure} 
	\begin{subfigure}[t]{0.70\textwidth}
	\centering
	\includegraphics[width=1\textwidth, height=0.35\textwidth]{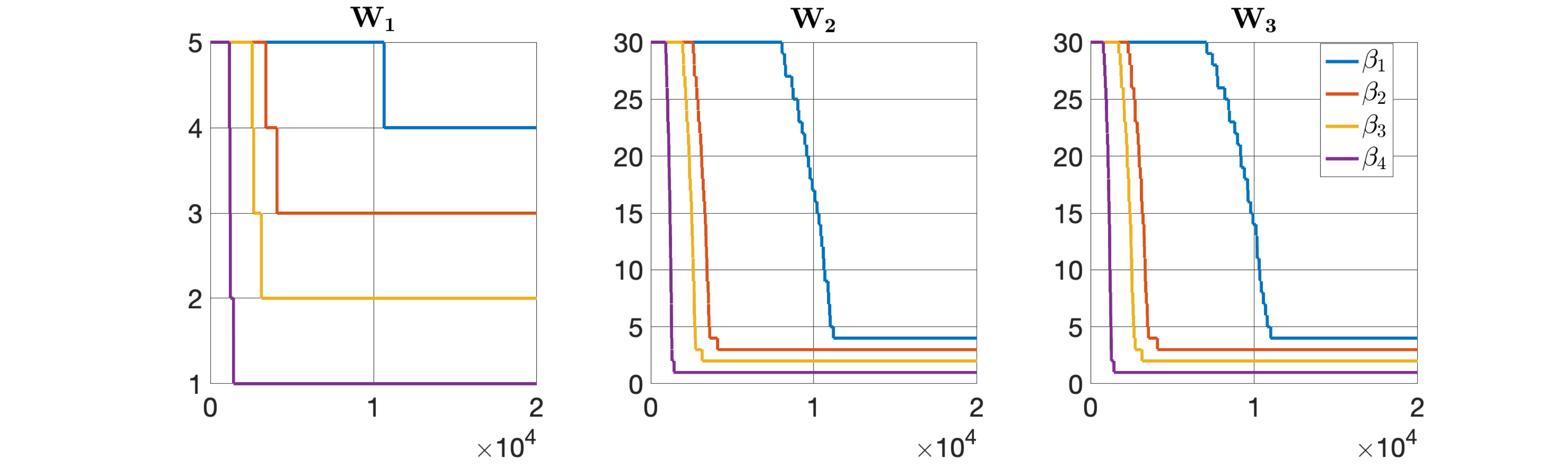}
	\caption{\centering} \label{fig:deeprank_linear}
\end{subfigure} \hspace*{\fill}\vspace{0.1in}
\caption{Verification of Remark \ref{rem:twolayer_rank}. (a) Rank of the hidden layer weight matrix as a function of $\beta$ and (b) rank of the hidden layer weights for different regularization parameters, i.e., $\beta_1<\beta_2<\beta_3<\beta_4$.}\label{fig:linear}
\end{figure*}

\begin{figure*}[t]
\centering
\captionsetup[subfigure]{oneside,margin={1cm,0cm}}
	\begin{subfigure}[t]{0.45\textwidth}
	\centering
	\includegraphics[width=.85\textwidth, height=0.5\textwidth]{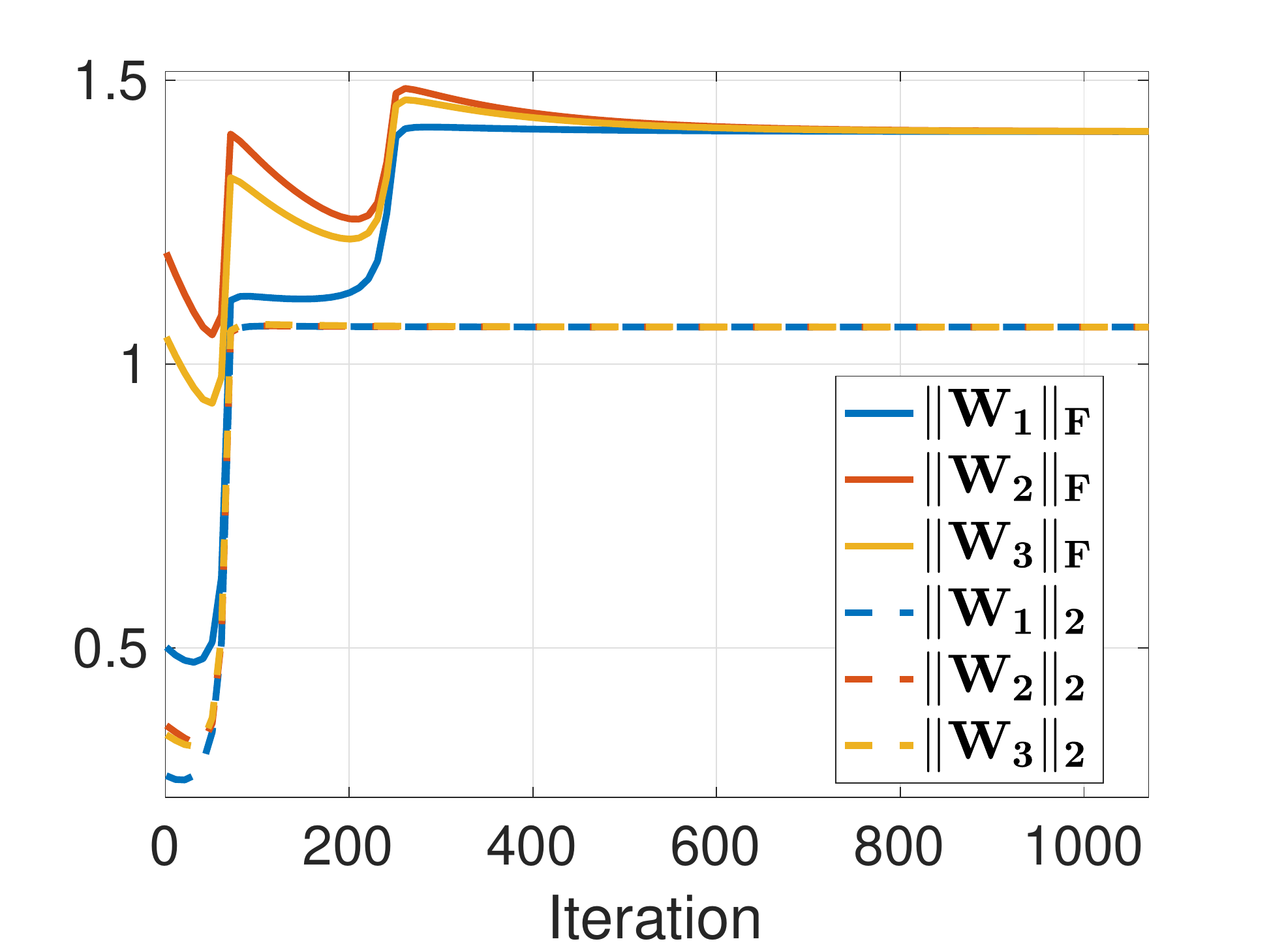}
	\caption{\centering} \label{fig:deepnorm_linear}
\end{subfigure} \hspace*{\fill}
	\begin{subfigure}[t]{0.45\textwidth}
	\centering
	\includegraphics[width=.85\textwidth, height=0.5\textwidth]{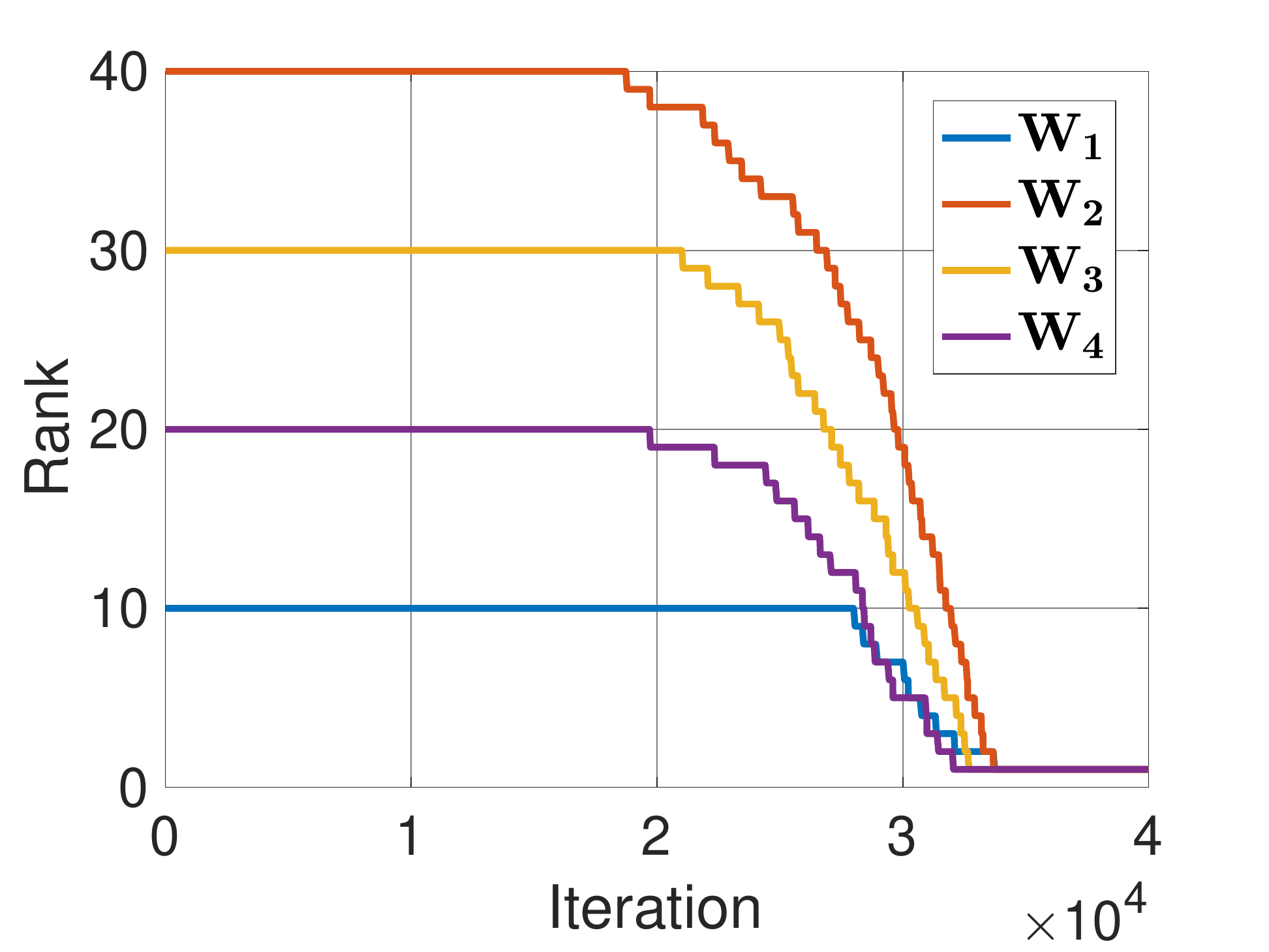}
	\caption{\centering} \label{fig:deeprank_relu}
\end{subfigure} \hspace*{\fill}\vspace{0.1in}
\caption{Verification of Proposition \ref{prop:deep_weightnorm_eq} and \ref{prop:effect_bias}. (a) Evolution of the operator and Frobenius norms for the layer weights of a linear network and (b) Rank of the layer weights of a ReLU network with $K=1$.}\label{fig:newfig}
\end{figure*} 


Now, we extend our characterization to arbitrary rank whitened data matrices and fully characterize the optimal layer weights of a deep ReLU network with $K$ outputs. We also note that one can even obtain closed-form solutions for all the layers weights as proven in the next result.

\begin{restatable}{theo}{theodeepreluwhitevectorclosedform} \label{theo:closedform_regularized_multiclass}
Let $\{\data,\vec{Y}\}$ be a dataset such that $\data\data^T=\vec{I}_n$ and $\vec{Y}$ is one-hot encoded, then a set of optimal solutions for the following regularized training problem
\begin{align} \label{eq:problem_def_regularized_multiclass}
       &\min_{\theta \in \Theta } \frac{1}{2}\|f_{\theta,L}(\data)-\vec{Y}\|_F^2+\frac{\beta}{2}\sum_{j=1}^m\sum_{l=1}^L \|\weightmat_{l,j}\|_F^2
\end{align}
can be formulated as follows
\begin{align*}
    & \weightmat_{l,j}=  \begin{cases}\frac{\boldsymbol{\phi}_{l-1,j}}{\| \boldsymbol{\phi}_{l-1,j}\|_2}\boldsymbol{\phi}_{l,j}^T, \; &\text{ if } l \in [L-1]\\
 \left(\| \boldsymbol{\phi}_{0,j}\|_2-\beta\right)_+\boldsymbol{\phi}_{l-1,j} \vec{e}_r^T\;&\text{ if } l=L\\
    \end{cases},
\end{align*}
 where $\boldsymbol{\phi}_{0,j}=\data^T \vec{y}_j$, $\{\boldsymbol{\phi}_{l,j}\}_{l=1}^{L-2}$ are vectors such that $\boldsymbol{\phi}_{l,j}\in \mathbb{R}_+^{m_l}$, $\|\boldsymbol{\phi}_{l,j}\|_2=t_j^* \text{, and } \boldsymbol{\phi}_{l,i}^T\boldsymbol{\phi}_{l,j}=0, \; \forall i\neq j$, Moreover, $\boldsymbol{\phi}_{L-1,j}=\vec{e}_j$ is the $j^{th}$ ordinary basis vector.
\end{restatable}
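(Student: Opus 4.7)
The plan is to build on Proposition \ref{prop:deeprelu_strong_duality_vector}, which extends Theorem \ref{theo:deeprelu} to the vector-output setting, and then specialize to the whitened, one-hot case. First I would apply the scaling transformation of Lemma \ref{lemma:scaling_deep_overview} to \eqref{eq:problem_def_regularized_multiclass}, push the minimization over $\weightmat_L$ to a maximization over its dual $\dualmat \in \reals^{n\times K}$, and obtain a dual of the form
\begin{align*}
D^* = \min_{\{t_j\}} \max_{\dualmat} -\tfrac{1}{2}\|\dualmat - \vec{Y}\|_F^2 + \tfrac{1}{2}\|\vec{Y}\|_F^2 + \tfrac{\beta}{2}(L-2)\sum_j t_j^2
\end{align*}
subject to $\max_{\weight_{L-1,j}\in\ball_2,\,\|\weightmat_{l,j}\|_F\le t_j}\|\vec{A}_{L-1,j}^T\dualmat\|_\infty \le \beta$. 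By Proposition \ref{prop:deeprelu_strong_duality_vector} strong duality holds, so it suffices to exhibit primal/dual candidates that make the bound tight.

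Next I would exploit the two structural assumptions simultaneously. Because $\vec{Y}$ is one-hot, its columns $\vec{y}_1,\ldots,\vec{y}_K$ have disjoint support and satisfy $\vec{y}_i^T\vec{y}_j = 0$ for $i\neq j$. Because $\data\data^T = \vec{I}_n$, the vectors $\boldsymbol{\phi}_{0,j} \defn \data^T\vec{y}_j$ are likewise mutually orthogonal, and crucially $\data\,\boldsymbol{\phi}_{0,j} = \data\data^T\vec{y}_j = \vec{y}_j \in \reals_+^n$. Thus the first-layer ReLU acts as the identity on the candidate neuron direction $\boldsymbol{\phi}_{0,j}/\|\boldsymbol{\phi}_{0,j}\|_2$, and the activation it produces is exactly $\vec{y}_j$, which is nonnegative. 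Propagating through the claimed alignment $\weightmat_{l,j} = \boldsymbol{\phi}_{l-1,j}\boldsymbol{\phi}_{l,j}^T/\|\boldsymbol{\phi}_{l-1,j}\|_2$ with $\boldsymbol{\phi}_{l,j}\in\reals_+^{m_l}$ keeps every intermediate activation nonnegative (again the ReLU is inert) and yields $\vec{A}_{L-1,j} = \vec{y}_j$. Under this choice the loss decouples across classes into $K$ independent scalar problems
\begin{align*}
\min_{\alpha_j\in\reals}\ \tfrac{1}{2}\|\alpha_j\vec{y}_j - \vec{y}_j\|_2^2 + \beta |\alpha_j|,
\end{align*}
where the $\ell_1$ penalty comes from the scaling lemma and the factor $\|\vec{y}_j\|_2^2 = \|\boldsymbol{\phi}_{0,j}\|_2^2$ (using whitening). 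Soft-thresholding gives $\alpha_j^* = (\|\boldsymbol{\phi}_{0,j}\|_2 - \beta)_+$, which is precisely the scalar attached to the last-layer weight in the statement, and $\boldsymbol{\phi}_{L-1,j} = \vec{e}_j$ routes this contribution to the correct output coordinate.

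To prove optimality I would construct a matching dual certificate $\dualmat^* = \vec{Y} - \fhat(\data)$, i.e.\ the $j$-th column equals $\vec{y}_j - \alpha_j^*\vec{y}_j$. By orthogonality of the $\vec{y}_j$'s, the feasibility constraint $\|\vec{A}_{L-1,j}^T\dualmat^*\|_\infty\le\beta$ reduces to a per-class check that the extreme point achieves equality exactly when $\alpha_j^*>0$, giving complementary slackness against the primal weights. Together with the KKT-style balancing between successive layer norms (inherited from the proof of Theorem \ref{theo:deeprelu} and Proposition \ref{prop:deep_weightnorm_eq}), this closes the gap $P^*=D^*$ at the candidate solution, proving it is optimal.

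The main obstacle is justifying the decoupling across the $K$ output classes: a priori, hidden neurons could be shared between classes, and the constraint $\max_{\weight}\|\vec{A}_{L-1,j}^T\dualmat\|_\infty\le\beta$ couples all columns of $\dualmat$ through a common maximization. The key technical step is showing that orthogonality of $\boldsymbol{\phi}_{0,1},\ldots,\boldsymbol{\phi}_{0,K}$ together with nonnegativity of $\vec{y}_j$ makes it optimal to allocate disjoint neurons to each class, so the rank-one analysis of Theorem \ref{theo:deeprelu} applies class by class; I would verify this by comparing the achieved objective to the per-class lower bound obtained from Remark \ref{rem:twolayer_rank} extended to depth $L$, and by checking that no convex combination of cross-class neurons can reduce the norm sum.
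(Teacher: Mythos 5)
Your proposal follows essentially the same route as the paper's proof: rescale and dualize, use whitening and one-hot labels so that the candidate first-layer direction $\data^T \vec{y}_j$ renders every ReLU inert and propagates a scaled $\vec{y}_j$ to the last hidden layer, obtain the last-layer weights by per-class soft-thresholding, and close the gap by exhibiting the residual $\vec{Y}-\fhat(\data)$ as the dual certificate and matching the primal and dual objective values (which is exactly how the paper verifies optimality). The only blemish is invoking Proposition \ref{prop:deeprelu_strong_duality_vector} (stated for rank-one data) to assert strong duality up front, but this is harmless since your exhibited primal--dual pair with equal objectives establishes strong duality directly.
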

\bremark 
We note that the whitening assumption $\data\data^T=\vec{I}_n$ necessitates that $n \leq d$, which might appear to be restrictive. However, this case is common in few-shot classification problems with limited labels \cite{fewshot}. Moreover, it is challenging to obtain reliable labels in problems involving high dimensional data such as in medical imaging \cite{highdim_medical} and genetics \cite{highdim_bio}, where $n\leq d$ is typical. More importantly, SGD employed in deep learning frameworks, e.g., PyTorch and Tensorflow, operate in mini-batches rather than the full dataset. Therefore, even when $n>d$, each gradient descent update can only be evaluated on small batches, where the batch size $n_b$ satisfies $n_b \ll d$. Hence, the $n\leq d$ case implicitly occurs during the training phase.
\eremark
\bremark
We also note that the conditions in Theorem \ref{theo:closedform_regularized_multiclass} are common in practical frameworks. As an example, for image classification, it has been shown that whitening significantly improves the classification accuracy of the state-of-the-art architectures, e.g., ResNets, on benchmark datasets such as ImageNet \cite{whitening}. Furthermore, the label matrix is one hot encoded in image classification. Therefore, in such cases, there is no need to train a deep ReLU network in an end-to-end manner. Instead one can directly use the closed-form formulas in Theorem \ref{theo:closedform_regularized_multiclass}. 
\eremark

\subsection{Regularized problem with Batch Normalization}
We now consider a more practical setting with an arbitrary $L$-layer network and batch normalization \cite{batch_norm}. We first define batch normalization as follows. For the activation matrix $\vec{A}_{l-1} \in \mathbb{R}^{n \times m_{l-1}}$, batch normalization applies to each column $j$ independently as follows
\begin{align*}
    &\bn{\vec{A}_{l-1,j} \weight_{l,j}}= \\
    &\hspace{2cm}\frac{(\vec{I}_{n}-\frac{1}{n}\vec{1}_{n\times n} ) \vec{A}_{l-1,j}\weight_{l,j}}{\|(\vec{I}_{n}-\frac{1}{n}\vec{1}_{n\times n} ) \vec{A}_{l-1,j}\weight_{l,j}\|_2}\bnvarl{l}_j +\frac{\vec{1}_n}{\sqrt{n}}\bnmeanl{l}_j,
\end{align*}
where $\bnvarl{l}_j$ and $\bnmeanl{l}_j$ scales and shifts the normalized value, respectively. The following theorem presents a complete characterization for the last two layers' weights.

\begin{restatable}{theo}{theodeeprelubatchnormclosedform}\label{theo:deep_vector_closedform}
Suppose $\vec{Y}$ is one hot encoded and the network is overparameterized such that the range of $\vec{A}_{L-2,j}$ is $\mathbb{R}^{n}$, then an optimal solution to the following problem\footnote{Notice here we only regularize the last layer's parameters, however, regularizing all the parameters does not change the analysis and conclusion as proven in Appendix \ref{sec:supp_regularization}.}
\begin{align*}
    &\min_{\theta \in \Theta} \frac{1}{2} \normf{ \sum_{j=1}^{m}\relu{\bn{\vec{A}_{L-2,j} \weight_{L-1,j}}}{\weight_{L,j}}^T- \vec{Y} }^2 \nonumber\\
    &+ \frac{\beta}{2} \sum_{j=1}^{m}\left({\bnvarl{L-1}_j}^2+{\bnmeanl{L-1}_j}^2+\norm{\weight_{L,j}}^2 \right),
\end{align*}
can be formulated in closed-form as follows
\begin{align*}
   \begin{split}
       &\left(\weight_{L-1,j}^* ,\weight_{L,j}^*\right)= 
   \left(\vec{A}_{L-2,j}^\dagger \vec{y}_j, \left(\|\vec{y}_j\|_2-\beta\right)_+\vec{e}_j\right) 
     \\
      &\begin{bmatrix} {\bnvarl{L-1}_j}^* \\{ \bnmeanl{L-1}_j}^*\end{bmatrix} =\frac{1}{\|\vec{y}_j\|_2}\begin{bmatrix} \|\vec{y}_j-\frac{1}{n}\vec{1}_{n\times n}\vec{y}_j\|_2\\ \frac{1}{{\sqrt{n}} }\vec{1}_n^T\vec{y}_j \end{bmatrix}    
   \end{split}
\end{align*}
$\forall j \in [K]$, where $\vec{e}_j$ is the $j^{th}$ ordinary basis vector.
\end{restatable}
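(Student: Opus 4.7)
The plan is to exploit the scale invariance of batch normalization to collapse the last two layers into an equivalent two-layer ReLU training problem with an unrestricted hidden-layer vector, and then apply the convex-duality machinery developed earlier in the paper. Since BN is invariant to positive rescaling of its argument and (as in Appendix~\ref{sec:supp_regularization}) the magnitude of $\weight_{L-1,j}$ can be absorbed into the BN scale, the only relevant quantity is the pre-normalization vector $\vec{z}_j := \vec{A}_{L-2,j}\weight_{L-1,j}$. Because $\vec{A}_{L-2,j}$ has range $\reals^n$ under the overparameterization assumption, $\vec{z}_j$ may be freely chosen in $\reals^n$, with the minimum-norm realization $\weight_{L-1,j}=\vec{A}_{L-2,j}^\dagger \vec{z}_j$.

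Next I parameterize the BN output directly: writing $\vec{v}_j = \bnvarl{L-1}_j \vec{u}_j + \frac{\bnmeanl{L-1}_j}{\sqrt{n}}\vec{1}_n$ with $\vec{u}_j \in \reals^n$ a unit vector orthogonal to $\vec{1}_n$ yields a bijection between $\vec{v}_j \in \reals^n$ and the triple $(\bnvarl{L-1}_j,\bnmeanl{L-1}_j,\vec{u}_j)$, and since $\vec{u}_j$ and $\vec{1}_n/\sqrt{n}$ are orthonormal, $\|\vec{v}_j\|_2^2 = {\bnvarl{L-1}_j}^2 + {\bnmeanl{L-1}_j}^2$. The training problem therefore reduces to
\begin{align*}
\min_{\vec{v}_j,\weight_{L,j}} \tfrac{1}{2}\normf{\sum_{j=1}^m \relu{\vec{v}_j}\weight_{L,j}^T - \vec{Y}}^2 + \tfrac{\beta}{2}\sum_{j=1}^m\Big(\|\vec{v}_j\|_2^2 + \norm{\weight_{L,j}}^2\Big),
\end{align*}
which has the same form as a two-layer regularized ReLU problem whose hidden filters $\vec{v}_j$ range over all of $\reals^n$. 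Applying the AM--GM rescaling of Lemma~\ref{lemma:scaling_deep_overview} (ReLU is positively homogeneous) reduces the quadratic penalty to $\beta\sum_j \norm{\weight_{L,j}}$ under the unit-norm constraint $\|\vec{v}_j\|_2=1$.

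Taking the dual with respect to $\weight_{L,j}$ and observing that the image of $\relu{\vec{v}}$ over the unit sphere is precisely the nonnegative unit ball yields
\begin{align*}
D^* = \max_{\dualmat} -\tfrac{1}{2}\normf{\dualmat - \vec{Y}}^2 + \tfrac{1}{2}\normf{\vec{Y}}^2 \;\text{ s.t. }\; \max_{\vec{u}\ge 0,\,\norm{\vec{u}}\le 1} \norm{\dualmat^T \vec{u}} \le \beta.
\end{align*}
I certify optimality with the witness $\dualmat^* = \sum_j \min\{\beta,\norm{\vec{y}_j}\}\,\vec{y}_j\vec{e}_j^T/\norm{\vec{y}_j}$. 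The one-hot structure of $\vec{Y}$ is what makes this feasible: the columns $\vec{y}_j$ have disjoint supports $S_j$ partitioning $[n]$, so Cauchy--Schwarz gives $\norm{\dualmat^{*T}\vec{u}}^2 \le \beta^2 \sum_j |S_j|^{-1}(\sum_{i\in S_j}u_i)^2 \le \beta^2 \sum_j \sum_{i\in S_j} u_i^2 = \beta^2\norm{\vec{u}}^2 \le \beta^2$. Plugging the candidate primal solution $\vec{v}_j^* = \vec{y}_j/\norm{\vec{y}_j}$ and $\weight_{L,j}^* = (\norm{\vec{y}_j}-\beta)_+\vec{e}_j$ into the objective matches $D^*$ exactly in both cases $\norm{\vec{y}_j}\gtrless\beta$, closing the duality gap. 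Decomposing $\vec{v}_j^*$ into its $\vec{1}_n$-parallel and $\vec{1}_n$-perpendicular components then recovers the stated closed forms for $\bnvarl{L-1}_j^*$ and $\bnmeanl{L-1}_j^*$, and the minimum-norm preimage gives $\weight_{L-1,j}^* = \vec{A}_{L-2,j}^\dagger \vec{y}_j$.

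I expect the main obstacle to be making the scale-absorption and reparameterization step fully rigorous, namely checking that the correspondence $(\bnvarl{L-1}_j,\bnmeanl{L-1}_j,\vec{u}_j,\weight_{L,j})\leftrightarrow(\vec{v}_j,\weight_{L,j})$ preserves both the data-fit term and the regularizer even after the $\weight_{L-1,j}$-absorption, and identifying the precise role played by the one-hot structure in making the Cauchy--Schwarz bound tight. Without disjoint supports, the constant $\beta^2$ would be replaced by $\beta^2\sigma_{max}(\vec{Y})^2$-type quantities and the clean closed form would not survive.
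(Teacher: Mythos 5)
Your proposal is correct and its backbone is the same as the paper's: exhibit an explicit primal--dual pair and close the duality gap by matching objective values, with the dual certificate $\dualmat^*$ having columns $\min\{\beta,\|\vec{y}_j\|_2\}\vec{y}_j/\|\vec{y}_j\|_2$. Where you genuinely diverge is in the setup and in one verification step. First, you make the role of the overparameterization assumption explicit by reparameterizing the BN output as a free vector $\vec{v}_j\in\reals^n$ with $\|\vec{v}_j\|_2^2={\bnvarl{L-1}_j}^2+{\bnmeanl{L-1}_j}^2$, reducing the whole problem to a two-layer ReLU problem over unconstrained hidden filters; the paper instead works directly with the BN parameters inside the set $\Theta_s$ and never isolates this reduction, so your version makes it clearer why the range condition on $\vec{A}_{L-2,j}$ and the orthogonal split against $\vec{1}_n$ are exactly what is needed. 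Second, and more substantively, you actually verify dual feasibility of $\dualmat^*$ via Cauchy--Schwarz on the disjoint supports of the one-hot columns, showing $\|\dualmat^{*T}\vec{u}\|_2\le\beta$ over the nonnegative unit ball; the paper's proof states the candidate $\dualmat^*$ and computes the two objective values but never checks that $\dualmat^*$ satisfies the dual constraint, which is logically required for the sandwich $P(\text{cand})=D(\text{cand})\le D^*\le P^*\le P(\text{cand})$ to go through. So your argument is not just an alternative route but fills a genuine omission, and it also pinpoints where one-hot encoding is used (tightness of the Cauchy--Schwarz bound with disjoint supports), which the paper leaves implicit. One small imprecision: the image of $\relu{\vec{v}}$ over the unit sphere is not literally the whole nonnegative unit ball (strictly positive vectors of norm less than one are excluded), but since the constraint function is convex and the sup over the nonnegative unit ball is attained on the nonnegative unit sphere, the constraint sets coincide and your argument is unaffected.
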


\begin{figure*}[h]
        \centering
        \begin{subfigure}[b]{0.3\textwidth}
            \centering
            \includegraphics[width=\textwidth]{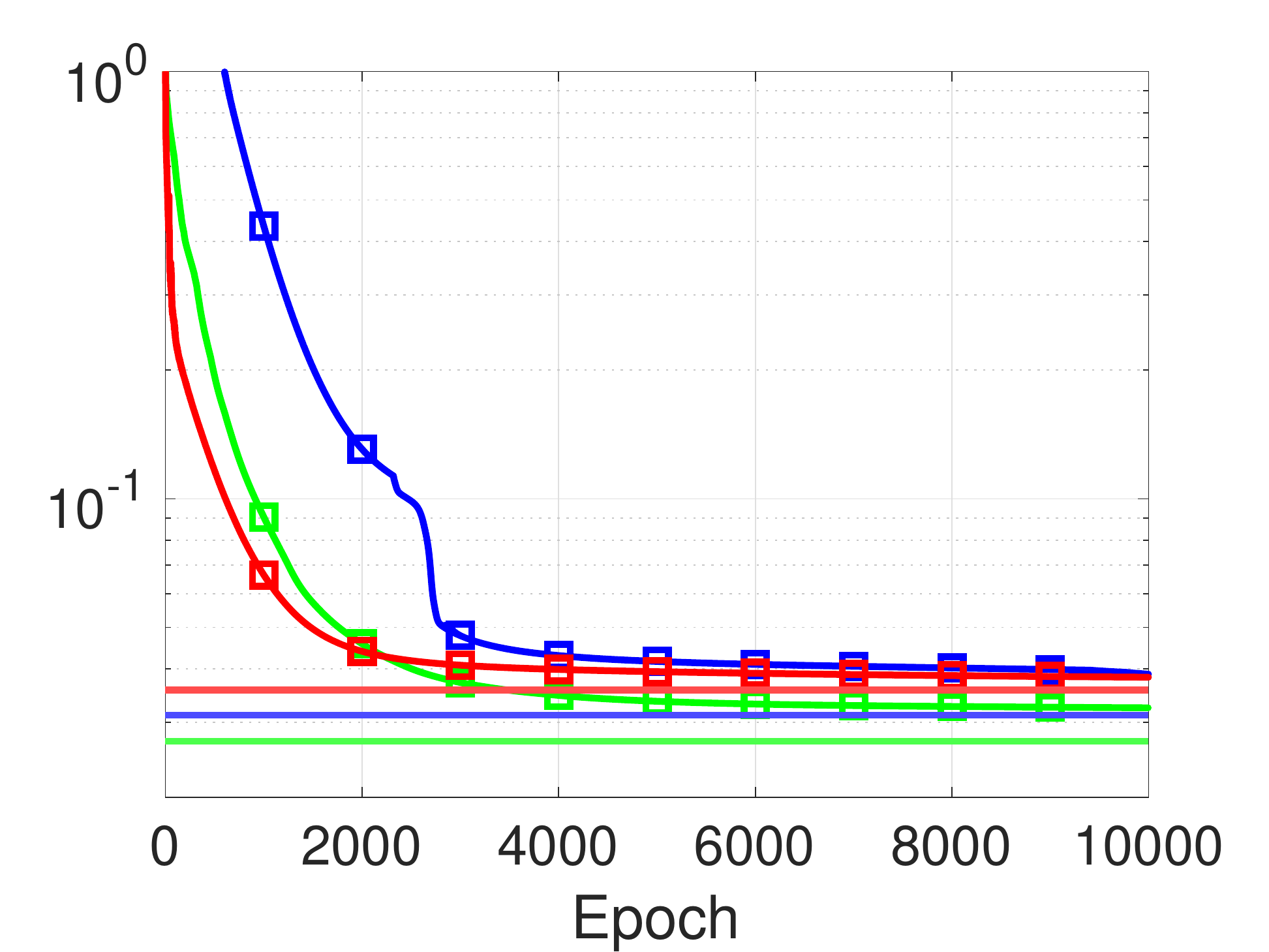}
            \caption{MNIST-Training objective}
        \end{subfigure}
        \hfill
        \begin{subfigure}[b]{0.3\textwidth}  
            \centering 
            \includegraphics[width=\textwidth]{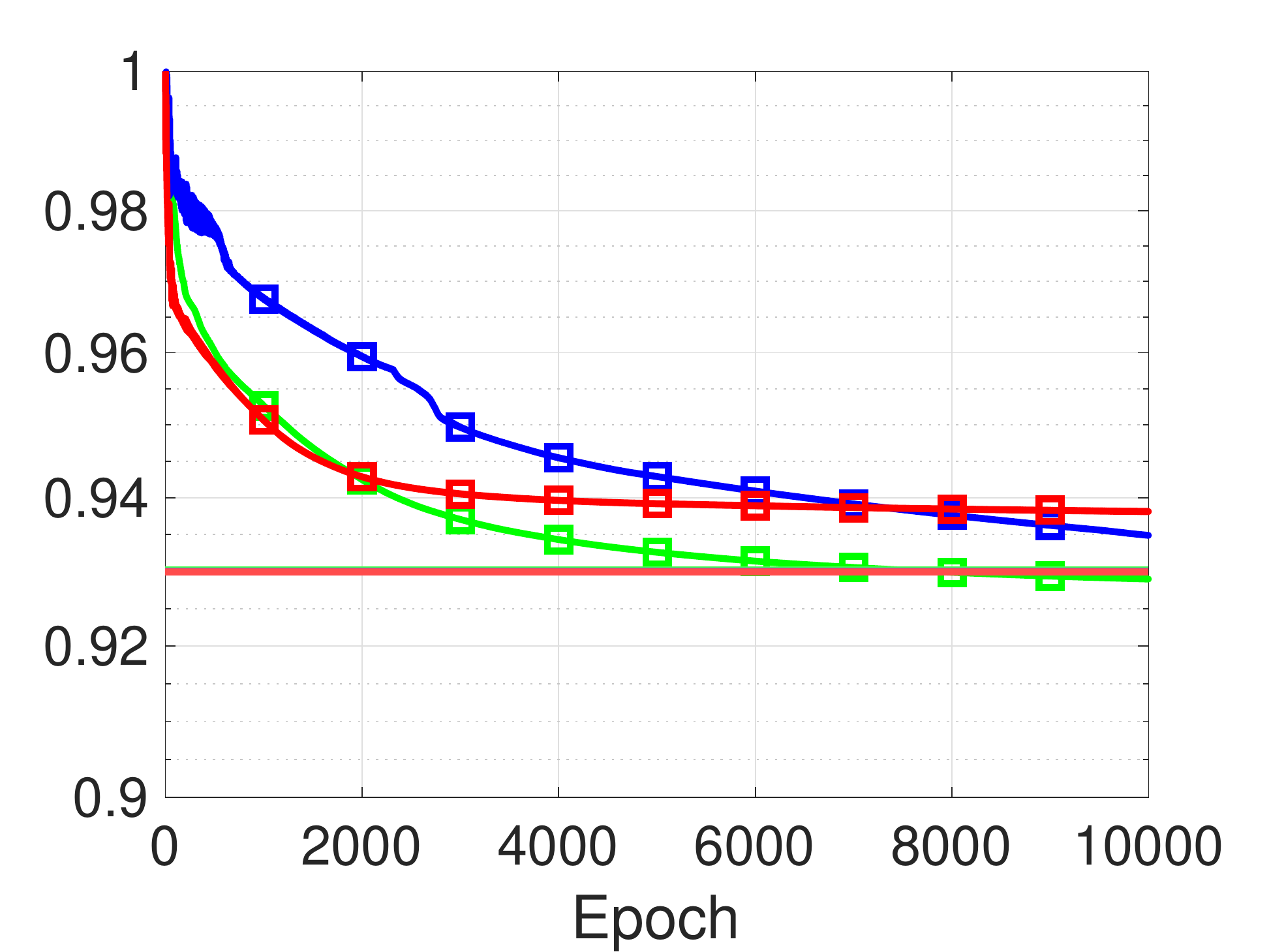}
            \caption{MNIST-Test error}
        \end{subfigure}        \hfill
        \begin{subfigure}[b]{0.3\textwidth}   
            \centering 
            \includegraphics[width=\textwidth]{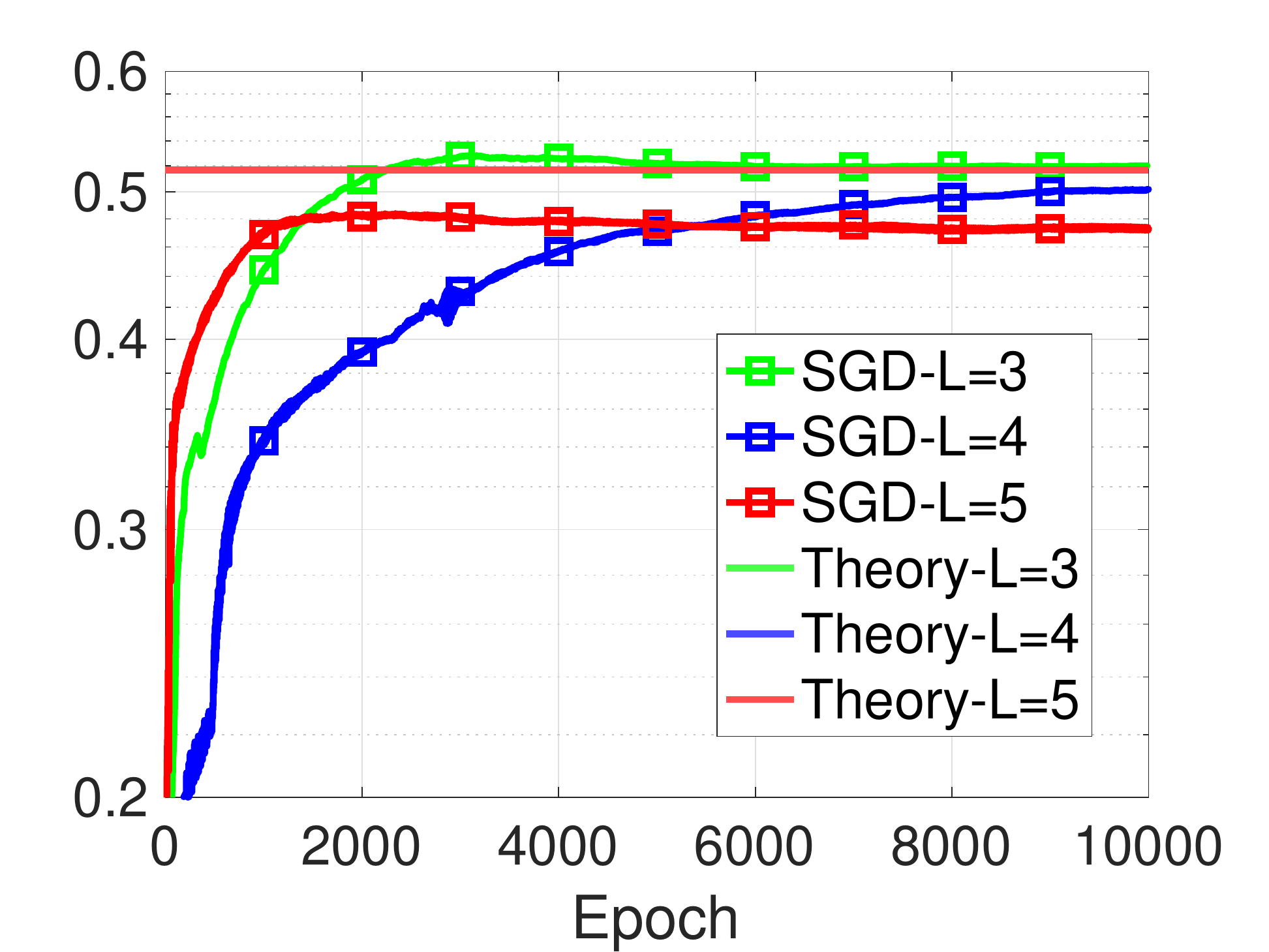}
            \caption{MNIST-Test accuracy}
        \end{subfigure}
        \centering
        \begin{subfigure}[b]{0.3\textwidth}
            \centering
            \includegraphics[width=\textwidth]{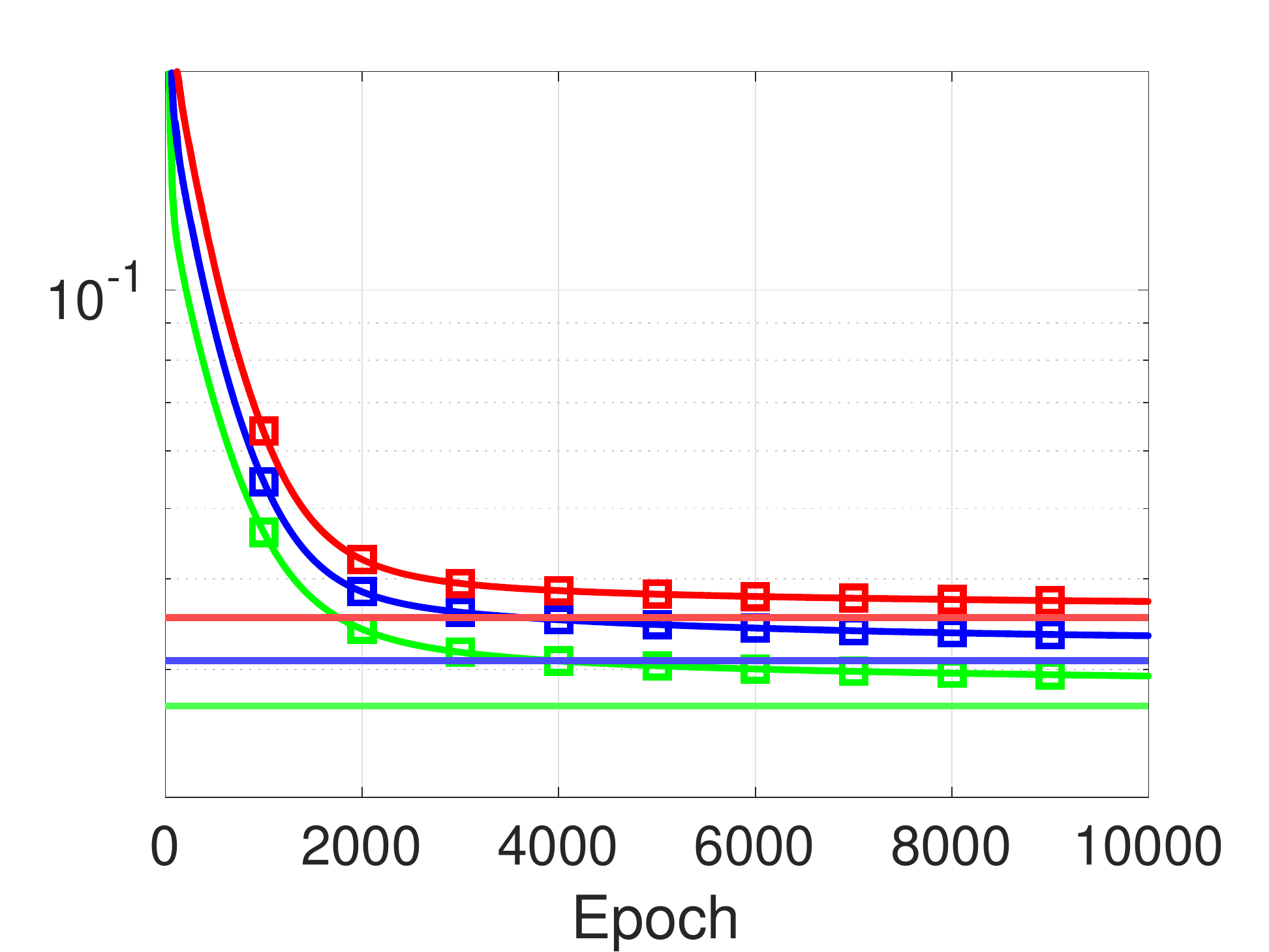}
            \caption{CIFAR-10-Training objective}
        \end{subfigure}
        \hfill
        \begin{subfigure}[b]{0.3\textwidth}  
            \centering 
            \includegraphics[width=\textwidth]{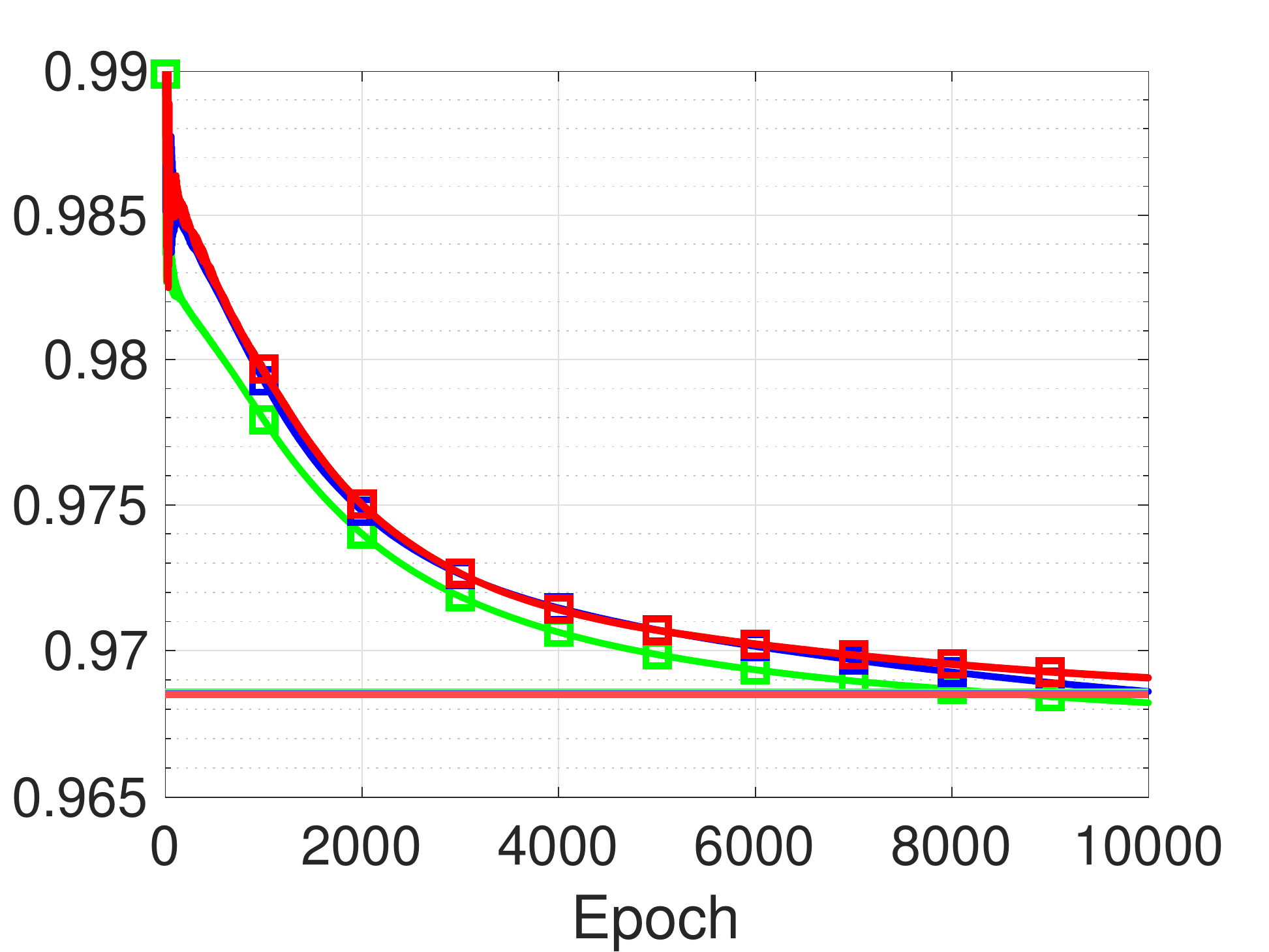}
            \caption{CIFAR-10-Test error}
        \end{subfigure}        \hfill
        \begin{subfigure}[b]{0.3\textwidth}   
            \centering 
            \includegraphics[width=\textwidth]{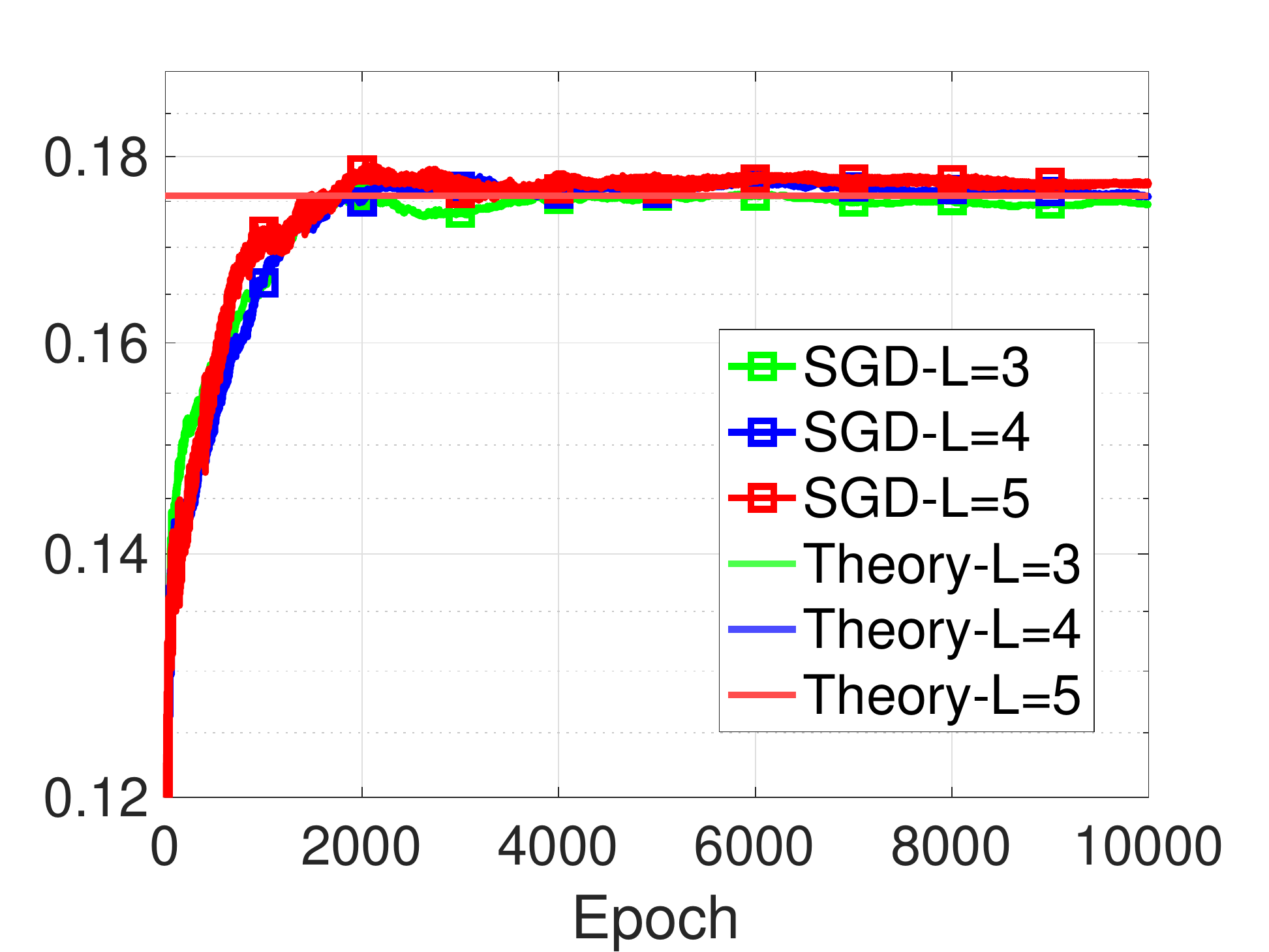}
            \caption{CIFAR-10-Test accuracy}
        \end{subfigure}\vspace{0.1in}
	\caption{Training and test performance on whitened and sampled datasets, where $(n,d)=(60,90)$, $K=10$, $L=3,4,5$ with $50$ neurons per layer and we use squared loss with one hot encoding. For Theory, we use the layer weights in Theorem \ref{theo:closedform_regularized_multiclass}, which achieves the optimal performance as guaranteed by Theorem \ref{theo:closedform_regularized_multiclass}. }\label{fig:mnist_cifar}
    \end{figure*}

\begin{remark}
We note that the results in Theorem \ref{theo:closedform_regularized_multiclass} and \ref{theo:deep_vector_closedform} indicate that whitened data and arbitrary data trained with batch normalization effectively yield the same results in the last layer, i.e., both achieve a scaled version of the labels. The difference is that in Theorem \ref{theo:closedform_regularized_multiclass}, the labels are obtained after the first layer and carried out to the last layer by aligned layer weights. However, in Theorem \ref{theo:deep_vector_closedform}, since batch normalization normalizes layers individually, the scaled labels are obtained after the last hidden layer.
\end{remark}

One-hot encoding is one of the common strategies to convert categorical variables into a binary representation that can be processed by DNNs. Although \cite{papyan2020neuralcollapse} empirically verified the emergence of certain patterns, termed Neural Collapse, for one-hot encoded labels trained with batch normalization, the theory behind these findings are still unknown. Therefore, we first define a new notion of simplex Equiangular Tight Frame (ETF) and then explain the Neural Collapse phenomenon where class means collapse to the vertices of a simplex ETF. We also note that all of our derivations hold for arbitrary convex loss functions, therefore, are also valid for the commonly adopted cross entropy loss as proven in Appendix \ref{sec:supp_general_loss}.

\begin{defns}\label{def:simplex_etf}
A standard simplex ETF is a set of points in $\mathbb{R}^{K}$ selected from the columns of the following matrix
\begin{align*}
    \vec{S}= \sqrt{\frac{K}{K-1}} \left(\vec{I}_K-\frac{1}{K}\vec{1}_{K \times K}\right).
\end{align*}
However, \cite{papyan2020neuralcollapse} also allows rescaling and rotations of $\vec{S}$, i.e., they define a general simplex ETF as $\vec{S}_g=\alpha \vec{U}\vec{S} \in \mathbb{R}^{p \times K}$, where $\vec{U}^T\vec{U}=\vec{I}_K$ and $\alpha \in \mathbb{R}_+$.
\end{defns}

\begin{restatable}{cor}{corneuralcollapse} \label{cor:neural_collapse}
Computing the last hidden layer activations after BN, i.e., $\vec{A}_{L-1} \in \mathbb{R}^{n \times K}$, using the optimal layer weight in Theorem \ref{theo:deep_vector_closedform} and then subtracting their global mean as in \cite{papyan2020neuralcollapse} yields
\begin{align*}
    \left(\vec{I}_n-\frac{1}{n}\vec{1}_{n \times n} \right)\vec{A}_{L-1}=  \sqrt{\frac{K}{n}}\left(\vec{I}_K \otimes \vec{1}_{\frac{n}{K}}-\frac{1}{K}\vec{1}_{n \times K} \right),
\end{align*}
where we assume that samples are ordered, i.e., the first $n/K$ samples belong to class 1, next $n/K$ samples belong to class 2 and so on. Therefore, all the activations for a certain class $k$ are the same and their mean is given by $ (\sqrt{K/n})(\vec{e}_k- \vec{1}_K/K)$, which is the $k^{th}$ column of a general simplex ETF with $\alpha=\sqrt{(K-1)/n}$ and $\vec{U}=\vec{I}_K$.
\end{restatable}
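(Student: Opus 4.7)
The plan is to simply substitute the closed-form optimal parameters from Theorem \ref{theo:deep_vector_closedform} into the batch normalization expression and simplify, then assemble the per-neuron activations into the matrix $\vec{A}_{L-1}$ and perform the centering operation. The key observations that drive the argument are that (i) the overparameterization assumption makes $\vec{A}_{L-2,j}\vec{A}_{L-2,j}^\dagger=\vec{I}_n$, (ii) one-hot labels are nonnegative so the outer ReLU is inactive, and (iii) the BN parameters are calibrated precisely to undo the normalization and recover $\vec{y}_j/\|\vec{y}_j\|_2$.

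First I would plug $\weight_{L-1,j}^*=\vec{A}_{L-2,j}^\dagger \vec{y}_j$ into $\vec{A}_{L-2,j}\weight_{L-1,j}^*$ to obtain $\vec{y}_j$. Next, substituting the optimal scale and shift $(\bnvarl{L-1}_j{}^*,\bnmeanl{L-1}_j{}^*)$ into the BN formula yields
\begin{align*}
\bn{\vec{y}_j}
&=\frac{(\vec{I}_n-\tfrac{1}{n}\vec{1}_{n\times n})\vec{y}_j}{\|(\vec{I}_n-\tfrac{1}{n}\vec{1}_{n\times n})\vec{y}_j\|_2}\cdot\frac{\|(\vec{I}_n-\tfrac{1}{n}\vec{1}_{n\times n})\vec{y}_j\|_2}{\|\vec{y}_j\|_2}+\frac{\vec{1}_n}{\sqrt{n}}\cdot\frac{\vec{1}_n^T\vec{y}_j}{\sqrt{n}\,\|\vec{y}_j\|_2}\\
&=\frac{(\vec{I}_n-\tfrac{1}{n}\vec{1}_{n\times n})\vec{y}_j}{\|\vec{y}_j\|_2}+\frac{\vec{1}_n\vec{1}_n^T\vec{y}_j}{n\|\vec{y}_j\|_2}=\frac{\vec{y}_j}{\|\vec{y}_j\|_2},
\end{align*}
so the centered/shifted quantity simplifies to a normalized one-hot column. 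Since $\vec{y}_j\ge 0$, applying ReLU leaves this unchanged, so the $j$-th column of $\vec{A}_{L-1}$ equals $\vec{y}_j/\|\vec{y}_j\|_2$.

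Using the assumption that classes are balanced with $n/K$ samples each and samples are ordered, $\|\vec{y}_j\|_2=\sqrt{n/K}$ and the label matrix has the Kronecker form $\vec{Y}=\vec{I}_K\otimes \vec{1}_{n/K}$. Hence $\vec{A}_{L-1}=\sqrt{K/n}\,(\vec{I}_K\otimes\vec{1}_{n/K})$. Centering then reduces to a short computation: because each column of $\vec{I}_K\otimes\vec{1}_{n/K}$ has $n/K$ ones, $\vec{1}_n^T(\vec{I}_K\otimes\vec{1}_{n/K})=(n/K)\vec{1}_K^T$, so $\tfrac{1}{n}\vec{1}_{n\times n}(\vec{I}_K\otimes\vec{1}_{n/K})=\tfrac{1}{K}\vec{1}_{n\times K}$, giving the claimed equality. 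The final identification with a general simplex ETF then follows by observing that each class mean equals $\sqrt{K/n}(\vec{e}_k-\vec{1}_K/K)$, which is the $k$-th column of $\vec{S}_g$ with $\alpha=\sqrt{(K-1)/n}$ and $\vec{U}=\vec{I}_K$ from Definition \ref{def:simplex_etf}.

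There is no genuine obstacle: the result is essentially an algebraic consequence of Theorem \ref{theo:deep_vector_closedform} together with the balanced one-hot structure. The only place requiring mild care is confirming that ReLU is inactive (so the closed-form of the hidden layer passes through unchanged) and that the Kronecker/outer-product bookkeeping matches the stated right-hand side; both are immediate under the ordering and balance assumption. If one wished to drop the balance assumption, the same calculation would give $\sqrt{K/n}$ replaced by a class-dependent scaling, which would break the exact simplex ETF geometry — but under the stated hypotheses the argument is direct.
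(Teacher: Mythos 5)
Your proposal is correct and follows essentially the same route as the paper's proof: substitute the closed-form weights from Theorem \ref{theo:deep_vector_closedform}, observe that the BN scale/shift parameters exactly recover $\vec{y}_j/\|\vec{y}_j\|_2$ and that the ReLU is inactive on the nonnegative one-hot columns, then use the balanced-class ordering to carry out the centering and identify the class means with the simplex ETF columns. The only detail the paper makes explicit that you elide is the standing assumption $\sqrt{n/K}>\beta$ (so that every class neuron survives the soft-thresholding in Theorem \ref{theo:deep_vector_closedform}), but this is a minor bookkeeping point rather than a gap.
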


\section{Numerical experiments}
 
Here, we present numerical results to verify our theoretical analysis. We first use synthetic datasets generated from a random data matrix with zero mean and identity covariance and the corresponding output vector is obtained via a randomly initialized teacher network\footnote{Additional numerical results can be found in Appendix \ref{sec:supp_additional_exp}.}. We first consider a two-layer linear network with $\weightmat_1 \in \mathbb{R}^{20 \times 50}$ and $\weightmat_2 \in \mathbb{R}^{50 \times 5}$. To prove our claim in Remark \ref{rem:twolayer_rank}, we train the network using GD with different $\beta$. In Figure \ref{fig:twolayer_rank}, we plot the rank of $\weightmat_1$ as a function of $\beta$, as well as the location of the singular values of $\vec{Y}^T \data $ using vertical red lines. This shows that the rank of the layer changes when $\beta$ is equal to one of the singular values, which verifies Remark \ref{rem:twolayer_rank}. We also consider a four-layer linear network with $\weightmat_{1,j}\in \mathbb{R}^{5 \times 50}$, $\weightmat_{2,j}\in \mathbb{R}^{50 \times 30}$, $\weightmat_{3,j}\in \mathbb{R}^{30 \times 40}$, and $\weightmat_{4,j}\in \mathbb{R}^{40 \times 5}$. We then select different regularization parameters as $\beta_1<\beta_2<\beta_3<\beta_4$. As illustrated in Figure \ref{fig:deeprank_linear}, $\beta$ determines the rank of each weight matrix and the rank is same for all the layers, which matches with our results. Moreover, to verify Proposition \ref{prop:deep_weightnorm_eq}, we choose $\beta$ such that the weights are rank-two. In Figure \ref{fig:deepnorm_linear}, we numerically show that all the hidden layer weight matrices have the same operator and Frobenius norms. We also conduct an experiment for a five-layer ReLU network with $\weightmat_{1,j}\in \mathbb{R}^{10 \times 50}$, $\weightmat_{2,j}\in \mathbb{R}^{50 \times 40}$, $\weightmat_{3,j}\in \mathbb{R}^{40 \times 30}$, $\weightmat_{4,j}\in \mathbb{R}^{30 \times 20}$, and $\weight_{5,j}\in \mathbb{R}^{20 \times 1}$. Here, we use data such that $\data=\vec{c}\vec{a}_0^T$, where $\vec{c} \in \mathbb{R}^n_+$ and $\vec{a}_0 \in \mathbb{R}^d$. In Figure \ref{fig:deeprank_relu}, we plot the rank of each weight matrix, which converges to one as claimed Proposition \ref{prop:effect_bias}.

We also verify our theory on two real benchmark datasets, i.e., MNIST \cite{mnist} and CIFAR-10 \cite{cifar10}. We first randomly undersample and whitened these datasets. We then convert the labels into one hot encoded form. Then, we consider a ten class classification/regression task using three multi-layer ReLU network architectures with $L=3,4,5$. For each architecture, we use SGD with momentum for training and compare the training/test performance with the corresponding network constructed via the closed-form solutions (without any sort of training) in Theorem \ref{theo:closedform_regularized_multiclass}, i.e., denoted as ``Theory''. In Figure \ref{fig:mnist_cifar}, Theory achieves the optimal training objective, which also yields smaller error and higher test accuracy. Thus, we numerically verify the claims in Theorem \ref{theo:closedform_regularized_multiclass}.

\section{Concluding remarks}
We studied regularized DNN training problems and developed an analytic framework to characterize the optimal solutions. We showed that optimal weights can be explicitly formulated as the extreme points of a convex set via the dual problem. We then proved that strong duality holds for both deep linear and ReLU networks and provided a set of optimal solutions. We also extended our derivations to the vector outputs and many other loss functions. More importantly, our analysis shows that when the input data is whitened or rank-one, instead of training an $L$-layer deep ReLU network in an end-to-end manner, one can directly use the closed-form solutions provided in Theorem \ref{theo:deeprelu} and \ref{theo:closedform_regularized_multiclass}. Furthermore, we showed that whitening/rank-one assumptions can be removed via batch normalization (see Theorem \ref{theo:deep_vector_closedform}). After our work, this was also realized by \cite{ergen2021bn}, where the authors proved that batch normalization effectively whitens the input data matrix. As a corollary, we uncovered theoretical reasons behind a recent empirical observation termed Neural Collapse \cite{papyan2020neuralcollapse}. As another corollary, we proved that the kinks of ReLU occur exactly at the input data so that the optimal network outputs linear spline interpolations for one-dimensional datasets, which was previously known only for two-layer networks \cite{infinite_width,parhi_minimum,ergen2020aistats,ergen2020journal}.

As the limitation of this work, we note that for networks with more than two-layers (i.e., $L>2$), we use a non-standard architecture, where each layer consists of $m$ weight matrices. Thus, we are able to achieve strong duality which is essential for our analysis. We leave the strong duality analysis of standard deep networks as an open research problem for future work.

\section*{Acknowledgements}
This work was partially supported by the National Science Foundation under grants IIS-1838179 and ECCS-2037304, Facebook Research, Adobe Research and Stanford SystemX Alliance.

\bibliographystyle{icml2021}
\bibliography{references}

\clearpage
\onecolumn
\appendix
\addcontentsline{toc}{section}{Appendix} 
\part{Appendix} 
\parttoc 

\section{Appendix}
Here, we present additional materials and proofs of the main results that are not included in the main paper due to the page limit. We also restate each result before the corresponding proof for the convenience of the reader. We also provide a table for notations below.
\begin{table}[h]
\caption{Notations and variables in this paper.}
\label{tab:notations}
\begin{center}
\begin{small}
\begin{tabular}{lcccr}
\toprule
Notation & Description \\
\midrule
$\data \in \mathbb{R}^{n \times d}$    & Data matrix  \\
$\vec{y} \in \mathbb{R}^n$,$\vec{Y} \in \mathbb{R}^{n \times K}$ & Label vector and matrix \\
$\weightmat_{l,j} \in \mathbb{R}^{m_{l-1} \times m_l}$    & $l^{th}$ layer weight matrix\\
$\vec{A}_l \in \mathbb{R}^{n \times m_l}$    & $l^{th}$ layer activation matrix\\
$\dual \in \mathbb{R}^{n}$,$\dualmat \in \mathbb{R}^{n \times K}$ & Dual vector and matrix\\
$\weight^* \in \mathbb{R}^d$,$\weightmat^* \in \mathbb{R}^{d \times K}$ & Optimal weight vector and matrix\\
$r$& Rank of $\data$\\
$\vec{U}_x \boldsymbol{\Sigma}_x \vec{V}_x^T$ & Full SVD of $\data$\\
$\vec{e}_j$ & $j^{th}$ ordinary basis vector \\
$\mathcal{L}(\cdot,\cdot)$ & Arbitrary convex loss function\\
$f_{\theta,L}(\data)$ & Output of an $L$-layer network\\
\bottomrule
\end{tabular}
\end{small}
\end{center}
\end{table}
\subsection{General loss functions}\label{sec:supp_general_loss}
In this section, we show that our extreme point characterization holds for arbitrary convex loss functions including cross entropy and hinge loss. We first restate the primal training problem after applying the rescaling in Lemma \ref{lemma:scaling_deep_overview} as follows
\begin{align} \label{eq:deep_general_loss}
   \min_{\{\theta_l\}_{l=1}^L, t_j} \mathcal{L}( f_{\theta,L}(\data),\vec{y})+ \beta \|\weight_L\|_1+ \frac{\beta}{2}(L-2)\sum_{j=1}^m t_j^2 \text{ s.t. } \weight_{L-1,j} \in \ball_2, \|\weightmat_{l,j} \|_F \leq t_j, \; \forall l \in [L-2]  , \forall j \in [m],
\end{align}
where $\mathcal{L}(\cdot,\vec{y})$ is a convex loss function. 
\begin{restatable}{theo}{theodeepgenericdual}\label{theo:deep_generic_dual}
The dual of \eqref{eq:deep_general_loss} is given by
\begin{align*}
\min_{t_j}\max_{\vec{\dual}}- \mathcal{L}^*(\dual) +\frac{\beta}{2}(L-2)\sum_{j=1}^m t_j^2   \text{ s.t. }\max_{\substack{ \weight_{L-1,j} \in \ball_2 \\\|\weightmat_{l,j} \|_F \leq t_j}} \| \vec{A}_{L-1}^T \vec{\dual}\|_{\infty}\leq \beta\,,\end{align*}
where $\mathcal{L}^*$ is the Fenchel conjugate function defined as
\begin{align*}
\mathcal{L}^*(\dual) = \max_{\vec{z}} \vec{z}^T \dual - \mathcal{L}(\vec{z},\vec{y})\,.
\end{align*}
\end{restatable}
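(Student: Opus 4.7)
The plan is a direct Lagrangian dualization of the rescaled primal \eqref{eq:deep_general_loss}, where only the loss and the output-layer weights $\weight_L$ are dualized while the hidden variables remain inside the feasibility set of the dual variable. First I would introduce a slack $\vec{z} = \vec{A}_{L-1}\weight_L$ together with a multiplier $\dual\in\mathbb{R}^n$ for the constraint $\vec{z} = f_{\theta,L}(\data)$, yielding the Lagrangian $\mathcal{L}(\vec{z},\vec{y}) + \beta\|\weight_L\|_1 + \tfrac{\beta}{2}(L-2)\sum_j t_j^2 + \dual^T(\vec{A}_{L-1}\weight_L - \vec{z})$ and a primal of the form $\min\max$. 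The partial minimization over $\vec{z}\in\mathbb{R}^n$ collapses the $\vec{z}$-dependent terms to $-\mathcal{L}^*(\dual)$ by the definition of the Fenchel conjugate. The partial minimization over $\weight_L\in\mathbb{R}^m$ is
\begin{equation*}
\min_{\weight_L}\,\beta\|\weight_L\|_1 + \dual^T\vec{A}_{L-1}\weight_L = \begin{cases} 0 & \text{if } \|\vec{A}_{L-1}^T\dual\|_\infty \leq \beta,\\ -\infty & \text{otherwise,}\end{cases}
\end{equation*}
which is the standard $\ell_1$--$\ell_\infty$ conjugacy.

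After these two partial minimizations the problem reads $\min_{t}\min_{\theta_{<L}}\max_{\dual}\bigl[-\mathcal{L}^*(\dual) + \tfrac{\beta}{2}(L-2)\sum_j t_j^2\bigr]$ with the coupling constraint $\|\vec{A}_{L-1}^T\dual\|_\infty\leq\beta$ (infeasible values of $\dual$ producing $-\infty$). The next step is to keep the outermost $\min_{t}$ fixed and apply weak duality to the inner pair $\min_{\theta_{<L}}\max_\dual$, swapping them to $\max_\dual\min_{\theta_{<L}}$. For each $(t,\dual)$, the inner min over the hidden-layer weights $\{\weight_{L-1,j}\in\ball_2,\,\|\weightmat_{l,j}\|_F\leq t_j\}$ is finite if and only if the constraint $\|\vec{A}_{L-1}^T\dual\|_\infty\leq\beta$ holds for every such choice of hidden weights; otherwise some admissible $\theta_{<L}$ violates it, the infimum is $-\infty$, and that $\dual$ is excluded from the outer $\max$. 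Hence the effective feasibility set for $\dual$ becomes
\begin{equation*}
\max_{\weight_{L-1,j}\in\ball_2,\,\|\weightmat_{l,j}\|_F\leq t_j}\|\vec{A}_{L-1}^T\dual\|_\infty\leq\beta,
\end{equation*}
producing precisely the dual displayed in the theorem.

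The main obstacle is neither the Fenchel step nor the $\ell_1$--$\ell_\infty$ step but the bookkeeping in the swap: one must argue carefully that after the weak-duality swap, the per-$\dual$ constraint induced by infeasibility is a \emph{uniform} (worst-case) bound over all admissible hidden layers rather than an existential one, and that this uniform bound is exactly what keeps the inner min finite. Since the statement asks only for the dual expression and not for strong duality, weak duality suffices here and no constraint qualification is needed; strong duality, when desired, is handled separately by the deep-network theorems of the paper under the corresponding width assumptions.
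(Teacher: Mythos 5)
Your proposal is correct and follows essentially the same route as the paper's derivation in Appendix \ref{sec:supp_dual_derivations}: dualize the slack equality and the output layer to obtain $-\mathcal{L}^*(\dual)$ and the $\ell_1$--$\ell_\infty$ constraint, then swap the min over the hidden weights with the max over $\dual$, turning the per-configuration constraint into the uniform worst-case constraint. Your explicit remark that the inner minimum is finite only if the bound holds for \emph{every} admissible hidden-layer configuration is exactly the (implicit) justification the paper uses for moving the maximization over $\weight_{L-1,j}$ and $\weightmat_{l,j}$ into the constraint.
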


\begin{proof}[\textbf{Proof of Theorem \ref{theo:deep_generic_dual}}]
The proof directly follows from the dual derivation in Appendix \ref{sec:supp_dual_derivations}.
\end{proof}
Theorem \ref{theo:deep_generic_dual} proves that our extreme point characterization applies to arbitrary loss function. Therefore, optimal parameters for \eqref{eq:deep_general_loss} are a subset of the same extreme point set, i.e., determined by the input data matrix $\data$, independent of loss function.
\bremark
Since our characterization is generic in the sense that it holds for vector output, deep linear and deep ReLU networks (see the main paper for details), Theorem \ref{theo:deep_generic_dual} is also valid for all of these cases.
\eremark

\subsection{Derivations for the dual problem in \eqref{eq:dual_deeplinear_overview}}\label{sec:supp_dual_derivations}
We first restate the scaled primal problem in Lemma \ref{lemma:scaling_deep_overview}
\begin{align}\label{eq:overview_primal_supp}
   P^*= &\min_{\{\theta_l\}_{l=1}^L, t_j,\hat{\vec{y}}} \mathcal{L}(\hat{\vec{y}} ,\vec{y})+ \beta \|\weight_L\|_1+ \frac{\beta}{2}(L-2)\sum_{j=1}^m t_j^2 ~ \text{ s.t. } \begin{split}
     & \weight_{L-1,j} \in \ball_2, \|\weightmat_{l,j} \|_F \leq t_j, \; \forall l \in [L-2],\forall j \in [m]\; \\ &\hat{\vec{y}}=f_{\theta,L}(\data).
   \end{split}
\end{align}
Then, the corresponding Lagrangian is 
\begin{align*}
    L(\dual,\hat{\vec{y}},\weight_L)=\mathcal{L}(\hat{\vec{y}},\vec{y})- \dual^T \hat{\vec{y}} +\dual^T f_{\theta,L}(\data)+ \beta \|\weight_L\|_1+ \frac{\beta}{2}(L-2)\sum_{j=1}^m t_j^2.
\end{align*}
Based on the Lagrangian above, we now obtain the dual function as follows
\begin{align*}
    g(\dual)&= \min_{\hat{\vec{y}},\weight_L}L(\dual,\hat{\vec{y}},\weight_L)\\
    &=\min_{\hat{\vec{y}},\weight_L} \mathcal{L}(\hat{\vec{y}},\vec{y})- \dual^T \hat{\vec{y}} +\dual^T f_{\theta,L}(\data)+ \beta \|\weight_L\|_1+ \frac{\beta}{2}(L-2)\sum_{j=1}^m t_j^2 \\
  &=\min_{\hat{\vec{y}},\weight_L} \mathcal{L}(\hat{\vec{y}},\vec{y})- \dual^T \hat{\vec{y}} +\dual^T  \vec{A}_{L-1}\weight_L+ \beta \|\weight_L\|_1+ \frac{\beta}{2}(L-2)\sum_{j=1}^m t_j^2 \\ 
    &=-\mathcal{L}^*(\dual)+\frac{\beta}{2}(L-2)\sum_{j=1}^m t_j^2   \text{ s.t. } \| \vec{A}_{L-1}^T \vec{\dual}\|_{\infty}\ \leq \beta,
\end{align*}
where $\mathcal{L}^*$ is the Fenchel conjugate function defined as \cite{boyd_convex}
\begin{align*}
\mathcal{L}^*(\dual) = \max_{\vec{z}} \vec{z}^T \dual - \mathcal{L}(\vec{z},\vec{y})\,.
\end{align*}
Thus, taking the dual of \eqref{eq:overview_primal_supp} in terms of $\weight_L$ and $\hat{\vec{y}}$ yield
\begin{align*}
        \begin{split}
             &P^*= \min_{\{\theta_l\}_{l=1}^{L-1}, t_j}\max_{\dual} g(\dual)\\
             &\text{ s.t. } \weight_{L-1,j} \in \ball_2, \|\weightmat_{l,j} \|_F \leq t_j, \; \forall l, j
        \end{split}=
             \begin{split}
            &\min_{\{\theta_l\}_{l=1}^{L-1}, t_j} \max_{\dual} -\mathcal{L}^*(\dual)+\frac{\beta}{2}(L-2)\sum_{j=1}^m t_j^2   \\
            &\text{ s.t. } \weight_{L-1,j} \in \ball_2, \|\weightmat_{l,j} \|_F \leq t_j, \; \forall l , j ,\;\| \vec{A}_{L-1}^T \vec{\dual}\|_{\infty}\ \leq \beta.
        \end{split}
\end{align*}
To achieve the lower bound in the main paper, we now change the order of min (for the layer weights)-max as follows
\begin{align*}
        P^*\geq D^* &=\min_{ t_j}\max_{\vec{\dual}}\min_{\substack{ \weight_{L-1,j} \in \ball_2 \\\|\weightmat_{l,j} \|_F \leq t_j}} - \mathcal{L}^*(\dual) +\frac{\beta}{2}(L-2)\sum_{j=1}^m t_j^2    ~\text{ s.t. } \| \vec{A}_{L-1}^T \vec{\dual}\|_{\infty}\ \leq \beta\\
        &=\min_{ t_j}\max_{\vec{\dual}} - \mathcal{L}^*(\dual) +\frac{\beta}{2}(L-2)\sum_{j=1}^m t_j^2   ~ \text{ s.t. }\max_{\substack{ \weight_{L-1,j} \in \ball_2 \\\|\weightmat_{l,j} \|_F \leq t_j}} \| \vec{A}_{L-1}^T \vec{\dual}\|_{\infty}\ \leq \beta,
\end{align*}
which completes the derivation.

\subsection{Equivalence (Rescaling) lemmas for the non-convex objectives }\label{sec:supp_equivalence}
 In this section, we present all the equivalence (scaling transformation) lemmas we used in the main paper and the the proofs are presented in Appendix \ref{sec:proofs_twolayer_linear}, \ref{sec:proofs_deep_linear}, and \ref{sec:proofs_deep_relu}, two-layer, deep linear, and deep ReLU networks, respectively. We also note that similar scaling techniques were also utilized in \cite{neyshabur_reg,infinite_width,ergen2019cutting,ergen2020aistats,ergen2020journal,ergen2020workshop}.
 

 \equivalenceoverview*

 \begin{proof}[\textbf{Proof of Lemma \ref{lemma:scaling_deep_overview}}]
 For any $\theta \in \Theta$, we can rescale the parameters as $\bar{\weight}_{L-1,j}=\alpha_j\weight_{L-1,j}$ and $\bar{\weightscalar}_{L,j}= \weightscalar_{L,j}/\alpha_j$, for any $\alpha_j>0$. Then, the network output becomes
\begin{align*}
    f_{\bar{\theta},L}(\data)=\sum_{j=1}^m\relu{\relu{\data \weightmat_{1,j}} \ldots \bar{\weight}_{L-1,j}}\bar{\weightscalar}_{L,j}=\sum_{j=1}^m\relu{\relu{\data \weightmat_{1,j}} \ldots \weight_{L-1,j}}\weightscalar_{L,j}=f_{\theta,L}(\data),
\end{align*}
which proves that this scaling does not change the output of the network. In addition to this, we have the following basic inequality
\begin{align*}
   \sum_{j=1}^m\sum_{l=1}^L \|\weightmat_{l,j}\|_F^2 \geq \sum_{j=1}^m\sum_{l=1}^{L-2} \|\weightmat_l\|_F^2+2\sum_{j=1}^m |\weightscalar_{L,j}| \text{ }\| \weight_{L-1,j}\|_2,
\end{align*}
where the equality is achieved with the scaling choice $\alpha_j=\big(\frac{|\weightscalar_{L,j}|}{\| \weight_{L-1,j}\|_2}\big)^{\frac{1}{2}}$ is used. Since the scaling operation does not change the right-hand side of the inequality, we can set $\|\weight_{L-1,j} \|_2=1, \forall j$. Therefore, the right-hand side becomes $\| \weight_L\|_1$.

Now, let us consider a modified version of the problem, where the unit norm equality constraint is relaxed as $\| \weight_{L-1,j} \|_2 \leq 1$. Let us also assume that for a certain index $j$, we obtain  $\| \weight_{L-1,j} \|_2 < 1$ with $\weightscalar_{L,j}\neq 0$ as an optimal solution. This shows that the unit norm inequality constraint is not active for $\weight_{L-1,j}$, and hence removing the constraint for $\weight_{L-1,j}$ will not change the optimal solution. However, when we remove the constraint, $\| \weight_{L-1,j}\|_2 \rightarrow \infty$ reduces the objective value since it yields $\weightscalar_{L,j}=0$. Therefore, we have a contradiction, which proves that all the constraints that correspond to a nonzero $\weightscalar_{L,j}$ must be active for an optimal solution. This also shows that replacing $\|\weight_{L-1,j}\|_2=1$ with $\| \weight_{L-1,j} \|_2 \leq 1$ does not change the solution to the problem.

Then, we use the epigraph form for the sum of the norm of the first $L-2$ layers to achieve the equivalence, i.e., we introduce $\|\weightmat_{l,j}\|_F \leq t_j$ constraint and replace $\sum_{l=1}^{L-2}\|\weightmat_{l,j}\|_F^2$ with $(L-2)t_j^2$ in the objective. We also note that since the optimal layer weights have the same Frobenius norm as proven in Proposition \ref{prop:deep_weightnorm_eq}, we can replace the Frobenius norm of each layer weight matrix with the same variable $t$ without loss of generality.
 \end{proof}


\begin{restatable}{lem}{lemmascaling} \label{lemma:scaling}
The following two problems are equivalent:
\begin{align*}
  \begin{split}
      &\min_{\theta \in \Theta} \|\weightmat_1\|_F^2 + \| \weight_2\|_2^2 \\
      &\text{ s.t. } f_{\theta,2}(\data)=\vec{y}
  \end{split}\qquad\qquad = \begin{split}
     &\min_{\theta \in \Theta} \| \weight_2 \|_1 \\& \text{ s.t. } f_{\theta,2}(\data)=\vec{y},  \weight_{1,j} \in \ball_2
  \end{split}.
\end{align*}
\end{restatable}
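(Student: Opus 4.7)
The plan is to specialize the scaling argument from Lemma \ref{lemma:scaling_deep_overview} to the $L=2$ case. The single underlying mechanism is the positive scale invariance of each neuron in the two-layer linear map $f_{\theta,2}(\data)=\sum_{j=1}^m (\data \weight_{1,j})\weightscalar_{2,j}$: for any $\alpha_j>0$, replacing $(\weight_{1,j},\weightscalar_{2,j})$ by $(\alpha_j \weight_{1,j},\weightscalar_{2,j}/\alpha_j)$ leaves $f_{\theta,2}(\data)$ unchanged, hence preserves feasibility of the constraint $f_{\theta,2}(\data)=\vec{y}$. This gives $m$ degrees of freedom that I will use to reshape any feasible solution without altering the output.

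First I would apply AM-GM neuron-wise to the objective of the first problem, writing
\[
\|\weightmat_1\|_F^2+\|\weight_2\|_2^2=\sum_{j=1}^m\left(\|\weight_{1,j}\|_2^2+\weightscalar_{2,j}^2\right)\geq 2\sum_{j=1}^m\|\weight_{1,j}\|_2\,|\weightscalar_{2,j}|,
\]
with equality iff $\|\weight_{1,j}\|_2=|\weightscalar_{2,j}|$ for every $j$. Since the product $\|\weight_{1,j}\|_2|\weightscalar_{2,j}|$ is itself invariant under the rescaling above, the equality case is attained by choosing $\alpha_j=\sqrt{|\weightscalar_{2,j}|/\|\weight_{1,j}\|_2}$ (with the obvious convention when either factor vanishes, in which case the neuron is inactive and can be dropped). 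Thus the infimum of the first objective depends only on the scale-invariant quantity $\sum_j \|\weight_{1,j}\|_2|\weightscalar_{2,j}|$.

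Next I would apply a second rescaling, $\alpha_j=1/\|\weight_{1,j}\|_2$, to place every active hidden neuron on the unit sphere, absorbing the factor $\|\weight_{1,j}\|_2$ into $\weightscalar_{2,j}$. Under this normalization $\sum_j\|\weight_{1,j}\|_2|\weightscalar_{2,j}|$ collapses to $\sum_j|\weightscalar_{2,j}|=\|\weight_2\|_1$, which is exactly the objective of the second problem. To justify replacing the unit-norm equality by the ball constraint $\weight_{1,j}\in\ball_2$, I would argue by contradiction: if some optimal solution of the second problem had $\|\weight_{1,j}\|_2<1$ with $\weightscalar_{2,j}\neq 0$, the rescaling $\alpha_j=1/\|\weight_{1,j}\|_2>1$ would keep feasibility, push $\|\weight_{1,j}\|_2$ to $1$, and strictly shrink $|\weightscalar_{2,j}|$, decreasing $\|\weight_2\|_1$ — contradiction; the case $\weightscalar_{2,j}=0$ corresponds to an inactive neuron that can simply be removed.

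The argument is essentially routine, being a direct specialization of the already-sketched scaling lemma for deep networks. The only bookkeeping point worth flagging is the factor of $2$ between the two objectives (the first problem's minimum equals twice the second's), which is absorbed in the paper's use of the term \emph{equivalent} — the correspondence between minimizers, up to the neuron-wise rescaling, is one-to-one. The main (mild) obstacle is simply handling the degenerate cases $\|\weight_{1,j}\|_2=0$ or $\weightscalar_{2,j}=0$ cleanly; these do not pose conceptual difficulty since such neurons contribute neither to $f_{\theta,2}(\data)$ nor to either objective.
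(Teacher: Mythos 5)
Your proposal is correct and follows essentially the same route as the paper's proof: the neuron-wise positive rescaling invariance, the AM--GM inequality $\|\weight_{1,j}\|_2^2+\weightscalar_{2,j}^2\ge 2\|\weight_{1,j}\|_2|\weightscalar_{2,j}|$ with equality at $\alpha_j=(|\weightscalar_{2,j}|/\|\weight_{1,j}\|_2)^{1/2}$, normalization of the hidden neurons to the unit sphere, and a contradiction argument showing the ball constraint is active whenever $\weightscalar_{2,j}\neq 0$. Your handling of the factor of $2$ and of degenerate neurons matches the paper's implicit conventions, so there is nothing to add.
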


\begin{proof}[\textbf{Proof of Lemma \ref{lemma:scaling}}]
For any $\theta \in \Theta$, we can rescale the parameters as $\bar{\weight}_{1,j}=\alpha_j\weight_{1,j}$ and $\bar{\weightscalar}_{2,j}= \weightscalar_{2,j}/\alpha_j$, for any $\alpha_j>0$. Then, the network output becomes
\begin{align*}
    f_{\bar{\theta},2}(\data)=\sum_{j=1}^m \bar{\weightscalar}_{2,j} \data \bar{\weight}_{1,j}=\sum_{j=1}^m \frac{\weightscalar_{2,j}}{\alpha_j} \alpha_j \data \weight_{1,j}=\sum_{j=1}^m \weightscalar_{2,j} \data \weight_{1,j},
\end{align*}
which proves $f_{\theta,2}(\data)=f_{\bar{\theta},2}(\data)$. In addition to this, we have the following basic inequality
\begin{align*}
   \frac{1}{2} \sum_{j=1}^m (\weightscalar_{2,j}^2+\| \weight_{1,j}\|_2^2) \geq \sum_{j=1}^m (|\weightscalar_{2,j}| \text{ }\| \weight_{1,j}\|_2),
\end{align*}
where the equality is achieved with the scaling choice $\alpha_j=\big(\frac{|\weightscalar_{2,j}|}{\| \weight_{1,j}\|_2}\big)^{\frac{1}{2}}$ is used. Since the scaling operation does not change the right-hand side of the inequality, we can set $\|\weight_{1,j} \|_2=1, \forall j$. Therefore, the right-hand side becomes $\| \weight_2\|_1$.

Now, let us consider a modified version of the problem, where the unit norm equality constraint is relaxed as $\| \weight_{1,j} \|_2 \leq 1$. Let us also assume that for a certain index $j$, we obtain  $\| \weight_{1,j} \|_2 < 1$ with $\weightscalar_{2,j}\neq 0$ as an optimal solution. This shows that the unit norm inequality constraint is not active for $\weight_{1,j}$, and hence removing the constraint for $\weight_{1,j}$ will not change the optimal solution. However, when we remove the constraint, $\| \weight_{1,j}\|_2 \rightarrow \infty$ reduces the objective value since it yields $\weightscalar_{2,j}=0$. Therefore, we have a contradiction, which proves that all the constraints that correspond to a nonzero $\weightscalar_{2,j}$ must be active for an optimal solution. This also shows that replacing $\|\weight_{1,j}\|_2=1$ with $\| \weight_{1,j} \|_2 \leq 1$ does not change the solution to the problem.
\end{proof}


\begin{restatable}{lem}{lemmascalingvector}\label{lemma:scaling_vector}
The following problems are equivalent:
\begin{align*}
\begin{split}
    &\min_{\theta \in \Theta}  \|\weightmat_1\|_F^2+\|\weightmat_2\|_F^2 \\&\text{ s.t. } f_{\theta,2}(\data)=\vec{Y}
\end{split}
    \qquad\qquad = \begin{split}
         &\min_{\theta \in \Theta}\sum_{j=1}^m\| \weight_{2,j} \|_2 \\ &\text{ s.t. }f_{\theta,2}(\data)=\vec{Y}, \weight_{1,j} \in \ball_2,\forall j
\end{split}.
\end{align*}
\end{restatable}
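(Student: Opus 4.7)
The plan is to adapt the scaling argument from Lemma \ref{lemma:scaling} to the vector-output setting, where $\weight_{2,j}\in\mathbb{R}^K$ now denotes the $j$-th row of $\weightmat_2\in\mathbb{R}^{m\times K}$ (viewed as a column), so that $f_{\theta,2}(\data)=\data\weightmat_1\weightmat_2=\sum_{j=1}^m (\data\weight_{1,j})\weight_{2,j}^T$.

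First I would introduce a per-neuron rescaling $\bar{\weight}_{1,j}=\alpha_j\weight_{1,j}$ and $\bar{\weight}_{2,j}=\weight_{2,j}/\alpha_j$ for arbitrary $\alpha_j>0$, and verify that the network output is invariant, since each rank-one term $(\data\weight_{1,j})\weight_{2,j}^T$ is unchanged. Therefore, feasibility of the interpolation constraint $f_{\theta,2}(\data)=\vec{Y}$ is preserved under rescaling, and one may freely choose $\{\alpha_j\}$ to minimize the squared-norm objective.

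Next, I would apply AM--GM neuron-by-neuron:
\begin{align*}
\tfrac{1}{2}\bigl(\|\bar{\weight}_{1,j}\|_2^2+\|\bar{\weight}_{2,j}\|_2^2\bigr) \;\ge\; \|\bar{\weight}_{1,j}\|_2\,\|\bar{\weight}_{2,j}\|_2 \;=\; \|\weight_{1,j}\|_2\,\|\weight_{2,j}\|_2,
\end{align*}
with equality iff $\|\bar{\weight}_{1,j}\|_2=\|\bar{\weight}_{2,j}\|_2$, attained by $\alpha_j=\sqrt{\|\weight_{2,j}\|_2/\|\weight_{1,j}\|_2}$ (neurons with $\weight_{1,j}=0$ or $\weight_{2,j}=0$ are inert and may be handled separately). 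Summing over $j$, and using $\|\weightmat_1\|_F^2+\|\weightmat_2\|_F^2=\sum_j(\|\weight_{1,j}\|_2^2+\|\weight_{2,j}\|_2^2)$, the infimum over rescalings of the left-hand objective equals $2\sum_{j=1}^m\|\weight_{1,j}\|_2\|\weight_{2,j}\|_2$. After this optimal choice, one can normalize so that $\|\weight_{1,j}\|_2=1$ for every $j$, whereupon the right-hand objective $\sum_j\|\weight_{2,j}\|_2$ is matched.

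Finally, I would argue that the equality $\|\weight_{1,j}\|_2=1$ can be relaxed to the inequality $\weight_{1,j}\in\ball_2$ without changing the optimum. Suppose a minimizer of the relaxed problem has $\|\weight_{1,j}\|_2<1$ with $\weight_{2,j}\neq 0$ for some $j$; then rescaling by $\alpha_j=1/\|\weight_{1,j}\|_2>1$ keeps the output fixed, sets $\|\bar{\weight}_{1,j}\|_2=1$, and strictly decreases $\|\bar{\weight}_{2,j}\|_2=\|\weight_{2,j}\|_2/\alpha_j$, contradicting optimality. If $\weight_{2,j}=0$, the neuron contributes nothing to either the objective or the output, so the value of $\|\weight_{1,j}\|_2$ is immaterial and can be set to any value in $[0,1]$. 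This shows the two problems share the same optimal value and optimal $\weightmat_2$-rows, completing the equivalence.

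The only subtlety relative to the scalar-output Lemma \ref{lemma:scaling} is that $\weight_{2,j}$ is now a vector in $\mathbb{R}^K$ rather than a scalar, but since the AM--GM step depends only on the Euclidean norm of $\weight_{2,j}$ and on $\|\weightmat_2\|_F^2=\sum_j\|\weight_{2,j}\|_2^2$, the argument goes through verbatim; there is no serious obstacle.
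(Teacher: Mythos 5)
Your proposal is correct and follows essentially the same route as the paper: the paper's proof of Lemma \ref{lemma:scaling_vector} simply invokes the scaling argument of Lemma \ref{lemma:scaling} together with the per-neuron AM--GM inequality $\tfrac{1}{2}\sum_{j}(\|\weight_{2,j}\|_2^2+\|\weight_{1,j}\|_2^2)\ge\sum_{j}\|\weight_{2,j}\|_2\,\|\weight_{1,j}\|_2$ and then normalizes $\|\weight_{1,j}\|_2=1$, exactly as you do (including the observation that only the Euclidean norm of the now-vector-valued $\weight_{2,j}$ enters). Your handling of the relaxation to $\weight_{1,j}\in\ball_2$ via the rescaling contradiction matches the paper's active-constraint argument, and your explicit treatment of inert neurons with $\weight_{2,j}=0$ is a minor but harmless refinement.
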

\begin{proof}[\textbf{Proof of Lemma \ref{lemma:scaling_vector}}]
The proof directly follows from Proof of Lemma \ref{lemma:scaling} using the following inequality
\begin{align*}
   \frac{1}{2} \sum_{j=1}^m (\|\weight_{2,j}\|_2^2+\| \weight_{1,j}\|_2^2) \geq \sum_{j=1}^m (\|\weight_{2,j}\|_2 \text{ }\| \weight_{1,j}\|_2).
\end{align*}
Then, if we set $\|\weight_{1,j} \|_2=1, \forall j$, the right-hand side becomes $\sum_{j=1}^m\| \weight_{2,j}\|_2$.
\end{proof}


\begin{restatable}{lem}{lemmascalingdeep}\label{lemma:scaling_deep}
The following problems are equivalent:
\begin{align*}
\begin{split}
      &\min_{\{\theta_l\}_{l=1}^L} \frac{1}{2} \sum_{j=1}^m\sum_{l=1}^L\| \weightmat_{l,j} \|_F^2  \\&\text{ s.t. } f_{\theta,L}(\data)=\vec{y} 
\end{split}
    \qquad = \begin{split}
         &\min_{\{\theta_l\}_{l=1}^L, \{t_j\}_{j=1}^{m}} \|\weight_L\|_1+ \frac{1}{2}(L-2)\sum_{j=1}^m t_j^2 \\&\text{ s.t. }f_{\theta,L}(\data)=\vec{y} ,\; \weight_{L-1,j} \in \ball_2,  \|\weightmat_{l,j} \|_F \leq t_j, \; \forall l \in [L-2], \forall j \in [m]  
\end{split}.
\end{align*}

\end{restatable}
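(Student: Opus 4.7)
The plan is to mimic the argument given for Lemma \ref{lemma:scaling_deep_overview}, since the only structural difference is that here the data-fit constraint $f_{\theta,L}(\data)=\vec{y}$ replaces the loss term $\mathcal{L}(f_{\theta,L}(\data),\vec{y})$. The key fact I will use throughout is that the per-neuron output $f_{\theta,L}(\data)$ is positively homogeneous of degree $L$ in the per-neuron weights, so for any positive scalars $\alpha_{1,j},\ldots,\alpha_{L,j}$ with $\prod_{l} \alpha_{l,j}=1$, the reparametrization $\bar\weightmat_{l,j}=\alpha_{l,j}\weightmat_{l,j}$ (and analogously for the last scalar) leaves $f_{\theta,L}(\data)$ unchanged. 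Hence rescaling does not affect feasibility of the equality constraint.

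First I would bound the sum of squared Frobenius norms for a fixed neuron $j$. Applying AM--GM to the first $L-2$ matrix factors gives
\begin{align*}
\tfrac{1}{2}\sum_{l=1}^{L-2}\|\weightmat_{l,j}\|_F^2 \;\ge\; \tfrac{1}{2}(L-2)\Bigl(\prod_{l=1}^{L-2}\|\weightmat_{l,j}\|_F^2\Bigr)^{1/(L-2)},
\end{align*}
with equality iff all the $\|\weightmat_{l,j}\|_F$ are equal to a common value $t_j$. For the last two factors, AM--GM yields $\tfrac{1}{2}(\|\weight_{L-1,j}\|_2^2+\weightscalar_{L,j}^{\,2})\ge |\weightscalar_{L,j}|\,\|\weight_{L-1,j}\|_2$, with equality iff $\|\weight_{L-1,j}\|_2=|\weightscalar_{L,j}|$. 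Combining the two bounds and summing over $j$ shows that the first objective dominates $\tfrac{1}{2}(L-2)\sum_j t_j^2 + \sum_j|\weightscalar_{L,j}|\,\|\weight_{L-1,j}\|_2$. Because the lower bound is achievable by the rescaling described above (using one free $\alpha_{l,j}$ per layer to equalize the matrix norms across $l\in[L-2]$, and another to balance the last two), and because this rescaling preserves the constraint $f_{\theta,L}(\data)=\vec{y}$, the infima of the two problems coincide once we additionally impose $\|\weight_{L-1,j}\|_2\le 1$ (absorbing the residual scale into $\weightscalar_{L,j}$), which reduces $\sum_j|\weightscalar_{L,j}|\,\|\weight_{L-1,j}\|_2$ to $\|\weight_L\|_1$, and introduces the epigraph variable $t_j$ via $\|\weightmat_{l,j}\|_F\le t_j$.

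The only subtle step, which I would handle exactly as in the proof of Lemma \ref{lemma:scaling_deep_overview}, is to justify replacing the equality $\|\weight_{L-1,j}\|_2=1$ with the inequality $\|\weight_{L-1,j}\|_2\le 1$. If at an optimum some neuron had $\|\weight_{L-1,j}\|_2<1$ together with $\weightscalar_{L,j}\neq 0$, the inequality constraint would be inactive for that neuron; removing it and letting $\|\weight_{L-1,j}\|_2\to\infty$ (with a matching rescaling that preserves the product, hence $f_{\theta,L}(\data)$) drives $\weightscalar_{L,j}\to 0$ and strictly decreases the $\|\weight_L\|_1$ term, contradicting optimality. A symmetric argument rules out $\|\weightmat_{l,j}\|_F< t_j$ at an optimum for $l\in[L-2]$, so the epigraph constraints are also tight. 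I do not expect a genuine obstacle here: the argument is a direct adaptation of the proof already carried out for Lemma \ref{lemma:scaling_deep_overview}, with the loss term replaced by the equality constraint, which plays no role in either the AM--GM step or the activation-of-constraints step since both steps rely only on the scale invariance of $f_{\theta,L}(\data)$.
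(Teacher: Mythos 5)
Your proposal is correct and follows essentially the same route as the paper: the paper's proof applies the two-layer rescaling of Lemma \ref{lemma:scaling} to the last two layers and then invokes the epigraph form together with the norm-balancing of Proposition \ref{prop:deep_weightnorm_eq}, which is exactly your AM--GM equalization of the first $L-2$ Frobenius norms. Making the AM--GM step and the tightness of the constraints explicit, as you do, is just an inlined version of the same argument.
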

\begin{proof}[\textbf{Proof of Lemma \ref{lemma:scaling_deep}}]
Applying the rescaling in Lemma \ref{lemma:scaling} to the last two layers of the $L$-layer network in \eqref{eq:problemdef_deeplinear} gives
\begin{align*}
\begin{split}
         &\min_{\{\theta_l\}_{l=1}^L} \|\weight_L\|_1+ \frac{1}{2}\sum_{j=1}^m\sum_{l=1}^{L-2}\| \weightmat_{l,j} \|_F^2  \\&\text{ s.t. }\|\weight_{L-1,j}\|_2 \leq 1,\forall j \in [m],\;\sum_{j=1}^m\data \weightmat_{1,j} \ldots \weight_{L-1,j}\weightscalar_{L,j}=\vec{y} 
\end{split}.
\end{align*}
Then, we use the epigraph form for the norm of the first $L-2$ to achieve the equivalence.
\end{proof}


\begin{restatable}{lem}{lemmascalingdeepvector}\label{lemma:scaling_deep_vector}
The following problems are equivalent:
\begin{align*}
\begin{split}
    &\min_{\{\theta_l\}_{l=1}^L}  \frac{1}{2}\sum_{j=1}^m \sum_{l=1}^L\| \weightmat_{l,j} \|_F^2 \\& \text{ s.t. } f_{\theta,L}(\data)=\vec{Y}
\end{split}
    \qquad = \begin{split}
         &\min_{\{\theta_l\}_{l=1}^L, \{t_l\}_{l=1}^{L-2}} \sum_{j=1}^{m}\|\weight_{L,j}\|_2+ \frac{1}{2}(L-2)\sum_{j=1}^{m}t_j^2 \\&\text{ s.t. }f_{\theta,L}(\data)=\vec{Y} ,\;  \weight_{L-1,j} \in \ball_2, \|\weightmat_{l,j} \|_F \leq t_j, \; \forall l \in [L-2] ,\, \forall j \in [m]
\end{split}.
\end{align*}
\end{restatable}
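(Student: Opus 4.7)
The plan is to prove the equivalence by a layerwise rescaling argument that generalizes Lemma \ref{lemma:scaling_deep_overview} (and its scalar-output counterpart Lemma \ref{lemma:scaling_deep}) to the vector-output regime where $\weight_{L,j} \in \mathbb{R}^K$ and the last factor appears as $\weight_{L,j}^T$ in $f_{\theta,L}(\data)=\sum_j \data\weightmat_{1,j}\cdots\weightmat_{L-2,j}\weight_{L-1,j}\weight_{L,j}^T$. First I would exploit the positive-homogeneity of this product: for any positive scalars $\alpha_{1,j},\ldots,\alpha_{L,j}$ with $\prod_{l=1}^L \alpha_{l,j}=1$, replacing each $\weightmat_{l,j}$ by $\alpha_{l,j}\weightmat_{l,j}$ leaves the network output unchanged, so the feasibility constraint $f_{\theta,L}(\data)=\vec{Y}$ is invariant under this scaling group.

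Next I would apply the arithmetic--geometric mean inequality twice. Treating the last two layers first, as in Lemma \ref{lemma:scaling_vector}, the choice $\bar{\weight}_{L-1,j}=\alpha_j\weight_{L-1,j}$, $\bar{\weight}_{L,j}=\weight_{L,j}/\alpha_j$ gives
\begin{align*}
\tfrac{1}{2}\bigl(\|\weight_{L-1,j}\|_2^2+\|\weight_{L,j}\|_2^2\bigr)\;\geq\;\|\weight_{L-1,j}\|_2\,\|\weight_{L,j}\|_2,
\end{align*}
with equality at $\alpha_j=\sqrt{\|\weight_{L,j}\|_2/\|\weight_{L-1,j}\|_2}$; normalizing $\|\weight_{L-1,j}\|_2=1$ then reduces the last-two-layer contribution to $\|\weight_{L,j}\|_2$. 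The leftover scaling freedom on layers $1,\dots,L-2$ can be used to equalize their Frobenius norms, i.e., to enforce $\|\weightmat_{l,j}\|_F=t_j$ for every $l\in[L-2]$; by AM--GM this minimizes $\tfrac{1}{2}\sum_{l=1}^{L-2}\|\weightmat_{l,j}\|_F^2$ over the scaling orbit and yields the value $\tfrac{L-2}{2}t_j^2$. Introducing $t_j$ as an epigraph variable with $\|\weightmat_{l,j}\|_F\leq t_j$ is without loss of generality because the objective is strictly increasing in $t_j$, so any minimizer activates the constraint.

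The main obstacle, and the only step that requires care, is relaxing $\|\weight_{L-1,j}\|_2=1$ to $\|\weight_{L-1,j}\|_2\leq 1$ while preserving the optimum. Suppose for contradiction that at some optimum there is an index $j$ with $\|\weight_{L-1,j}\|_2<1$ and $\weight_{L,j}\neq 0$. Then replacing $(\weight_{L-1,j},\weight_{L,j})$ by $(\weight_{L-1,j}/\|\weight_{L-1,j}\|_2,\|\weight_{L-1,j}\|_2\weight_{L,j})$ preserves the bilinear product $\weight_{L-1,j}\weight_{L,j}^T$ (hence feasibility) but strictly decreases $\|\weight_{L,j}\|_2$ since $\|\weight_{L-1,j}\|_2<1$, contradicting optimality; an analogous sign of the perturbation rules out the case $\weight_{L,j}=0$ as it is then permissible to set $\weight_{L-1,j}=0$ too. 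Combining the two balanced layers with the equal-norm block gives per-neuron cost $\|\weight_{L,j}\|_2+\tfrac{L-2}{2}t_j^2$ under the constraints $\weight_{L-1,j}\in\ball_2$ and $\|\weightmat_{l,j}\|_F\leq t_j$ for $l\in[L-2]$, which is precisely the right-hand side of the lemma. Summing over $j\in[m]$ completes the proof.
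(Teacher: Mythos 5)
Your proposal is correct and follows essentially the same route as the paper's proof, which applies the two-layer rescaling of Lemma \ref{lemma:scaling}/\ref{lemma:scaling_vector} to the last two layers, invokes the norm-equalization argument of Proposition \ref{prop:deep_weightnorm_eq} for the first $L-2$ layers, and passes to the epigraph form in $t_j$. Your direct renormalization argument for relaxing $\|\weight_{L-1,j}\|_2=1$ to $\weight_{L-1,j}\in\ball_2$ is a slightly cleaner variant of the contradiction the paper uses, but the substance is identical.
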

\begin{proof}[\textbf{Proof of Lemma \ref{lemma:scaling_deep_vector}}]
Applying the rescaling in Lemma \ref{lemma:scaling} to the last two layer of the $L$-layer network in \eqref{eq:problemdef_deeplinear_vector} gives
\begin{align*}
\begin{split}
         &\min_{\{\theta_l\}_{l=1}^L} \sum_{j=1}^{m}\|\weight_{L,j}\|_2+\frac{1}{2} \sum_{j=1}^m\sum_{l=1}^{L-2}\| \weightmat_l \|_F^2  \\&\text{ s.t. }\|\weight_{L-1,j}\|_2 \leq 1,\forall j \in [m],\; \sum_{j=1}^m\data \weightmat_{1,j} \ldots \weight_{L-1,j}\weight_{L,j}^T=\vec{Y} 
\end{split}.
\end{align*}
Then, we use the epigraph form for the norm of the first $L-2$ to achieve the equivalence.
\end{proof}

\subsection{Regularization in Theorem \ref{theo:deep_vector_closedform}} \label{sec:supp_regularization}
In this section, we prove that regularizing the all the parameters do not alter the claims in Theorem \ref{theo:deep_vector_closedform}. We first state the primal problem, where all the parameters are regularized, as follows
\begin{align}\label{eq:deep_primal_fullreg}
    &P_{r}^*=\min_{\theta \in \Theta} \frac{1}{2}\normf{\sum_{j=1}^{m}\relu{\bn{\vec{A}_{L-2,j} \weight_{L-1,j}}}\weight_{L,j}^T-  \vec{Y}}^2+ \frac{\beta}{2}\sum_{j=1}^m\sum_{l=1}^{L}\left(\norm{\bnvarvecl{l}_j}^2+\norm{\bnmeanvecl{l}_j}^2+\normf{\weightmat_{l,j}}^2 \right),
\end{align}
where we use $\bnvarvecl{L}=\bnmeanvecl{L}=\vec{0}$ as dummy variables for notational simplicity. Now, we rewrite \eqref{eq:deep_primal_fullreg} as 
\begin{align*}
    &P_{r}^*=\min_{t\geq0}\min_{\theta \in \Theta} \frac{1}{2}\normf{\sum_{j=1}^{m}\relu{\bn{\vec{A}_{L-2,j} \weight_{L-1,j}}}\weight_{L,j}^T-  \vec{Y}}^2+  \frac{\beta}{2}\sum_{j=1}^{m}\left({\bnvarl{L-1}_j}^2+{\bnmeanl{L-1}_j}^2+\norm{\weight_{L,j}}^2 \right) +\frac{\beta}{2}t\\
    &\text{s.t. } \sum_{j=1}^m\sum_{l=1}^{L-2}\left(\norm{\bnvarvecl{l}_j}^2+\norm{\bnmeanvecl{l}_j}^2+\normf{\weightmat_{l,j}}^2 \right)+\normf{\weightmat_{L-1}}^2 \leq t.
\end{align*}
After applying the scaling between $\weightmat_L$ and $(\bnvarvecl{L-1},\bnmeanvecl{L-1})$ as in Lemma \ref{lemma:scaling_deep_vector}, we take the dual with respect to $\weightmat_{L}$ to obtain the following problem
\begin{align}\label{eq:deep_dual_fullreg}
       &P_{r}^* \geq  D_{r}^*=\max_{t\geq 0}\max_{\dualmat}   -\frac{1}{2}\normf{\dualmat - \vec{Y}}^2 + \frac{1}{2}\normf{\vec{Y}}^2 +\frac{\beta}{2}t\\ \nonumber
       &\text{ s.t. } \max_{\theta \in \Theta_{r} }  \norm{ \dualmat^T\relu{\bn{\vec{A}_{L-2,j} \weight_{L-1,j}}}} \leq \beta,
\end{align}
where $\Theta_{r}=\{\theta \in \Theta :  {\bnvarl{L-1}_j}^2+{\bnmeanl{L-1}_j}^2 =1,\, \forall j \in [m],\, \sum_{l=1}^{L-2}\left(\norm{\bnvarvecl{l}}^2+\norm{\bnmeanvecl{l}}^2+\normf{\weightmat_{l,j}}^2 \right)+\normf{\weightmat_{L-1}}^2 \leq t \}$. Since
\begin{align*} 
    \bn{\vec{A}_{L-2,j} \weight_{L-1,j}}=  \underbrace{\frac{(\vec{I}_{n}-\frac{1}{n}\vec{1}_{n \times n} ) \vec{A}_{L-2,j}\weight_{L-1,j}}{\|(\vec{I}_{n}-\frac{1}{n}\vec{1}_{n \times n}  ) \vec{A}_{L-2,j}\weight_{L-1,j}\|_2}}_{\vec{h}(\theta^\prime)}\bnvarl{L-1}_j +\frac{\vec{1}_n}{\sqrt{n}}\bnmeanl{L-1}_j,
\end{align*}
where $\theta^\prime$ denotes all the parameters except ${\bnvarvecl{L-1}},{\bnmeanvecl{L-1}},{\weightmat_{L}} $. Then, independent of the value $t$, $\vec{h}(\theta^\prime)$ is always a unit norm vector. Therefore, the maximization constraint in \eqref{eq:deep_dual_fullreg} is independent of the norms of the parameters in $\theta^\prime$, which also proves that regularizing the weights in $\theta^\prime$ does not affect the dual characterization in \eqref{eq:deep_dual_fullreg}.

\subsection{Additional numerical results}\label{sec:supp_additional_exp}

Here, we present additional numerical results that are not included in the main paper. In Figure \ref{fig:twolayer_proj}, we perform an experiment to check whether the hidden neurons of a two-layer linear network align with the proposed right singular vectors. For this experiment, we select a certain $\beta$ such that $\weightmat_1$ becomes rank-two. After training, we first normalize each neuron to have unit norm, i.e., $\|\weight_{1,j}\|_2=1,\forall j$, and then compute the sum of the projections of each neuron onto each right singular vector, i.e., denoted as $\vec{v}_i$. Since we choose $\beta$ such that $\weightmat_1$ is a rank-two matrix, most of the neurons align with the first two right singular vectors as expected. Therefore, this experiment verifies our analysis and claims in Remark \ref{rem:twolayer_rank}. Furthermore, as an alternative to Figure \ref{fig:twolayer_rank}, we plot the singular values of $\weightmat_1$ with respect to the regularization parameter $\beta$ in Figure \ref{fig:twolayer_singvals}.

\begin{figure*}[ht]
\centering
\captionsetup[subfigure]{oneside,margin={1cm,0cm}}
	\begin{subfigure}[t]{0.45\textwidth}
	\centering
	\includegraphics[width=1.\textwidth, height=0.8\textwidth]{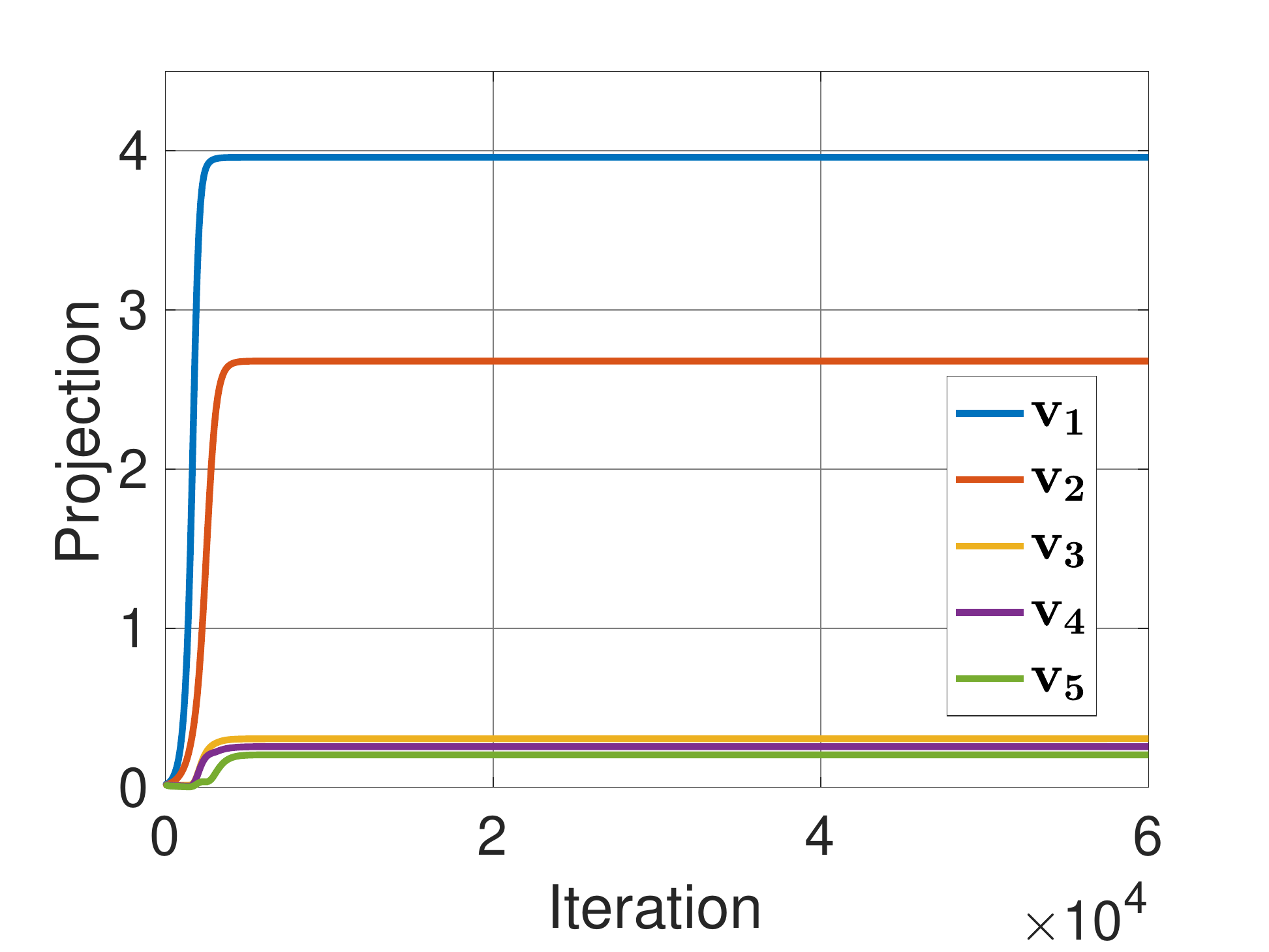}
	\caption{\centering} \label{fig:twolayer_proj}
\end{subfigure} \hspace*{\fill}
	\begin{subfigure}[t]{0.45\textwidth}
	\centering
	\includegraphics[width=1.\textwidth, height=0.8\textwidth]{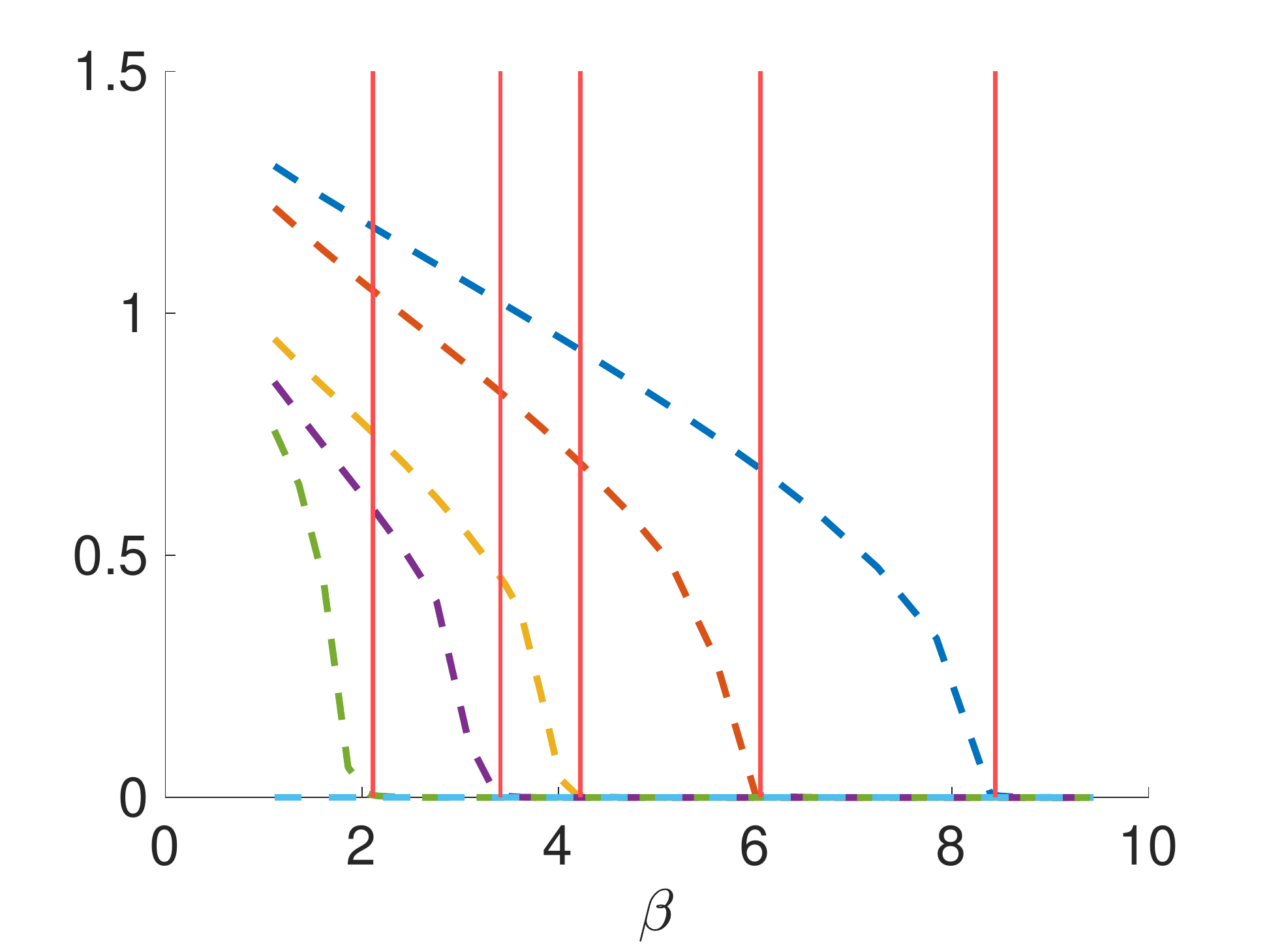}
	\caption{\centering} \label{fig:twolayer_singvals}
\end{subfigure} \hspace*{\fill}
\caption{(a) Projection of the hidden neurons to the right singular vectors claimed in Remark \ref{rem:twolayer_rank} and (b) singular values of $\weightmat_1$ with respect to $\beta$.}\label{fig:twolayer_supp}
\end{figure*}

\begin{figure*}[h]
        \centering
        \begin{subfigure}[b]{0.45\textwidth}
            \centering
            \includegraphics[width=\textwidth]{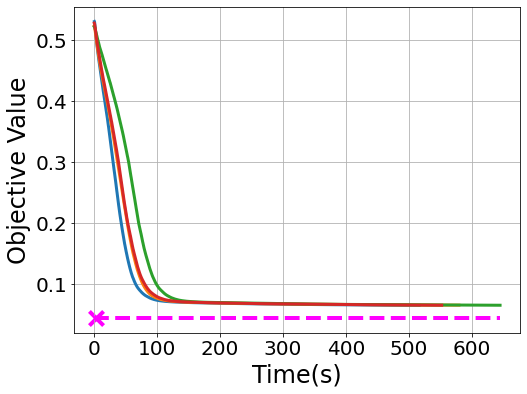}
            \caption{CIFAR-100-Training objective}\medskip
        \end{subfigure}
        \hfill
        \begin{subfigure}[b]{0.45\textwidth}   
            \centering 
            \includegraphics[width=\textwidth]{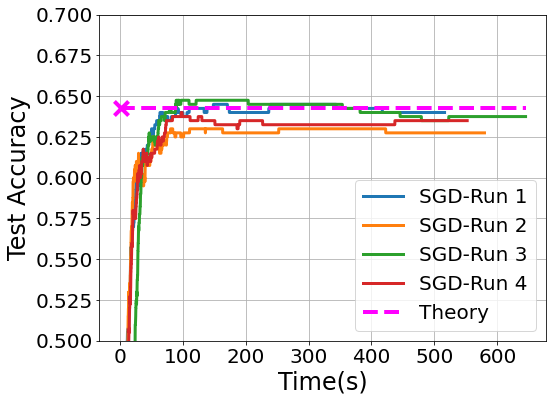}
            \caption{CIFAR-100-Test accuracy} \medskip
        \end{subfigure}
	\caption{Training and test performance of full batch SGD (4 initializations) on the CIFAR-100 datasets for a four class classification tasks, where $(n,d)=(2000,3072)$, $K=4$, $L=2$ with $100$ neurons and we use squared loss with one hot encoding. For Theory, we use the layer weights in Theorem \ref{theo:deep_vector_closedform}, which achieves the optimal performance as guaranteed by Theorem \ref{theo:deep_vector_closedform}. We also use a marker to denote the time required to compute the closed-form solution. }\label{fig:cifar_bn}
    \end{figure*}

We also conduct an experiment on CIFAR-100 \cite{cifar10} datasets, for which we consider a four class classification task. In order to verify our results in Theorem \ref{theo:deep_vector_closedform}, we train a two-layer regularized ReLU networks with batch normalization using four different initializations and then plot the results with respect to wall-clock time. As demonstrated in Figure \ref{fig:cifar_bn}, our closed form solution, i.e., denoted as Theory, achieves lower objective value as proven in Theorem \ref{theo:deep_vector_closedform} and higher test accuracy.

\subsection{Proofs for two-layer networks}\label{sec:proofs_twolayer_linear}


\theotwolayerequalitydual*
\cortwolayerextremeoptimality*

\begin{proof}[\textbf{Proof of Theorem \ref{theo:twolayer_equality_dual} and Corollary \ref{cor:twolayer_extreme_optimality}}]

We first note that the dual of \eqref{eq:problem_def_linear} with respect to $\weight_2$ is
\begin{align*}
    \min_{\theta \in \Theta \backslash\{\weight_2\}} \max_{\dual} \dual^T \vec{y} \text{ s.t. } \|(\data \weightmat_1)_{+}^T \dual \|_{\infty} \leq 1, \;\| \weight_{1,j} \|_2\leq1 , \forall j.
\end{align*}
Then, we can reformulate the problem as follows
\begin{align*}
P^*=\min_{\theta \in \Theta \backslash \{\weight_2\}}\max_{\dual} \dual^T \vec{y}  +\mathcal{I}(\|(\data \weightmat_1)_{+}^T \dual \|_{\infty} \leq 1) , \text{ s.t. } \| \weight_{1,j} \|_2\leq1 , \forall j.
\end{align*}
where $\mathcal{I}(\|(\data \weightmat_1)^T \dual \|_{\infty} \leq 1)$ is the characteristic function of the set $\|(\data \weightmat_1)^T \dual \|_{\infty} \leq 1 $, which is defined as
\begin{align*}
    \mathcal{I}(\|(\data \weightmat_1)^T \dual \|_{\infty} \leq 1) = \begin{cases} 0 & \text{if }  \|(\data \weightmat_1)^T \dual \|_{\infty} \leq 1 \\
    -\infty & \mbox{otherwise}\end{cases}.
\end{align*}
Since the set $\|(\data \weightmat_1)^T \dual \|_{\infty}\le 1$ is closed, the function $\Phi(\dual,\weightmat_1)=\dual^T \vec{y}  +\mathcal{I}(\|(\data \weightmat_1)^T \dual \|_{\infty} \leq 1)$ is the sum of a linear function and an upper-semicontinuous indicator function and therefore upper-semicontinuous. The constraint on $\weightmat_1$ is convex and compact. We use $P^*$ to denote the value of the above min-max program. Exchanging the order of min-max we obtain the dual problem given in \eqref{eq:twolayer_equality_dual}, which establishes a lower bound $D^*$ for the above problem:
\begin{align*}
P^*\ge D^*&=\max_{\dual}\min_{\theta \in \Theta \backslash \{\weight_2\}} \dual^T \vec{y}  +\mathcal{I}(\|(\data \weightmat_1)^T \dual \|_{\infty} \leq 1) , \text{ s.t. } \| \weight_{1,j} \|_2\leq1 , \forall j,\\
&= \max_{\dual} \dual^T \vec{y}, \text{ s.t. } \|(\data \weightmat_1)^T \dual \|_{\infty} \leq 1 ~\forall \weight_{1,j} \mbox{ : } \| \weight_{1,j} \|_2\leq1 , \forall j, \\
&= \max_{\dual} \dual^T \vec{y}, \text{ s.t. } \|(\data \weight_1)^T \dual \|_{\infty} \leq 1 ~\forall \weight_1 \mbox{ : } \| \weight_1 \|_2\leq 1 , 
\end{align*}
We now show that strong duality holds for infinite size NNs. The dual of the semi-infinite program in \eqref{eq:twolayer_equality_dual} is given by (see Section 2.2 of \cite{semiinfinite_goberna} and also \cite{bach2017breaking})
\begin{align*}
    &\min \|\boldsymbol{\mu}\|_{TV}\\
    &\mbox{s.t.} \int_{\weight_1\in \ball_2} \data\weight_1 d\boldsymbol{\mu}(\weight_1) = \vec{y}\,,
\end{align*}
where TV is the total variation norm of the Radon measure $\boldsymbol{\mu}$. This expression coincides with the infinite-size NN as given in \cite{bach2017breaking}, and therefore strong duality holds. We also note that although the above formulation involves an infinite dimensional integral form, by Caratheodory's theorem, the integral can be represented as a finite summation of at most $n+1$ Dirac delta functions \cite{rosset2007}. 
%
Next we invoke the semi-infinite optimality conditions for the dual problem in \eqref{eq:twolayer_equality_dual}, in particular we apply Theorem 7.2 of \cite{semiinfinite_goberna}. We first define the set
\begin{align*}
    \mathbf{K}=\mathbf{cone}\left\{ \left( \begin{array}{c}s\data \weight_1 \\ 1 \end{array}  \right), \weight_1 \in \ball_2, s\in\{-1,+1\}; \left(\begin{array}{c} \vec{0}_n \\ -1\end{array}\right) \right\}\,.
\end{align*}
Note that $\mathbf{K}$ is the union of finitely many convex closed sets, since the function $\data\weight_1$ can be expressed as the union of finitely many convex closed sets. Therefore the set $\mathbf{K}$ is closed. By Theorem 5.3 \cite{semiinfinite_goberna}, this implies that the set of constraints in \eqref{eq:twolayer_equality_dual} forms a Farkas-Minkowski system. By Theorem 8.4 of \cite{semiinfinite_goberna}, primal and dual values are equal, given that the system is consistent. Moreover, the system is discretizable, i.e., there exists a sequence of problems with finitely many constraints whose optimal values approach to the optimal value of \eqref{eq:twolayer_equality_dual}. 
The optimality conditions in Theorem 7.2 \cite{semiinfinite_goberna} implies that $\vec{y}=\data\weightmat_1^*\weight_2^*$ for some vector $\weight_2^*$. Since the primal and dual values are equal, we have ${\dual^*}^T \vec{y} ={\dual^*}^T \data\weightmat_1^*\weight_2^*= \|\weight_2^*\|_1$, which shows that the primal-dual pair $\left(\{\weight_2^*,\weightmat_1^*\} ,\dual^*\right)$ is optimal. Thus, the optimal neuron weights $\weightmat_1^*$ satisfy $\|(\data \weightmat_1^*)^T \dual^* \|_{\infty} = 1$.
\end{proof}


\propplantedmodeltwolayer*
\begin{proof}[\textbf{Proof of Proposition \ref{prop:planted_model_twolayer}}]
Let us first define a variable $\vec{w}^*$ that minimizes the following problem
\begin{align*}
      \planted=\min_{\plantednostar} \| \data \plantednostar-\vec{y}\|_2^2.
\end{align*}
Thus, the following relation holds
\begin{align*}
    \data^T (\data\planted-\vec{y})=\vec{0}_d.
\end{align*}
Then, for any $\plantednostar \in \mathbb{R}^d$, we have
\begin{align*}
    f(\plantednostar)&=\|\data \plantednostar-\data \planted+\data \planted-\vec{y}\|_2^2\\&=\|\data \plantednostar-\data \planted\|_2^2+2(\plantednostar-\planted)^T \underbrace{\data^T (\data \planted-\vec{y})}_{=\vec{0}_d}+\|\data\planted-\vec{y}\|_2^2\\
    &=\|\data \plantednostar-\data \planted\|_2^2+\|\data \planted-\vec{y}\|_2^2.
\end{align*}
Notice that $\|\data \planted-\vec{y}\|_2^2$ does not depend on $\plantednostar$, thus, the relation above proves that minimizing $f(\plantednostar)$ is equivalent to minimizing $\|\data \plantednostar-\data \planted\|_2^2$, where $\planted$ is the planted model parameter. Therefore, the planted model assumption does not change solution to the linear network training problem in \eqref{eq:problem_def_linear}.
\end{proof}


\theostrongdualitytwolayer*

\begin{proof}[\textbf{Proof of Theorem \ref{theo:strong_duality_twolayer}}]
Since there exists a single extreme point, we can construct a weight vector $\weight_e \in \mathbb{R}^{d }$ that is the extreme point. Then, the dual of \eqref{eq:problem_def_linear} with $\weightmat_1=\weight_e$ is
\begin{align}\label{eq:strong_dual_problemw}
    D_e^*= \max_{\dual} \dual^T \vec{y} \text{ s.t. } \|(\data \weight_e)^T \dual \|_{\infty} \leq 1.
\end{align}

Then, we have 
\begin{align}\label{eq:strong_dual_problemw_result}
   P^*=\nonumber &\min_{\theta \in \Theta \backslash \{\weight_2\}}\max_{\dual} \dual^T \vec{y} \hspace{3.3cm}\geq &&\max_{\dual}\min_{\theta \in \Theta \backslash \{\weight_2\}}  \dual^T \vec{y}  \\    \nonumber&\text{s.t }\|(\data \weightmat_1)^T \dual \|_{\infty} \leq 1, \;\| \weight_{1,j} \|_2\leq1 , \forall j &&\text{s.t. } \|(\data \weightmat_1)^T \dual \|_{\infty} \leq 1, \;\| \weight_{1,j} \|_2\leq1 , \forall j  \\ \nonumber
       &\hspace{6.25cm}  =  &&\max_{\dual}  \dual^T \vec{y} \\ \nonumber &\hfill &&\text{s.t. } \|(\data \weight_e)^T \dual \|_{\infty} \leq 1\\
       &\hspace{6.25cm}  = &&D_e^*=D^*
\end{align}
where the first inequality follows from changing order of min-max to obtain a lower bound and the equality in the second line follows from Corollary \ref{cor:twolayer_extreme_optimality}. 

From the fact that an infinite width NN can always find a solution with the objective value lower than or equal to the objective value of a finite width NN, we have
\begin{align}\label{eq:strong_original_upperbound}
  P^*_e=&\min_{\theta \in \Theta \backslash \{\weightmat_1,m\}} | \weightscalar_2| \hspace{2cm}\geq \hspace{1cm} P^*= \hspace{-1cm}&&\min_{\theta \in \Theta  } \| \weight_2\|_1 \\\nonumber &\text{s.t. } \data \weight_e \weightscalar_2  =\vec{y}   &&\text{s.t. } \data \weightmat_1\weight_2  =\vec{y}, \;\| \weight_{1,j} \|_2\leq1 , \forall j,
\end{align}
where $P^*$ is the optimal value of the original problem with infinitely many neurons. Now, notice that the optimization problem on the left hand side of \eqref{eq:strong_original_upperbound} is convex since it is an $\ell_1$-norm minimization problem with linear equality constraints. Therefore, strong duality holds for this problem, i.e., $P^*_e=D^*_e$. Using this result along with \eqref{eq:strong_dual_problemw_result}, we prove that strong duality holds for a finite width NN, i.e., $P_e^* = P^* = D^* =D_e^*$.


\end{proof}




\theostrongdualitytwolayerregularized*
\begin{proof}[\textbf{Proof of Theorem \ref{theo:strong_duality_twolayer_regularized}}]
Since there exists a single extreme point, we can construct a weight vector $\weight_e \in \mathbb{R}^{d }$ that is the extreme point. Then, the dual of \eqref{eq:problem_defreg_linear} with $\weightmat_1=\weight_e$
\begin{align}\nonumber 
    D_e^*=  &\max_{\dual} - \frac{1}{2}\| \dual- \vec{y}\|_2^2 +\frac{1}{2}\|\vec{y}\|_2^2 \mbox{ s.t. } |\dual^T \data \weight_e | \leq \beta.
\end{align}
Then the rest of the proof directly follows Proof of Theorem \ref{theo:strong_duality_twolayer}.
\end{proof}


\theostrongdualitylinearvector*
\begin{proof}[\textbf{Proof of Theorem \ref{theo:strong_duality_linear_vector}}]
Since there exist $r_\plantedsub$ possible extreme points, we can construct a weight matrix $\weightmat_e \in \mathbb{R}^{d \times r_\plantedsub}$ that consists of all the possible extreme points. Then, the dual of \eqref{eq:problem_def_linear_vector} with $\weightmat_1=\weightmat_e$
\begin{align}\nonumber 
    D_e^*=  &\max_{\dualmat}\trace(\dualmat^T \vec{Y})  \mbox{ s.t. } \|\dualmat^T \data \weight_{e,j} \|_2 \leq 1, \forall j \in [r_{\plantedsub}].
\end{align}
Then the rest of the proof directly follows Proof of Theorem \ref{theo:strong_duality_twolayer}.
\end{proof}


\subsection{Proofs for deep linear networks}\label{sec:proofs_deep_linear}


\propdeepweightnormeq*
\begin{proof}[\textbf{Proof of Proposition \ref{prop:deep_weightnorm_eq}}]
Let us first rescale the first $L-2$ layer weights as $\bar{\weightmat}_{l,j}=\frac{t_{l,j}}{\|\weightmat_{l,j}\|_F}\weightmat_{l,j}$, where $t_{l,j} > 0$. Defining $t_j^{L-2}=\prod_{l=1}^{L-2} \|\weightmat_{l,j}\|_F$, if $t_{l,j}$'s are chosen such that $\prod_{l=1}^{L-2}t_{l,j}=t_j^{L-2}$, then the rescaling does not alter the output of the network, i.e., $f_{\theta,L}(\data)=f_{\bar{\theta},L}(\data)$. Therefore, we can optimize $\{t_{l,j}\}_{l=1}^{L-2}$ as follows
\begin{align*}
    \min_{\{t_{l,j}\}_{l=1}^{L-2}} \frac{1}{2}\sum_{l=1}^{L-2}t_{l,j}^2 \text{ s.t. } \prod_{l=1}^{L-2}t_{l,j}=t_j^{L-2},
\end{align*}
for each $j \in [m]$. Apparently, the optimal scaling parameters obey $t_{1,j}=t_{2,j}= \ldots=t_{L-2,j}=t_j$. We also note that the optimal layer weights satisfy $t_{j}=\| \weightmat_{l,j}\|_2=\| \weightmat_{l,j}\|_F$, $\forall l \in [L-2]$, since the upper-bound is achieved when the matrices are rank-one (see \eqref{eq:deeplinear_equality_weights}).

\end{proof}


\theodeepweightsform*
\begin{proof}[\textbf{Proof of Theorem \ref{theo:deep_weights}}]
Using Lemma \ref{lemma:scaling_deep} and Proposition \ref{prop:deep_weightnorm_eq}, we have the following dual problem for \eqref{eq:problemdef_deeplinear}
\begin{align}\label{eq:dual_deeplinear}
        P^*= \min_{\{\theta_l\}_{l=1}^{L-1},\{t_j\}_{j=1}^m}\max_{\vec{\dual}}\vec{\dual}^T \vec{y} + \frac{1}{2}(L-2) \sum_{j=1}^m t_j^2 ~ \text{ s.t. } \begin{split}
           & | (\data \weightmat_{1,j} \ldots \weight_{L-1,j})^T \vec{\dual}|\leq 1,\; \weight_{L-1,j} \in \ball_2\\
       &  \|\weightmat_{l,j} \|_F \leq t_j, \; \forall l \in [L-2], \; \forall j \in [m].
        \end{split}
\end{align}
Now, let us assume that the optimal Frobenius norm for each layer $l$ is $t_j^*$ \footnote{With this assumption, $(L-2)\sum_{j=1}^m t_j^2$ becomes constant so we ignore this term for the rest of our derivations.}. Then, if we define $\Theta_{L-1}=\{\theta_1, \ldots , \theta_{L-1}| \|\weight_{L-1,j}\|_2\leq 1,\;  \|\weightmat_{l,j} \|_F \leq t_j^*, \; \forall l \in [L-2],\forall j \in [m] \}$, \eqref{eq:dual_deeplinear} reduces to the following problem
\begin{align}\label{eq:dual_deeplinear_withoutnorm}
\begin{split}
       &P^*\geq D^* =\max_{\vec{\dual}}\vec{\dual}^T \vec{y}  \text{ s.t. } | (\data \weightmat_{1,j} \ldots \weight_{L-1,j})^T \vec{\dual}|\leq 1,\; \forall \theta_l \in \Theta_{L-1},\; \forall l 
       \end{split},
\end{align}
where we change the order of min-max to obtain a lower bound for \eqref{eq:dual_deeplinear}. The dual of the semi-infinite problem in \eqref{eq:dual_deeplinear_withoutnorm} is given by
\begin{align}
    \label{eq:deep_primal_infinite}
            &\min \|\boldsymbol{\mu}\|_{TV} 
\mbox{ s.t.} \int_{\{\theta_l\}_{l=1}^{L-1}\in \Theta_{L-1} } \data\weightmat_1 \ldots \weight_{L-1}d\boldsymbol{\mu}(\theta_1,\ldots, \theta_{L-1}) = \vec{y}\,,    
\end{align}
where $\boldsymbol{\mu}$ is a signed Radon measure and $\|\cdot\|_{TV}$ is the total variation norm. We emphasize that \eqref{eq:deep_primal_infinite} has infinite width in each layer, however, an application of Caratheodory's theorem shows that the measure $\boldsymbol{\mu}$ in the integral can be represented by finitely many (at most $n+1$) Dirac delta functions \cite{rosset2007}.
Such selection of $\boldsymbol{\mu}$ yields the following problem 
\begin{align}
\label{eq:deep_primal_finite}
            &P^*_m=\min_{\{\theta_l\}_{l=1}^L} \|\weight_L\|_{1} \mbox{ s.t.} \sum_{j=1}^{m^*} \data\weightmat_{1,j} \ldots \weight_{L-1,j}\weightscalar_{L,j} = \vec{y},\; \theta_l \in \Theta_{L-1},\; \forall l
\end{align}
We first note that since the model in \eqref{eq:deep_primal_finite} has the same expressive power with the network in \eqref{eq:problemdef_deeplinear} as long as $m \geq m^*$, we have $P^* = P^*_m$. Since the dual of \eqref{eq:problemdef_deeplinear} and \eqref{eq:deep_primal_finite} are the same, we also have $D_m^*=D^*$, where $D_m^*$ is the optimal dual value for \eqref{eq:deep_primal_finite}.



We now apply the variable change in \eqref{eq:dual_linearv2} to \eqref{eq:dual_deeplinear_withoutnorm} as follows
\begin{align}\label{eq:dual_deeplinearv2}
       &  \max_{\vec{\dual}}\tilde{\vec{\dual}}^T \vec{\Sigma}_x\tilplanted_r  \text{ s.t. } \|   \weightmat_{L-2,j}^T\ldots \weightmat_{1,j}^T \vec{V}_x\vec{\Sigma}_x^T \tilde{\dual}  \|_2\leq 1,\; \forall \theta_l \in \Theta_{L-1}, \; \forall l.
\end{align}
We note that an upper-bound for the constraint in \eqref{eq:dual_deeplinearv2} can be achieved as follows
\begin{align*}
  \|   \weightmat_{L-2,j}^T\ldots \weightmat_{1,j}^T \vec{V}_x\vec{\Sigma}_x^T \tilde{\dual}  \|_2 \leq  \|\weightmat_{L-2,j}\|_2 \ldots \|\weightmat_{1,j}\|_2 \|\vec{V}_x\vec{\Sigma}_x^T \tilde{\dual}   \|_2 \leq t_j^{*^{L-2}}\|\vec{\Sigma}_x^T \tilde{\dual}   \|_2 ,
\end{align*}
where the last inequality follows from the constraint on each layer weight's norm, i.e., $\|\weightmat_{l,j}\|_F\leq t_j^*$. The equality can be reached when the layer weights are
\begin{align*}
    \weightmat_l= t_j^* \boldsymbol{\rho}_{l-1,j}  \boldsymbol{\rho}_{l,j}^T \; \forall l \in [L-2],
\end{align*}
where $\{\boldsymbol{\rho}_{l,j}\}_{l=1}^{L-2}$ is a set of arbitrary unit norm vectors and $\boldsymbol{\rho}_0=\vec{V}_x\vec{\Sigma}_x^T \tilde{\dual}/ \|\vec{V}_x\vec{\Sigma}_x^T \tilde{\dual} \|_2$. Hence, we can rewrite \eqref{eq:dual_deeplinearv2} as
\begin{align}\label{eq:dual_deeplinearv3}
       &  \max_{\vec{\dual}}\tilde{\vec{\dual}}^T \vec{\Sigma}_x\tilplanted_r  \text{ s.t. } t_j^{*^{L-2}}\|\vec{\Sigma}_x^T \tilde{\dual}   \|_2\leq 1, \forall j \in [m].
\end{align}
Therefore, the maximum objective value is achieved when $ \vec{\Sigma}_x^T \tilde{\dual}=c_1 \tilplanted_r$ for some $c_1 >0$, which yields the following set of optimal layer weight matrices
\begin{align}
\label{eq:deeplinear_equality_weights}
    &      \weightmat_{l,j}^* =\begin{cases}
     t_j^*\frac{\vec{V}_x\tilplanted_r  }{\|\tilplanted_r \|_2}\boldsymbol{\rho}_{1,j}^T\; \text{ if } l=1\\
     t_j^*\boldsymbol{\rho}_{l-1,j} \boldsymbol{\rho}_{l,j}^T\;  \text{ if } 1<l \leq L-2\\
     \boldsymbol{\rho}_{L-2,j}\; \text{ if } l=L-1
    \end{cases},
\end{align}
where $\boldsymbol{\rho}_{l,j} \in \mathbb{R}^{m_l}$ such that $\|\boldsymbol{\rho}_{l,j} \|_2=1, \; \forall l \in [L-2], \forall j \in [m]$. This shows that the weight matrices are rank-one and align with each other. Therefore, an arbitrary set of unit norm vectors, i.e., $\{\boldsymbol{\rho}_{l,j}\}_{l=1}^{L-2}$ can be chosen to achieve the maximum dual objective.

\end{proof}


\theodeepstrongduality*
\begin{proof}[\textbf{Proof of Theorem \ref{theo:deep_strong_duality}}]
We first select a set of unit norm vectors, i.e., $\{\boldsymbol{\rho}_{l,j}\}_{l=1}^{L-2}$, to construct weight matrices $\{\weightmat_{l,j}^e \}_{l=1}^{L-1}$ that satisfies \eqref{eq:deeplinear_equality_weights}. Then, the dual of \eqref{eq:problemdef_deeplinear} can be written as
\begin{align*}
\begin{split}
    &D_{e}^*= \max_{\dual} \dual^T \vec{y}  \\
       &\text{ s.t. } | (\data \weightmat_{1,j}^e \ldots \weight_{L-1,j}^e)^T \vec{\dual}|\leq 1, \, \forall j \in [m]
       \end{split}.
\end{align*}
Then, we have 
\begin{align}\label{eq:deep_strong_duality1}
   &P^*= \min_{\{\theta_l\}_{l=1}^{L-1}\in \Theta_{L-1}}\max_{\dual}  \dual^T \vec{y}  \hspace{1.9cm}\geq &&\max_{\dual} \dual^T \vec{y}  \\    \nonumber&\text{ s.t. } | (\data \weightmat_1 \ldots \weight_{L-1})^T \vec{\dual}|\leq 1 &&\text{s.t. } | (\data \weightmat_1 \ldots \weight_{L-1})^T \vec{\dual}|\leq 1,\; \forall \theta_l \in \Theta_{L-1}\\
     \nonumber &\hspace{6.25cm}  =  &&\max_{\dual}  \dual^T \vec{y}  \\    \nonumber&\hfill &&\text{s.t. }| (\data \weightmat_{1,j}^e \ldots \weight_{L-1,j}^e)^T \vec{\dual}|\leq 1 ,\, \forall j \\ \nonumber
       &\hspace{6.25cm}  = &&D_{e}^*=D^*=D^*_m,
\end{align}
where the first inequality follows from changing the order of min-max to obtain a lower bound and the first equality follows from the fact that $\{\weightmat_{l,j}^e \}_{l=1}^{L-1}$ maximizes the dual problem. Furthermore, we have the following relation between the primal problems
\begin{align}\label{eq:deep_strong_duality2}
  &P_e^*=\min_{\weight_L} \| \weight_L\|_1 \hspace{2cm}\geq \hspace{1cm} &&P^*= \min_{\{\theta_l\}_{l=1}^{L}\in \Theta_{L-1}  } \| \weight_L\|_1 \\\nonumber &\text{s.t. } \sum_{j=1}^m\data\weightmat_{1,j}^e \ldots \weight_{L-1.j}^e\weightscalar_{L,j}=\vec{y}  &&\text{s.t. } \sum_{j=1}^m \data \weightmat_{1,j} \ldots \weight_{L-1,j}\weightscalar_{L,j}=\vec{y}  ,
\end{align}
where the inequality follows from the fact that the original problem has infinite width in each layer. Now, notice that the optimization problem on the left hand side of \eqref{eq:deep_strong_duality2} is convex since it is an $\ell_1$-norm minimization problem with linear equality constraints. Therefore, strong duality holds for this problem, i.e., $P^*_e=D^*_e$ and we have $P_e^* \geq P^* = P_m^* \geq D_e^* =D^*=D_m^*$. Using this result along with \eqref{eq:deep_strong_duality1}, we prove that strong duality holds, i.e., $P_e^* = P^* =P_m^*= D_e^* =D^*=D_m^*$.


\end{proof}


\cordeeplinear*
\begin{proof}[\textbf{Proof of Corollary \ref{cor:deeplinear}}]
The proof directly follows from \eqref{eq:deeplinear_equality_weights}.
\end{proof}


\theodeepweightsregform*
\begin{proof}[\textbf{Proof of Theorem \ref{theo:deep_weights_reg}}]
Using Lemma \ref{lemma:scaling_deep} and Proposition \ref{prop:deep_weightnorm_eq}, we have the following dual for \eqref{eq:problemdef_deeplinear_regularized}
\begin{align*}
   &\max_{\dual} -\frac{1}{2} \|  \dual- \vec{y}\|_2^2+\frac{1}{2}\|\vec{y}\|_2 \mbox{ s.t. } \| (\data\weightmat_{1,j} \ldots \weightmat_{L-2,j})^T \dual\|_2 \leq \beta,\;\forall \theta_l \in \Theta_{L-1}, \; \forall l,j,
\end{align*}
where $\Theta_{L-1}=\{\theta_1, \ldots , \theta_{L-1}| \|\weight_{L-1,j}\|_2\leq 1,\;  \|\weightmat_{l,j} \|_F \leq t_j^*, \; \forall l \in [L-2],\forall j \in [m] \}$. Then, the weight matrices that maximize the value of the constraint can be described as
\begin{align*}
    & \weightmat_{l,j}^*= \begin{cases} t_j^*\frac{ \data^T \mathcal{P}_{\data,\beta}(\vec{y})}{\|\data^T \mathcal{P}_{\data,\beta}(\vec{y}) \|_2}\boldsymbol{\rho}_{1,j}^T\; \text{ if } l=1\\ t_j^*\boldsymbol{\rho}_{l-1,j} \boldsymbol{\rho}_{l,j}^T\; \text{ if } 1<l\leq L-2 \\
    \boldsymbol{\rho}_{L-2,j}\; \text{ if } l=L-1\end{cases}.
\end{align*}
where $\mathcal{P}_{\data,\beta}(\cdot)$ projects its input to $\left\{\vec{u}\in \mathbb{R}^n \; |\; \| \data^T\vec{u}\|_2 \leq \beta t_j^{*^{2-L}}\right\}$. 
\end{proof}


\cordeepstrongdualityregularized*
\begin{proof}[\textbf{Proof of Corollary \ref{cor:deep_strong_duality_regularized}}]
The proof directly follows from the analysis in this section and Theorem \ref{theo:deep_strong_duality}.
\end{proof}


\theodeepweightsformvector*

\begin{proof}[Proof of Theorem \ref{theo:deep_weights_vector}]
Using Proposition \ref{prop:deep_weightnorm_eq} and Lemma \ref{lemma:scaling_deep_vector}, we obtain the following dual problem
\begin{align}\label{eq:dual_deeplinear_vector}
        D&=\max_{\dualmat} \trace(\dualmat^T \vec{Y} )  \mbox{ s.t. } \|\dualmat^T \data \weightmat_{1,j} \ldots \weightmat_{L-2,j} \weight_{L-1,j}\|_2 \leq 1,\;\forall \theta_l \in \Theta_{L-1},\, \forall j \in[m] \nonumber \\
        &=\max_{\dualmat} \trace(\dualmat^T \vec{Y} )  \mbox{ s.t. } \sigma_{max}(\dualmat^T \data \weightmat_{1,j} \ldots \weightmat_{L-2,j} ) \leq 1,\;\forall \theta_l \in \Theta_{L-1},\, \forall j \in[m],
\end{align}
where $\Theta_{L-1}=\{\theta_1, \ldots , \theta_{L-1}| \|\weight_{L-1,j}\|_2\leq 1,\;  \|\weightmat_{l,j} \|_F \leq t_j^*, \; \forall l \in [L-2],\forall j \in [m] \}$.

It is straightforward to show that the optimal layer weights are the extreme points of the constraint in \eqref{eq:dual_deeplinear_vector}, which achieves the following upper-bound
\begin{align*}
    &\max_{\{\theta_l\}_{l=1}^{L-2} \in \Theta_{L-1}}\sigma_{max}(\dualmat^T\data \weightmat_{1,j} \ldots \weightmat_{L-2,j} ) \leq \sigma_{max}(\dualmat^T\data )t_j^{*^{L-2}}. 
\end{align*}
This upper-bound is achieved when the first $L-2$ layer weights are rank-one with the singular value $t_j^*$ by Proposition \ref{prop:deep_weightnorm_eq}. Additionally, the left singular vector of $\weightmat_{1,j}$ needs to align with one of the maximum right singular vectors of $\dualmat^T \data$. Since the upper-bound for the objective is achievable for any $\dualmat$, we can maximize the objective value, as in \eqref{eq:vectoroutput_obj_upperbound}, by choosing a matrix $\dualmat$ such that 
\begin{align*}
  \dualmat^T \vec{U}_x\vec{\Sigma}_x=\vec{V}_{\plantedsub} \begin{bmatrix} t_j^{*^{2-L}} \vec{I}_{r_{\plantedsub}} & \vec{0}_{r_x\times d-r_{\plantedsub}}\\ \vec{0}_{k-r_{\plantedsub} \times r_x} & \vec{0}_{k-r_{\plantedsub} \times d-r_{\plantedsub}}\end{bmatrix}\vec{U}_{\plantedsub}^T  ,
\end{align*}
where $\tilplantedmat_r=\vec{V}_{x}^T\plantedmat_r=\vec{U}_{\plantedsub} \vec{\Sigma}_{\plantedsub} \vec{V}_{\plantedsub}^T$. Thus, a set of optimal layer weights can be formulated as follows
\begin{align}
  \label{eq:deeplinear_equality_vectoroutput_weights}
    & \weightmat_{l,j}= \begin{cases}t_j^* \tilde{\vec{v}}_{\plantedsub,j}\boldsymbol{\rho}_{1,j}^T\;\text{ if } l=1\\
    t_j^*\boldsymbol{\rho}_{l-1,j} \boldsymbol{\rho}_{l,j}^T \;\text{ if } 1<l\leq L-2\\
    \boldsymbol{\rho}_{L-2,j}\;\text{ if } l=L-1\end{cases},
\end{align}
where $\tilde{\vec{v}}_{\plantedsub,j}$ is the $j\textsuperscript{th}$ maximal right singular vector of $\dualmat^{T^*}\data$ and we select a set of unit norm vectors $\{\boldsymbol{\rho}_{l,j}\}_{l=1}^{L-2}$ such that $\boldsymbol{\rho}_{l,j}^T\boldsymbol{\rho}_{l,k}=0,\; \forall j \neq k$. We now note that since there exist at most $K$ singular vectors of $\dualmat^T \data$ with non-zeros singular values, we can replace $m$ with $K$ without loss of generality.

\end{proof}


\theostrongdualitydeepvector*
\begin{proof}[\textbf{Proof of Theorem \ref{theo:strong_duality_deep_vector}}]
We first select a set of unit norm vectors, i.e., $\{\boldsymbol{\rho}_{l,j}\}_{l=1}^{L-2}$, to construct weight matrices $\{\weightmat_{l,j}^{e} \}_{l=1}^{L-1}$ that satisfies \eqref{eq:deeplinear_equality_vectoroutput_weights}. 
Then, we have 
\begin{align}\label{eq:deep_strong_duality1_vector}
   &P^*= \min_{\{\theta_l\}_{l=1}^{L-1}\in \Theta_{L-1}}\max_{\dualmat}  \trace(\dualmat^T \vec{Y})  \hspace{0.8cm}\geq &&\max_{\dualmat} \trace(\dualmat^T \vec{Y})  \\    \nonumber&\text{ s.t. } \sigma_{max}( \dualmat^T\data \weightmat_{1,j} \ldots \weightmat_{L-2,j})\leq 1, \forall j &&\text{s.t. } \sigma_{max}( \dualmat^T\data \weightmat_{1,j} \ldots \weightmat_{L-2,j})\leq 1,\; \forall j, \forall \theta_l \in \Theta_{L-1}\\
     \nonumber &\hspace{6.25cm}  =  &&\max_{\dualmat} \trace(\dualmat^T \vec{Y})  \\    \nonumber&\hfill && \text{s.t. }\sigma_{max}(\vec{\dualmat}^T\data \weightmat_{1,j}^{e} \ldots \weightmat_{L-2,j}^{e}) \leq 1,\; \forall j \\ \nonumber
       &\hspace{6.25cm}  = &&D_{e}^*=D^*=D^*_m,
\end{align}
where the first inequality follows from changing the order of min-max to obtain a lower bound and the first equality follows from the fact that $\{\weightmat_{l,j}^{e} \}_{l=1}^{L-1}$ maximizes the dual problem. Furthermore, we have the following relation between the primal problems
\begin{align}\label{eq:deep_strong_duality2_vector}
  &P_e^*=\min_{\weightmat_L} \sum_{j=1}^{m}\|\weight_{L,j}\|_2 \hspace{2cm}\geq \hspace{1cm} &&P^*= \min_{\{\theta_l\}_{l=1}^{L}\in \Theta_{L-1}  } \sum_{j=1}^{m}\|\weight_{L,j}\|_2 \\\nonumber &\text{s.t. } \sum_{j=1}^{m}\data\weightmat_{1,j}^{e} \ldots \weightmat_{L-1,j}^{e}\weight_{L,j}^T=\vec{Y}  &&\text{s.t. } \sum_{j=1}^m \data \weightmat_{1,j} \ldots \weight_{L-1,j}\weight_{L,j}^T=\vec{Y}  ,
\end{align}
where the inequality follows from the fact that the original problem has infinite width in each layer. Now, notice that the optimization problem on the left hand side of \eqref{eq:deep_strong_duality2_vector} is convex since it is an $\ell_2$-norm minimization problem with linear equality constraints. Therefore, strong duality holds for this problem, i.e., $P^*_e=D^*_e$ and we have $P_e^* \geq P^* = P_m^* \geq D_e^* =D^*=D_m^*$. Using this result along with \eqref{eq:deep_strong_duality1_vector}, we prove that strong duality holds, i.e., $P_e^* = P^* =P_m^*= D_e^* =D^*=D_m^*$.

\end{proof}


\theodeepweightsvectorregform*
\begin{proof}[\textbf{Proof of Theorem \ref{theo:deep_weights_vector_reg}}]
Using Lemma \ref{lemma:scaling_deep_vector} and Proposition \ref{prop:deep_weightnorm_eq}, we have the following dual for \eqref{eq:problemdef_deeplinear_vector_regularized}
\begin{align*}
   &\max_{\dualmat} -\frac{1}{2} \| \dualmat- \vec{Y}\|_F^2+\frac{1}{2}\|\vec{Y}\|_F^2  \mbox{ s.t. } \sigma_{max}(\dualmat^T \data\weightmat_{1,j} \ldots \weightmat_{L-2,j}) \leq \beta,\;\forall \theta_l \in \Theta_{L-1} ,\, \forall j \in[m],
\end{align*}
where we define $\Theta_{L-1}=\{\theta_1, \ldots , \theta_{L-1}| \|\weight_{L-1,j}\|_2\leq 1,\;  \|\weightmat_{l,j} \|_F \leq t_j^*, \; \forall l \in [L-2],\forall j \in [m] \}$. Then, as in \eqref{eq:deeplinear_equality_vectoroutput_weights}, a set of optimal layer weights is 
\begin{align*}
  \weightmat_{l,j}^*=\begin{cases} t_j^*\tilde{\vec{v}}_{x,j}\boldsymbol{\rho}_{1,j}^T\; \text{ if } l=1\\
  t_j^* \boldsymbol{\rho}_{l-1,j} \boldsymbol{\rho}_{l,j}^T\;\text{ if } 1<l\leq L-2\\
\boldsymbol{\rho}_{L-2,j}\; \text{ if } l=L-1\\
  \end{cases},
\end{align*}
where $\tilde{\vec{v}}_{x,j}$ is a maximal right singular vector of $\mathcal{P}_{\data,\beta}(\vec{Y})^T \data$ and $\mathcal{P}_{\data,\beta}(\cdot)$ projects its input to the set $\{\vec{U}\in \mathbb{R}^{n\times k}\;|\; \sigma_{max}( \vec{U}^T \data ) \leq \beta t_j^{*^{2-L}}\}$. Additionally, $\boldsymbol{\rho}_{l,j}$'s is an orthonormal set.
\end{proof}

\subsection{Proofs for deep ReLU networks}\label{sec:proofs_deep_relu}


\theodeeprelu*
\begin{props}\label{prop:deep_weightnorm_eq_relu}
First $L-2$ hidden layer weight matrices in \eqref{eq:problemdef_deeprelu} have the same operator and Frobenius norms.
\end{props}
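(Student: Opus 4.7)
The plan is to follow the convex-duality template used for deep linear networks in Theorems \ref{theo:deep_weights} and \ref{theo:deep_strong_duality}, but to exploit the rank-one, componentwise-nonnegative structure of $\data = \vec{c}\vec{a}_0^T$ to make each ReLU layer act as the identity map along the optimal trajectory. After applying the rescaling of Lemma \ref{lemma:scaling_deep_overview} and dualising with respect to $\weight_L$, the dual problem involves the constraint $\|\vec{A}_{L-1}^T \dual\|_\infty \le \beta$ maximised over weight matrices with $\|\weightmat_{l,j}\|_F \le t_j$ for $l \le L-2$ and $\|\weight_{L-1,j}\|_2 \le 1$. The task is then to characterise the extreme points of this semi-infinite constraint set.

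The first key step is an induction on $l$ showing that for any feasible weights, $\vec{A}_{l,j} = \vec{c}\vec{a}_{l,j}^T$ with $\vec{a}_{l,j} = (\weightmat_{l,j}^T \vec{a}_{l-1,j})_+ \in \mathbb{R}_+^{m_l}$. The induction step rests on the pointwise identity $(\vec{c}\vec{b}^T)_+ = \vec{c}(\vec{b})_+^T$, valid precisely because $\vec{c} \ge 0$. In particular $\vec{A}_{L-1,j} = a_{L-1,j}\vec{c}$ for a scalar $a_{L-1,j} \ge 0$, so the dual constraint collapses to $|\vec{c}^T\dual| \cdot \max_j a_{L-1,j} \le \beta$. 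Bounding the inner maximum by repeated Cauchy--Schwarz, the operator-to-Frobenius norm inequality, and the $1$-Lipschitz property of ReLU yields
\begin{align*}
a_{L-1,j} \le \|\vec{a}_{L-2,j}\|_2 \le \|\weightmat_{L-2,j}\|_2 \, \|\vec{a}_{L-3,j}\|_2 \le \cdots \le t_j^{L-2} \|\vec{a}_0\|_2.
\end{align*}
Joint saturation of this chain forces (i) each $\weightmat_{l,j}$ to be rank-one of the form $\boldsymbol{\phi}_{l-1,j}\boldsymbol{\phi}_{l,j}^T/\|\boldsymbol{\phi}_{l-1,j}\|_2$ with $\|\boldsymbol{\phi}_{l,j}\|_2 = t_j^*$ and left factor aligned with $\vec{a}_{l-1,j}$, and (ii) $\boldsymbol{\phi}_{l,j} \in \mathbb{R}_+^{m_l}$, since any negative coordinate would make the ReLU strictly contract the pre-activation and break a subsequent inequality. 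Taking $\boldsymbol{\phi}_{0,j} = \vec{a}_0$ recovers exactly the formulas in the statement, and one verifies inductively that $\vec{a}_{l,j} = t_j^{*\,l-1}\|\vec{a}_0\|_2\,\boldsymbol{\phi}_{l,j}$, consistent with the alignment condition at the next layer.

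For strong duality I mimic the argument of Theorem \ref{theo:deep_strong_duality}. Fix weights $\{\weightmat_{l,j}^e\}_{l=1}^{L-1}$ of the extreme-point form derived above and reduce the primal \eqref{eq:problemdef_deeprelu} to the scalar $\ell_1$ problem $\min \|\weight_L\|_1$ subject to $\vec{c}\sum_j a_{L-1,j}^e\,\weightscalar_{L,j} = \vec{y}$. Strong duality for this convex reduced problem is classical, and the min--max exchange argument used throughout the paper, together with Carath\'eodory's theorem providing a representation with at most $n+1$ subnetworks, closes the gap $P^* = D^*$.

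The main obstacle is the necessity direction in step two: showing that joint tightness of the Cauchy--Schwarz--operator-norm chain forces $\boldsymbol{\phi}_{l,j}$ into the nonnegative orthant. This is where the ReLU proof genuinely departs from its deep-linear analogue. One must verify that a negative coordinate in any intermediate $\boldsymbol{\phi}_{l,j}$ strictly decreases $\|\vec{a}_{l,j}\|_2$ after the ReLU, propagating a strict gap to a later link in the chain; thus nonnegativity is not merely a sufficient condition for achieving the product bound but a \emph{necessary} one, which pins down the extreme points of the ReLU dual constraint and justifies the closed-form expression.
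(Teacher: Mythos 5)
Your argument proves (a sketch of) Theorem \ref{theo:deeprelu} rather than this proposition, and in doing so it assumes the very fact the proposition is supposed to supply. Concretely, you start from the rescaled formulation of Lemma \ref{lemma:scaling_deep_overview}, in which the first $L-2$ layers of subnetwork $j$ are constrained by a \emph{single} common bound $\|\weightmat_{l,j}\|_F\le t_j$ and the regularizer $\sum_{l=1}^{L-2}\|\weightmat_{l,j}\|_F^2$ is replaced by $(L-2)t_j^2$. That replacement is an equality only when all $L-2$ Frobenius norms coincide; otherwise $(L-2)t_j^2$ with $t_j=\max_l\|\weightmat_{l,j}\|_F$ strictly overestimates the true penalty, and indeed the paper's proof of Lemma \ref{lemma:scaling_deep_overview} explicitly cites the equal-norm proposition to justify this step. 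So when your saturation chain $a_{L-1,j}\le \|\weightmat_{L-2,j}\|_2\cdots\|\weightmat_{1,j}\|_2\|\vec{a}_0\|_2\le t_j^{L-2}\|\vec{a}_0\|_2$ forces each $\|\weightmat_{l,j}\|_2=\|\weightmat_{l,j}\|_F=t_j$, the ``same norm across layers'' conclusion is inherited from the common bound you imposed at the outset, not derived. The reasoning is circular.

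What is missing is the cross-layer allocation argument. Give each layer its own norm $t_{l,j}=\|\weightmat_{l,j}\|_F$. Your chain correctly shows that, along the rank-one nonnegative trajectory, the dual constraint depends on these norms only through the product $\prod_{l=1}^{L-2}t_{l,j}$, and that the operator-to-Frobenius inequality saturates only for rank-one matrices, which gives $\|\weightmat_{l,j}\|_2=\|\weightmat_{l,j}\|_F$ layer by layer. The remaining step --- the one the paper carries out by reducing the extreme-point problem to maximizing $\prod_l t_{l,j}$ subject to $\sum_l t_{l,j}\le t_j$, $t_{l,j}\ge 0$ (equivalently, minimizing $\sum_l t_{l,j}^2$ for a fixed product, as in Proposition \ref{prop:deep_weightnorm_eq}) --- is the AM--GM observation that the optimal allocation is uniform, $t_{1,j}=\cdots=t_{L-2,j}$. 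That observation is the entire content of the proposition, and your write-up never performs it. The other ingredients of your sketch (the identity $(\vec{c}\vec{b}^T)_+=\vec{c}(\vec{b})_+^T$ for $\vec{c}\ge 0$, the saturation analysis pinning down $\boldsymbol{\phi}_{l,j}\in\mathbb{R}_+^{m_l}$, and the strong-duality reduction) are sound but belong to the proof of Theorem \ref{theo:deeprelu}, which takes this proposition as an input.
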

\begin{proof}[\textbf{Proof of Proposition \ref{prop:deep_weightnorm_eq_relu}}]
Let us first denote the sum of the norms for the first $L-2$ layer as $t_j$, i.e., $t_j=\sum_{l=1}^{L-2} t_{l,j}$, where $t_{l,j}=\| \weightmat_{l,j}\|_2=\| \weightmat_{l,j}\|_F$ since the upper-bound is achieved when the matrices are rank-one. Then, to find the extreme points (see the details in Proof of Theorem \ref{theo:deeprelu}), we need to solve the following problem
\begin{align*}
\begin{split}
     &\argmax_{\{\theta_l\}_{l=1}^{L-2} } |\dual^{*^T} \vec{c}|\, \| \vec{a}_{L-2}\|_2  =    \argmax_{\{\theta_l\}_{l=1}^{L-2} \in \Theta_{L-1}} |\dual^{*^T} \vec{c}|\, \|  (\vec{a}_{L-3,j}^T \weightmat_{L-2,j})_+\|_2  
\end{split}
\end{align*}
where we use $\vec{a}_{L-2,j}^T= (\vec{a}_{L-3,j}^T \weightmat_{L-2,j})_+$. Since $\| \weightmat_{L-2,j}\|_F =t_{L-2,j}=t_j-\sum_{l=1}^{L-3}t_{l,j}$, the objective value above becomes $|\dual^{*^T} \vec{c}|\, \|  (\vec{a}_{L-3,j}\|_2 \left(t_j-\sum_{l=1}^{L-3}t_{l,j}\right)$. Applying this step to all the remaining layer weights gives the following problem
\begin{align*}
\begin{split}
     &\argmax_{\{t_{l,j}\}_{l=1}^{L-3} } |\dual^{*^T} \vec{c}|\, \| \vec{a}_{0}\|_2  \left(t_j-\sum_{l=1}^{L-3}t_{l,j}\right) \prod_{j=1}^{L-3}t_{l,j} \text{ s.t. }\sum_{l=1}^{L-3} t_{l,j} \leq t_j,\; t_{l,j}\geq 0
\end{split}.
\end{align*}
Then, the proof directly follows from Proof of Proposition \ref{prop:deep_weightnorm_eq}.
\end{proof}

\begin{proof}[\textbf{Proof of Theorem \ref{theo:deeprelu}}]
Using Lemma \ref{lemma:scaling_deep} and Proposition \ref{prop:deep_weightnorm_eq_relu}, this problem can be equivalently stated as
\begin{align}
\label{eq:primal_generic}
\begin{split}
         \min_{\{\theta_l\}_{l=1}^{L} \in \Theta_{L-1}} \|\weight_L\|_1 \text{ s.t. }&\vec{A}_{l,j}=(\vec{A}_{l-1,j}\weightmat_{l,j})_+, \text{ } \forall l\in [L-1], \forall j \in [m]\\
    &\vec{A}_{L-1}\weight_L=\vec{y}
\end{split},
\end{align}
which also has the following dual form
\begin{align}
\label{eq:dual_generic1}
  \begin{split}
        P^*= &\min_{\{\theta_l\}_{l=1}^{L-1} \in \Theta_{L-1}} \max_{\dual}\dual^T \vec{y}\\
    & \text{s.t. } \| \vec{A}_{L-1}^T \dual\|_{\infty}\leq 1
\end{split}  .
\end{align}
Notice that we remove the recursive constraint in \eqref{eq:dual_generic1} for notational simplicity, however, $\vec{A}_{L-1}$ is  still a function of all the layer weights except $\weight_L$. Changing the order of min-max in \eqref{eq:dual_generic1} gives
\begin{align}
\label{eq:dual_generic2}
  \begin{split}
         P^* \geq D^*=&\max_{\dual}\dual^T \vec{y} \text{ s.t. }  \|\vec{A}_{L-1}^T \dual\|_{\infty}\leq 1, \;\forall \theta_l \in \Theta_{L-1}, \; \forall l \in [L-1]
\end{split}.
\end{align}
 The dual of the semi-infinite problem in \eqref{eq:dual_generic2} is given by
\begin{align}
    \label{eq:deeprelu_primal_infinite}
    \begin{split}
            &\min \|\boldsymbol{\mu}\|_{TV}\\
    &\mbox{s.t.} \int_{\{\theta_l\}_{l=1}^{L-1}\in \Theta_{L-1} } \relu{\vec{A}_{L-2}\weight_{L-1}}d\boldsymbol{\mu}(\vec{\theta_1,\ldots, \theta_{L-1}}) = \vec{y}\,,  
    \end{split}
\end{align}
where $\boldsymbol{\mu}$ is a signed Radon measure and $\|\cdot\|_{TV}$ is the total variation norm. We emphasize that \eqref{eq:deeprelu_primal_infinite} has infinite width in each layer, however, an application of Caratheodory's theorem shows that the measure $\boldsymbol{\mu}$ in the integral can be represented by finitely many (at most $n+1$) Dirac delta functions \cite{rosset2007}. Thus, we choose $$\boldsymbol{\mu}= \sum_{j=1}^{m} \delta(\weightmat_1-\weightmat_{1,j}, \ldots, \weight_{L-1}-\weight_{L-1,j})\weightscalar_{L,j},$$ where $\delta(\cdot)$ is the Dirac delta function and the superscript indicates a particular choice for the corresponding layer weight. This selection of $\boldsymbol{\mu}$ yields the following problem 
\begin{align}
\label{eq:deeprelu_primal_finite}
\begin{split}
            &P^*_m=\min_{\{\theta_l\}_{l=1}^L} \|\weight_L\|_{1}\\
    &\mbox{s.t.} \sum_{j=1}^{m} \relu{\vec{A}_{L-2,j}\weight_{L-1,j}} \weightscalar_{L,j} = \vec{y},\; \theta_l \in \Theta_{L-1},\; \forall l \in [L-1]
\end{split}.
\end{align}
Here, we note that the model in \eqref{eq:deeprelu_primal_finite} has the same expressive power with ReLU networks, thus, we have $P^* = P^*_m$. 

As a consequence of \eqref{eq:dual_generic2}, we can characterize the optimal layer weights for \eqref{eq:deeprelu_primal_finite} as the extreme points that solve
\begin{align} \label{eq:extreme_Llayer}
    &\argmax_{\{\theta_l\}_{l=1}^{L-1} \in \Theta_{L-1}}|\dual^{*^T}(\vec{A}_{L-2,j}\weight_{L-1,j})_+ | 
\end{align}
where $\dual^*$ is the optimal dual parameter. Since we assume that $\data=\vec{c}\vec{a}_{0}^T$ with $\vec{c} \in \mathbb{R}^n_+$, we have $\vec{A}_{L-2,j}=\vec{c}\vec{a}_{L-2,j}^T$, where $\vec{a}_{l,j}^T=(\vec{a}_{l-1,j}^T\weightmat_{l,j})_+$, $\vec{a}_{l,j} \in \mathbb{R}^{m_{l}}_+$ and $\forall l \in [L-1],\,\forall j \in [m]$. Based on this observation, we have $\weight_{L-1,j}=\vec{a}_{L-2,j}/\| \vec{a}_{L-2,j}\|_2$, which reduces  \eqref{eq:extreme_Llayer} to the following
\begin{align}\label{eq:extreme_Llayerv2}
       &\argmax_{\{\theta_l\}_{l=1}^{L-2} \in \Theta_{L-1}} |\dual^{*^T} \vec{c}|\, \| \vec{a}_{L-2,j}\|_2  
\end{align}
We then apply the same approach to all the remaining layer weights. However, notice that each neuron for the first $L-2$ layers must have bounded Frobenius norms due to the norm constraint. If we denote the optimal $\ell_2$ norms vector for the neuron in the $l\textsuperscript{th}$ layer as $\boldsymbol{\phi}_{l,j} \in \mathbb{R}^{m_l}_+$, then we have the following formulation for the layer weights that solve \eqref{eq:extreme_Llayer}
\begin{align}\label{eq:deeprelu_equality_weights}
    \weightmat_{l,j}=\frac{\boldsymbol{\phi}_{l-1,j}}{\| \boldsymbol{\phi}_{l-1,j}\|_2}\boldsymbol{\phi}_{l,j}^T, \: \forall l \in [L-2],\;\weight_{L-1,j}=\frac{\boldsymbol{\phi}_{L-2,j}}{\| \boldsymbol{\phi}_{L-2}\|_2},
\end{align}
where $\boldsymbol{\phi}_{0,j}=\vec{a}_0$, $\{\boldsymbol{\phi}_{l,j}\}_{l=1}^{L-2}$ is a set of nonnegative vectors satisfying $\|\boldsymbol{\phi}_{l,j}\|_2=t_j^*,\; \forall l \in [L-2]$. Therefore, the set of weights in \eqref{eq:deeprelu_equality_weights} are optimal for \eqref{eq:problemdef_deeprelu}. Moreover, as a direct consequence of Theorem \ref{theo:deep_strong_duality}, strong duality holds for this case as well.

\end{proof}


\theodeeprelurankone*
 \begin{proof}[\textbf{Proof of Theorem \ref{theo:deeprelu_rankone}}]
 Given $\data=\vec{c}\vec{a}_0^T$, all possible extreme points can be characterized as follows
\begin{align*}
        \argmax_{\bias,\weight:\|\weight\|_2= 1 } |{\dual}^T \relu{\data \weight+\bias \vec{1}}\,|&=\argmax_{\bias,\weight:\|\weight\|_2= 1 } |{\dual}^T \relu{\vec{c} \vec{a}_0^T \weight +\bias\vec{1}}\,|\\
        &=\argmax_{\bias,\weight:\|\weight\|_2= 1 } \Big|\sum_{i=1}^n \dualscalar_i \relu{c_i \vec{a}_0^T \weight+\bias }\,\Big|
\end{align*}
which can be equivalently stated as
\begin{align*}
    \argmax_{\bias,\weight:\|\weight\|_2= 1} \sum_{i \in \mathcal{S}} \dualscalar_i c_i\vec{a}_0^T \weight+\sum_{i \in \mathcal{S}}\dualscalar_i \bias \text{ s.t. } \begin{cases} &c_i\vec{a}_0^T \weight+\bias \geq 0, \forall i \in \mathcal{S} \\
    &c_j\vec{a}_0^T \weight+\bias  \leq 0, \forall j \in \mathcal{S}^c\end{cases}
,\end{align*}
which shows that $\weight$ must be either positively or negatively aligned with $\vec{a}_0$, i.e.,
$\weight= s \frac{\vec{a}_0}{\|\vec{a}_0\|_2}$, where $s= \pm 1$. Thus, $\bias$ must be in the range of $[\max_{i \in \mathcal{S}}(-s c_i \| \vec{a}_0\|_2), \text{ }\min_{k \in \mathcal{S}^c}(-s c_k \| \vec{a}_0\|_2)]$ Using these observations, extreme points can be formulated as follows
\begin{align*}
    \weight_{\dualscalar}= \begin{cases}\frac{\vec{a}_0}{\| \vec{a}_0\|_2} &\text{if } \sum_{i \in \mathcal{S}} \dualscalar_i c_i \geq 0\\\frac{-\vec{a}_0}{\| \vec{a}_0\|_2} & \text{otherwise}\end{cases}
    \text{ and } \bias_{\dualscalar}= \begin{cases}\min_{k \in \mathcal{S}^c}(-s_{\dualscalar} c_k \| \vec{a}_0\|_2) &\text{if } \sum_{i \in \mathcal{S}} \dualscalar_i \geq 0\\\max_{i \in \mathcal{S}}(-s_{\dualscalar} c_i \| \vec{a}_0\|_2) & \text{otherwise}\end{cases},
\end{align*}
where $s_{\dualscalar}= \sign(\sum_{i \in \mathcal{S}} \dualscalar_i c_i)$.
\end{proof}


\propeffectbias*
\begin{proof}[\textbf{Proof of Proposition \ref{prop:effect_bias}}]
Here, we add biases to the neurons in the last hidden layer of \eqref{eq:problemdef_deeprelu}. For this case, all the equations in \eqref{eq:primal_generic}-\eqref{eq:dual_generic2} hold except notational changes due to the bias term. Thus, \eqref{eq:extreme_Llayer} changes as
\begin{align} \label{eq:extreme_Llayer_bias}
    \argmax_{ \{\theta_l\}_{l=1}^{L-1} \in \Theta_{L-1},\bias_j} |\dual^{*^T}(\vec{A}_{L-2,j}\weight_{L-1,j}+\bias_j\vec{1}_n )_+ | &=\argmax_{\{\theta_l\}_{l=1}^{L-1} \in \Theta_{L-1},\bias_j} |{\dual}^{*^T} \relu{\vec{c} \vec{a}_{L-2,j}^T \weight_{L-1,j} +\bias_j\vec{1}_n}\,|\nonumber \\
        &=\argmax_{\{\theta_l\}_{l=1}^{L-2} \in \Theta_{L-1},\bias_j} \Big|\sum_{i=1}^n \dualscalar^*_i \relu{c_i \vec{a}_{L-2,j}^T \weight_{L-1,j}+\bias_j }\,\Big|
\end{align}
which can also be written as
\begin{align*}
    \argmax_{\{\theta_l\}_{l=1}^{L-1} \in \Theta_{L-1},\bias_j}  \sum_{i \in \mathcal{S}} \dualscalar_i^* c_i\vec{a}_{L-2,j}^T \weight_{L-1,j}+\sum_{i \in \mathcal{S}} \dualscalar_i^* \bias_j \text{ s.t. } \begin{cases} &c_i\vec{a}_{L-2,j}^T \weight_{L-1}+\bias_j \geq 0, \forall i \in \mathcal{S} \\
    &c_j\vec{a}_{L-2,j}^T \weight_{L-1,j}+\bias_j  \leq 0, \forall j \in \mathcal{S}^c\end{cases}
,\end{align*}
where $\mathcal{S}$ and $\mathcal{S}^c$ are the indices for which ReLU is active and inactive, respectively. This shows that $\weight_{L-1,j}$ must be $\weight_{L-1,j}= \pm 1 \frac{\vec{a}_{L-2,j}}{\|\vec{a}_{L-2,j}\|_2}$ and $\bias_j \in [\max_{i \in \mathcal{S}}(- c_i \| \vec{a}_{L-2,j}\|_2), \text{ }\min_{k \in \mathcal{S}^c}(- c_k \|\vec{a}_{L-2,j}\|_2)]$. Then, we obtain the following
\begin{align}\label{eq:deeprelu_equality_bias_weights}
    \weight_{L-1,j}^*= \begin{cases}\frac{\vec{a}_{L-2,j}}{\| \vec{a}_{L-2,j}\|_2} &\text{if } \sum_{i \in \mathcal{S}} \dualscalar^*_i c_i \geq 0\\\frac{-\vec{a}_{L-2,j}}{\| \vec{a}_{L-2,j}\|_2} & \text{otherwise}\end{cases}
    \text{ and } \bias_j^*= \begin{cases}\min_{k \in \mathcal{S}^c}(-s_{\dualscalar^*} c_k \| \vec{a}_{L-2,j}\|_2) &\text{if } \sum_{i \in \mathcal{S}} \dualscalar^*_i \geq 0\\\max_{i \in \mathcal{S}}(-s_{\dualscalar^*} c_i \| \vec{a}_{L-2,j}\|_2) & \text{otherwise}\end{cases},
\end{align}
where $s_{\dualscalar^*}= \sign(\sum_{i \in \mathcal{S}} \dualscalar^*_i c_i)$. This result reduces \eqref{eq:extreme_Llayer_bias} to the following problem
\begin{align*}
       &\argmax_{\{\theta_l\}_{1}^{L-2} \in \Theta_{L-1}} |C(\dual^*,\vec{c})|\, \| \vec{a}_{L-2,j}\|_2 ,
\end{align*}
where $C(\dual^*,\vec{c})$ is constant scalar independent of $\{\weightmat_{l,j}\}_{l=1}^{L-2}$. Hence, this problem and its solutions are the same with \eqref{eq:extreme_Llayerv2} and \eqref{eq:deeprelu_equality_weights}, respectively.

\end{proof}


\corrankonekinks*
\cormultilayerrankonekinks*
\begin{proof}[\textbf{Proof of Corollary \ref{cor:rankone_kinks} and \ref{cor:multilayer_rankone_kinks}}]
Let us particularly consider the input sample $\vec{a}_0$. Then, the activations of the network defined by \eqref{eq:deeprelu_equality_weights} and \eqref{eq:deeprelu_equality_bias_weights} are
\begin{align*}
    &\vec{a}_{1,j}^T=(\vec{a}_0^T \weightmat_1)_+=\Big(\vec{a}_0^T \frac{\vec{a}_0}{\|\vec{a}_0\|_2}\boldsymbol{\phi}_1^T\Big)_+= \|\vec{a}_0\|_2 \boldsymbol{\phi}_{1,j}^T\\
    &\vec{a}_{2,j}^T=(\vec{a}_{1,j}^T \weightmat_2)_+=\Big(\vec{a}_{1,j}^T \frac{\vec{a}_{1,j}}{\|\vec{a}_{1,j}\|_2}\boldsymbol{\phi}_{2,j}^T\Big)_+= \|\vec{a}_0\|_2 \|\boldsymbol{\phi}_{1,j}^T\|_2 \boldsymbol{\phi}_{2,j}^T\\
    &\vdots\\
    &\vec{a}_{L-2,j}^T=(\vec{a}_{L-3,j}^T \weightmat_{L-2,j})_+=\Big(\vec{a}_{L-3,j}^T \frac{\vec{a}_{L-3,j}}{\|\vec{a}_{L-3}\|_2}\boldsymbol{\phi}_{L-2}^T\Big)_+= \|\vec{a}_0\|_2 \|\boldsymbol{\phi}_{1,j}^T\|_2 \ldots \|\boldsymbol{\phi}_{L-3,j}^T\|_2 \boldsymbol{\phi}_{L-2,j}^T\\
    &a_{L-1,j}=(\vec{a}_{L-2,j}^T \weight_{L-1,j}+b)_+=(\|\vec{a}_{L-2,j}\|_2-\|\vec{a}_{L-2,j}\|_2)_+=0.
\end{align*}
Thus, if we feed $c_i \vec{a}_0$ to the network, we get $a_{L-1,j}=(c_i\|\vec{a}_{L-2,j}\|_2-c_i\|\vec{a}_{L-2,j}\|_2)_+=0$, where we use the fact that optimal biases are in the form of $b_j=-c_i \|\vec{a}_{L-2,j} \|_2$ as proved in \eqref{eq:deeprelu_equality_bias_weights}. This analysis proves that the kink of each ReLU activation occurs exactly at one of the data points.
\end{proof}


\propdeeprelustrongdualityvector*
\begin{proof}[\textbf{Proof of Proposition \ref{prop:deeprelu_strong_duality_vector}}]
For vector outputs, we have the following training problem
\begin{align*}
    &\min_{\{\theta_l\}_{l=1}^L}  \frac{1}{2}\normf{f_{\theta,L}(\data)-\vec{Y}}^2+\frac{\beta}{2}\sum_{j=1}^m\sum_{l=1}^L\| \weightmat_{l,j} \|_F^2   .
\end{align*}
After a suitable rescaling as in the previous case, the above problem has the following dual
\begin{align}
\label{eq:dual_generic2_vector}
  \begin{split}
         P^* \geq D^*=&\max_{\dualmat} -\frac{1}{2} \| \dualmat- \vec{Y}\|_F^2+\frac{1}{2}\|\vec{Y}\|_F^2
 \text{ s.t. }  \|\dualmat^T \relu{\vec{A}_{L-2,j}\weight_{L-1,j}}\|_2  \leq \beta, \;\forall \theta_l \in \Theta_{L-1}, \; \forall l \in [L-1], \forall j \in [m]
\end{split}.
\end{align}
Using \eqref{eq:dual_generic2_vector}, we can characterize the optimal layer weights as the extreme points that solve
\begin{align} \label{eq:extreme_Llayer_vector}
    &\argmax_{\{\theta_l\}_{l=1}^{L-1} \in \Theta_{L-1}}\|\dualmat^{*^T}(\vec{A}_{L-2,j}\weight_{L-1,j})_+ \|_2,
\end{align}
where $\dualmat^*$ is the optimal dual parameter. Since we assume that $\data=\vec{c}\vec{a}_{0}^T$ with $\vec{c} \in \mathbb{R}^n_+$, we have $\vec{A}_{L-2,j}=\vec{c}\vec{a}_{L-2,j}^T$, where $\vec{a}_{l,j}^T=(\vec{a}_{l-1,j}^T\weightmat_{l,j})_+$, $\vec{a}_{l,j} \in \mathbb{R}^{m_{l}}_+$ and $\forall l \in [L-1]$. Based on this observation, we have $\weight_{L-1,j}=\vec{a}_{L-2,j}/\| \vec{a}_{L-2,j}\|_2$, which reduces  \eqref{eq:extreme_Llayer_vector} to the following
\begin{align*}
       &\argmax_{\{\theta_l\}_{l=1}^{L-2} \in \Theta_{L-1}} \|\dualmat^{*^T} \vec{c} \|_2 \, \| \vec{a}_{L-2,j}\|_2  .
\end{align*}
Then, the rest of steps directly follow Theorem \ref{theo:deeprelu} yielding the following weight matrices
\begin{align*}
    \weightmat_{l,j}=\frac{\boldsymbol{\phi}_{l-1,j}}{\| \boldsymbol{\phi}_{l-1,j}\|_2}\boldsymbol{\phi}_{l,j}^T, \: \forall l \in [L-2],\;\weight_{L-1,j}=\frac{\boldsymbol{\phi}_{L-2,j}}{\| \boldsymbol{\phi}_{L-2,j}\|_2},
\end{align*}
where $\boldsymbol{\phi}_{0,j}=\vec{a}_0$, $\{\boldsymbol{\phi}_{l,j}\}_{l=1}^{L-2}$ is a set of nonnegative vectors satisfying $\|\boldsymbol{\phi}_{l,j}\|_2=t_j^*,\; \forall l \in [L-2], \forall j \in [m]$. 
\end{proof}

\theodeepreluwhitevectorclosedform*
\begin{proof}[\textbf{Proof of Theorem \ref{theo:closedform_regularized_multiclass}}]
For vector outputs, we have the following training problem
\begin{align}\label{eq:primal_generic2_whitened_vector}
       &P^*=\min_{\theta \in \Theta } \frac{1}{2}\|f_{\theta,L}(\data)-\vec{Y}\|_F^2+\frac{\beta}{2}\sum_{j=1}^m\sum_{l=1}^L \|\weightmat_{l,j}\|_F^2
\end{align}
After a suitable rescaling as in the previous case, the above problem has the following dual
\begin{align}
\label{eq:dual_generic2_whitened_vector}
  \begin{split}
         P^* \geq D^*=&\max_{\dual} -\frac{1}{2}\normf{\dualmat-\vec{Y}}+\frac{1}{2}\normf{\vec{Y}}
 \text{ s.t. }  \|\dualmat^T \relu{\vec{A}_{L-2,j}\weight_{L-1,j}}\|_2  \leq \beta, \;\forall \theta_l \in \Theta_{L-1}, \; \forall l \in [L-1], \; \forall j \in [m]
\end{split}.
\end{align}
Using \eqref{eq:dual_generic2_whitened_vector}, we can characterize the optimal layer weights as the extreme points that solve
\begin{align} \label{eq:extreme_Llayer_whitened_vector}
    &\argmax_{\{\theta_l\}_{l=1}^{L-1} \in \Theta_{L-1}}\|\dualmat^{*^T}(\vec{A}_{L-2,j}\weight_{L-1,j})_+ \|_2,
\end{align}
where $\dualmat^*$ is the optimal dual parameter. We first note that since $\data$ is whitened such that $\data \data^T=\vec{I}_n$ and labels are one-hot encoded, the dual problem has a closed-form solution as follows
\begin{align}\label{eq:multiclass_dualparam}
   \dual_k^*= \begin{cases}
    \beta t_j^{*^{2-L}} \frac{\vec{y}_k}{\|\vec{y}_k\|_2} &\text{ if } \beta \leq \|\vec{y}_k\|_2\\\
    \hfil\vec{y}_k  &\text{ otherwise } 
    \end{cases},\quad \forall k \in [K].
\end{align}
We now note that since $\vec{Y} $ has orthogonal one-hot encoded columns, the dual constraint can be decomposed into $k$ maximization problems each of which can be maximized independently to find a set of extreme points. In particular, the $j^{th}$ problem can be formulated as follows
\begin{align*}
    &\argmax_{\{\theta_l\}_{l=1}^{L-1} \in \Theta_{L-1}}|\vec{y}_k^T(\vec{A}_{L-2,j}\weight_{L-1,j})_+ |\leq \max\left\{\|\relu{\vec{y}_k}\|_2,\|\relu{\vec{-y}_k}\|_2\right\}. 
\end{align*}
Then, noting the whitened data assumption, the rest of steps directly follow Theorem \ref{theo:deeprelu} yielding the following weight matrices
\begin{align}\label{eq:multiclass_extremepoints}
    \weightmat_{l,j}=\frac{\boldsymbol{\phi}_{l-1,j}}{\| \boldsymbol{\phi}_{l-1,j}\|_2}\boldsymbol{\phi}_{l,j}^T, \: \forall l \in [L-2],\;\weight_{L-1,j}=\frac{\boldsymbol{\phi}_{L-2,j}}{\| \boldsymbol{\phi}_{L-2,j}\|_2}  ,
\end{align}
where $\boldsymbol{\phi}_{0,j}=\data^T \vec{y}_k$ and $\{\boldsymbol{\phi}_{l,j}\}_{l=1}^{L-2}$ is a set of nonnegative vectors satisfying $\|\boldsymbol{\phi}_{l,j}\|_2=t_j^*, \; \forall l$ and $\boldsymbol{\phi}_{l,i}^T\boldsymbol{\phi}_{l,j}=0 \; \forall i\neq j$. 

We now note that given the hidden layer weight in \eqref{eq:multiclass_extremepoints}, the primal problem in \eqref{eq:primal_generic2_whitened_vector} is convex and differentiable with respect to the output layer weight $\weightmat_L$. Thus, we can find the optimal output layer weights by simply taking derivative and equating it to zero. Applying these steps yields the following output layer weights
\begin{align}\label{eq:multiclass_extremepoints2}
\begin{split}
    &\weightmat_{L-1}=\begin{bmatrix}\frac{\boldsymbol{\phi}_{L-2,1}}{\| \boldsymbol{\phi}_{L-2,1}\|_2} &\ldots& \frac{\boldsymbol{\phi}_{L-2,K}}{\| \boldsymbol{\phi}_{L-2,K}\|_2} \end{bmatrix}=\sum_{r=1}^{K}\frac{\boldsymbol{\phi}_{L-2,r}}{\| \boldsymbol{\phi}_{L-2,r}\|_2}\boldsymbol{\phi}_{L-1,r}^T\\
&\weightmat_L= \sum_{r=1}^{K} \left(\| \boldsymbol{\phi}_{0,r}\|_2-\beta\right)_+ \boldsymbol{\phi}_{L-1,r} \vec{e}_r^T ,
\end{split}
\end{align}
where $\boldsymbol{\phi}_{L-1,r}=\vec{e}_r$ is the $r^{th}$ ordinary basis vector.

Let us now assume that $t_j^*=1$ for notational simplicity and then show that strong duality holds, i.e., $P^*=D^*$. We first denote the set of indices that yield an extreme point as $\mathcal{E}=\{j:\beta \leq \|\vec{y}_j\|_2, j \in [o]\}$. Then we compute the objective values for the dual problem  \eqref{eq:dual_generic2_whitened_vector} using \eqref{eq:multiclass_dualparam}
\begin{align}\label{eq:dual_obj_multiclass2} \nonumber
   D&= - \frac{1}{2}\|\dualmat^*-\vec{Y}\|_F^2+\frac{1}{2} \| \vec{Y}\|_F^2 \nonumber\\
   &=- \frac{1}{2} \sum_{j\in \mathcal{E}} (\beta-\|\vec{y}_j\|_2)^2+\frac{1}{2}\sum_{j=1}^o \|\vec{y}_j\|_2^2 \nonumber\\
   &=-\frac{1}{2} \beta^2 |\mathcal{E}|+\beta\sum_{j \in \mathcal{E}}\|\vec{y}_j\|_2 +\frac{1}{2}\sum_{j \notin \mathcal{E}} \|\vec{y}_j\|_2^2.
\end{align}
We next compute the objective value for the primal problem \eqref{eq:primal_generic2_whitened_vector} (after applying the rescaling in Lemma \ref{lemma:scaling_deep_vector}) using the weights in \eqref{eq:multiclass_extremepoints} and \eqref{eq:multiclass_extremepoints2} as follows
\begin{align*}
    P&= \frac{1}{2}\|f_{\theta,L}(\data)-\vec{Y}\|_F^2+\frac{\beta}{2}\sum_{j=1}^{m} \|\weight_{L,j}\|_2 \nonumber\\
    &=\frac{1}{2} \left\|\sum_{j\in \mathcal{E}}\left(\|\vec{y}_j\|_2-\beta\right)\frac{\vec{y}_j}{\|\vec{y}_j\|_2}\vec{e}_j^T-\vec{Y} \right\|_F^2+\beta \sum_{j\in \mathcal{E}} (\|\vec{y}_j\|_2 -\beta)\nonumber\\
     &=\frac{1}{2}\sum_{j\in \mathcal{E}} \left\|\beta\frac{\vec{y}_j}{\|\vec{y}_j\|_2}\vec{e}_j^T\ \right\|_F^2+\frac{1}{2}\sum_{j\notin \mathcal{E}} \|\vec{y}_j\vec{e}_j^T\|_F^2+\beta \sum_{j\in \mathcal{E}} \|\vec{y}_j\|_2 -\beta^2 |\mathcal{E}|\nonumber\\
      &=-\frac{1}{2}\beta^2 |\mathcal{E}|+\frac{1}{2}\sum_{j\notin \mathcal{E}} \|\vec{y}_j\|_2^2+\beta \sum_{j\in \mathcal{E}} \|\vec{y}_j\|_2 ,
\end{align*}
which has the same value with \eqref{eq:dual_obj_multiclass2}. Therefore, strong duality holds, i.e., $P^*=D^*$, and the set of weights proposed in \eqref{eq:multiclass_extremepoints} and \eqref{eq:multiclass_extremepoints2} is optimal.
\end{proof}
\theodeeprelubatchnormclosedform*

 \begin{proof}[\textbf{Proof of Theorem \ref{theo:deep_vector_closedform}}]
We first state the primal problem after applying the scaling between $\weight_L$ and $(\bnvarvecl{L-1},\bnmeanvecl{L-1})$ as in Lemma \ref{lemma:scaling_deep_vector}
\begin{align}\label{eq:deep_vector_primal}
    &P^*=\min_{\theta \in \Theta_s} \frac{1}{2} \normf{ \sum_{j=1}^{m}\relu{\bn{\vec{A}_{L-2,j} \weight_{L-1,j}}}{\weight_{L,j}}^T- \vec{Y} }^2 + \beta \sum_{j=1}^{m}\norm{\weight_{L,j}},
\end{align}
where $\Theta_s=\{\theta \in \Theta: {\bnvarl{L-1}_j}^2+{\bnmeanl{L-1}_j}^2=1,\, \forall j \in [m]\}$ and the corresponding dual is
\begin{align}\label{eq:deep_dual}
    P^*\geq D^*=&\max_{\dualmat}   -\frac{1}{2}\|\dualmat-\vec{Y}\|_F^2+\frac{1}{2}\|\vec{Y}\|_F^2  \text{ s.t. } \max_{\theta \in \Theta_s } \left \vert \dualmat^T\relu{\bn{\vec{A}_{L-2,j} \weight_{L-1,j}}}\right \vert \leq \beta .
\end{align}
We now show that the following set of solutions for the primal and dual problem achieves strong duality, i.e., $P^*=D^*$, therefore, optimal.
\begin{align*}
   \begin{split}
       &\left(\weight_{L-1,j}^* ,\weight_{L,j}^*\right)= \begin{cases}
   \left(\vec{A}_{L-2,j}^\dagger \vec{y}_j, \left(\|\vec{y}_j\|_2-\beta\right)\vec{e}_j\right) &\text{ if } \beta \leq \|\vec{y}_j\|_2\\
    \hfil - &\text{ otherwise } 
    \end{cases}  \\
      &\begin{bmatrix} {\bnvarl{L-1}_j}^* \\{ \bnmeanl{L-1}_j}^*\end{bmatrix} =\frac{1}{\|\vec{y}_j\|_2}\begin{bmatrix} \|\vec{y}_j-\frac{1}{n}\vec{1}_{n \times n} \vec{y}_j\|_2\\ \frac{1}{{\sqrt{n}} }\vec{1}_n^T\vec{y}_j \end{bmatrix}    
  \\
    & \dual_j^*= \begin{cases}
    \beta \frac{\vec{y}_j}{\|\vec{y}_j\|_2} &\text{ if } \beta \leq \|\vec{y}_j\|_2\\\
    \hfil\vec{y}_j  &\text{ otherwise } 
    \end{cases}
     \end{split}, \quad \forall j \in [K].
\end{align*}
Now let us first denote the set of indices that achieves the extreme point of the dual constraint as $\mathcal{E}=\{j:\beta \leq \|\vec{y}_j\|_2, j \in [K]\}$. Then the dual objective in \eqref{eq:deep_dual} using the optimal dual parameter above
\begin{align}\label{eq:dual_optimal}\nonumber 
   D_{L}^*&= - \frac{1}{2}\|\dualmat^*-\vec{Y}\|_F^2+\frac{1}{2} \| \vec{Y}\|_F^2 \nonumber\\
   &=- \frac{1}{2} \sum_{j\in \mathcal{E}} (\beta-\|\vec{y}_j\|_2)^2+\frac{1}{2}\sum_{j=1}^K \|\vec{y}_j\|_2^2 \nonumber\\
   &=-\frac{1}{2} \beta^2 |\mathcal{E}|+\beta\sum_{j \in \mathcal{E}}\|\vec{y}_j\|_2 +\frac{1}{2}\sum_{j \notin \mathcal{E}} \|\vec{y}_j\|_2^2.
\end{align}
We next restate the scaled primal problem
\begin{align}\label{eq:primal_optimal}
    P_{L}^*&=\frac{1}{2} \normf{ \sum_{j=1}^{K}\relu{ \frac{(\vec{I}_{n}-\frac{1}{n}\vec{1}_{n \times n} ) \vec{A}_{L-2,j}\weight_{L-1,j}^*}{\|(\vec{I}_{n}-\frac{1}{n}\vec{1}_{n \times n} ) \vec{A}_{L-2,j}\weight_{L-1,j}^*\|_2}{\bnvarl{1}_j}^* +\frac{\vec{1}}{\sqrt{n}}{\bnmeanl{1}_j}^* }\weight_{L,j}^{*^T}- \vec{Y}}^2\nonumber + \beta \sum_{j=1}^{K}\norm{\weight_{L,j}^*}  \nonumber\\
    &=\frac{1}{2} \left\|\sum_{j\in \mathcal{E}}\left(\|\vec{y}_j\|_2-\beta\right)\frac{\vec{y}_j}{\|\vec{y}_j\|_2}\vec{e}_j^T-\vec{Y} \right\|_F^2+\beta \sum_{j\in \mathcal{E}} (\|\vec{y}_j\|_2 -\beta)\nonumber\\
     &=\frac{1}{2}\sum_{j\in \mathcal{E}} \left\|\beta\frac{\vec{y}_j}{\|\vec{y}_j\|_2}\vec{e}_j^T\ \right\|_F^2+\frac{1}{2}\sum_{j\notin \mathcal{E}} \|\vec{y}_j\vec{e}_j^T\|_F^2+\beta \sum_{j\in \mathcal{E}} \|\vec{y}_j\|_2 -\beta^2 |\mathcal{E}|\nonumber\\
      &=-\frac{1}{2}\beta^2 |\mathcal{E}|+\frac{1}{2}\sum_{j\notin \mathcal{E}} \|\vec{y}_j\|_2^2+\beta \sum_{j\in \mathcal{E}} \|\vec{y}_j\|_2 ,
\end{align}
which is the same with \eqref{eq:dual_optimal}. Therefore, strong duality holds, i.e., $P^*=D^*$, and the proposed set of weights is optimal for the primal problem \eqref{eq:deep_vector_primal}. 

\end{proof}

\corneuralcollapse*

 \begin{proof}[\textbf{Proof of Corollary \ref{cor:neural_collapse} }]
 We first restate a crucial assumptions in \cite{papyan2020neuralcollapse}.
\bass \label{ass:balanced_class}
The training dataset has balanced class distribution. Therefore, if we denote the number of data samples as $n$, then we have $\frac{n}{K}$ samples for each class $j \in [K]$.
\eass
Due to Assumption \ref{ass:balanced_class} and one-hot encoded labels, we have $\norm{\vec{y}_1}=\norm{\vec{y}_2}= \ldots =\norm{\vec{y}_K}=\sqrt{\frac{n}{K}}$. Now, we assume that $\sqrt{\frac{n}{K}}>\beta$ since otherwise none of the neurons will be optimal as proven in Theorem \ref{theo:deep_vector_closedform}. We also remark that $\sqrt{\frac{n}{K}}>1 \gg \beta$ in practice so that this assumption is trivially satisfied for practical scenarios considered in \cite{papyan2020neuralcollapse}. Therefore, the weights in Theorem \ref{theo:deep_vector_closedform} imply that
\begin{align*}
  \vec{A}_{(L-1),j}&=\relu{ \frac{(\vec{I}_n-\frac{1}{n}\vec{1}_{n \times n})\vec{A}_{L-2,j}\weight_{(L-1),j}^*}{\norm{(\vec{I}_n-\frac{1}{n}\vec{1}_{n \times n})\vec{A}_{L-2,j}\weight_{(L-1),j}^*}}{\bnvarl{L-1}}^*+\frac{\vec{1}_n}{\sqrt{n}}{\bnmeanl{L-1}}^*}\\
  &= \relu{ \frac{(\vec{I}_n-\frac{1}{n}\vec{1}_{n \times n})\vec{y}_j}{\norm{\vec{y}_j}}+\frac{\vec{1}_{n \times n} \vec{y}_j}{n\norm{\vec{y}_j}}}\\
  &= \frac{\vec{y}_j}{\norm{\vec{y}_j}}\\
    &= \frac{\sqrt{K}\vec{y}_j}{\sqrt{n}},
\end{align*}
where $\vec{A}_{(L-1),j}$ denotes the $j^{th}$ column of the last hidden layer activations after BN and the last equality follows from Assumptions \ref{ass:balanced_class}. We then subtract mean from $\vec{A}_{L-1}$ as follows
\begin{align*}
    \left(\vec{I}_n-\frac{1}{n}\vec{1}_{n \times n} \right)\vec{A}_{L-1}= \left(\vec{I}_n-\frac{1}{n}\vec{1}_{n \times n} \right)\frac{\sqrt{K}}{\sqrt{n}} \vec{Y}&=\frac{\sqrt{K}}{\sqrt{n}} \begin{bmatrix} 1-\frac{1}{K} & -\frac{1}{K}& -\frac{1}{K}& \ldots &-\frac{1}{K} \\1-\frac{1}{K} & -\frac{1}{K}& -\frac{1}{K}& \ldots &-\frac{1}{K} \\ \vdots & &\cdots \\ -\frac{1}{K} & 1-\frac{1}{K}& -\frac{1}{K}& \ldots &-\frac{1}{K} \\ -\frac{1}{K} & 1-\frac{1}{K}& -\frac{1}{K}& \ldots &-\frac{1}{K} \\ \vdots & &\ldots\end{bmatrix}\\
    &= \frac{\sqrt{K}}{\sqrt{n}}\left(\vec{I}_K \otimes \vec{1}_{\frac{n}{K}}-\frac{1}{K}\vec{1}_{n \times K} \right),
\end{align*}
where we assume that samples are ordered, i.e., the first $n/K$ samples belong to class 1, next $n/K$ samples belong to class 2 and so on. Therefore, all the activations for a certain class $k$ are the same and their mean is given by 
$$  \frac{\sqrt{K}}{\sqrt{n}}\bigg[-\frac{1}{K}\; \ldots\;  \underbrace{1-\frac{1}{K}}_{k^{th}\text{ entry}} \;\ldots \; -\frac{1}{K} \bigg]= \frac{\sqrt{K}}{\sqrt{n}}\left(\vec{e}_k^T- \frac{1}{K}\vec{1}_K^T \right),$$
which is the $k^{th}$ column of a general simplex ETF with $\alpha=\sqrt{(K-1)/n}$ and $\vec{U}=\vec{I}_K$ in Definition \ref{def:simplex_etf}. Hence, our analysis in Theory \ref{theo:deep_vector_closedform} completely explains why the patterns claimed in \cite{papyan2020neuralcollapse} emerge throughout the training of the state-of-the-art architectures. We also remark that even though we use squared loss for the derivations, this analysis directly applies to the other convex loss functions including cross entropy and hinge loss as proven in Appendix \ref{sec:supp_general_loss}.

\end{proof}
\end{document}